\newcommand{\RNum}[1]{\uppercase\expandafter{\romannumeral #1\relax}}
\newcommand{\N}{\mathcal{N}}
\newcommand\restr[2]{{
  \left.\kern-\nulldelimiterspace 
  #1 
  \vphantom{\big|} 
  \right|_{#2} 
  }}
\DeclareMathOperator*{\argmin}{\arg\!\min}
\DeclareMathOperator*{\argmax}{\arg\!\max}
\newcommand{\cut}[1]{}
\newcommand{\removelatexerror}{\let\@latex@error\@gobble}
\def\1{\bm{1}}
\DeclareMathAlphabet{\mathsfit}{\encodingdefault}{\sfdefault}{m}{sl}
\SetMathAlphabet{\mathsfit}{bold}{\encodingdefault}{\sfdefault}{bx}{n}
\theoremstyle{definition}
\newtheorem{definition}{Definition}[section]
\newcommand{\CBP}{{\fontfamily{qcr}\selectfont{CBP}}}
\newcommand{\RandCBP}{{\fontfamily{qcr}\selectfont{RandCBP}}}
\newcommand{\TSPM}{{\fontfamily{qcr}\selectfont{TSPM}}}
\newcommand{\PMDMED}{{\fontfamily{qcr}\selectfont{PMDMED}}}
\newcommand{\BPMLeast}{{\fontfamily{qcr}\selectfont{BPM-Least}}}
\newcommand{\CBPside}{{\fontfamily{qcr}\selectfont{CBPside}}}
\newcommand{\CBPsidestar}{{\fontfamily{qcr}\selectfont{CBPside$^\star$}}}
\newcommand{\RandCBPside}{{\fontfamily{qcr}\selectfont{RandCBPside}}}
\newcommand{\RandCBPsidestar}{{\fontfamily{qcr}\selectfont{RandCBPside$^\star$}}}
\newcommand{\PGIDS}{{\fontfamily{qcr}\selectfont{PG-IDS}}}
\newcommand{\PGTS}{{\fontfamily{qcr}\selectfont{PG-TS}}}
\newcommand{\TSPMgaussian}{{\fontfamily{qcr}\selectfont{TSPM-Gaussian}}}
\newcommand{\STAP}{{\fontfamily{qcr}\selectfont{STAP}}}
\newcommand{\CESA}{{\fontfamily{qcr}\selectfont{CESA}}}
\newcommand{\IDSFW}{{\fontfamily{qcr}\selectfont{IDS-FW}}}
\newcommand{\explorecommit}{{\fontfamily{qcr}\selectfont{Explore-fully}}}
\newcommand{\crandcbp}{{\fontfamily{qcr}\selectfont{C-RandCBP}}}
\newcommand{\ccbp}{{\fontfamily{qcr}\selectfont{C-CBP}}}
\newtheorem{theorem}{Theorem}[section]
\newtheorem{lemma}[theorem]{Lemma}
\newtheorem{corollary}[theorem]{Corollary}
\newtheorem{remark}[theorem]{Remark}
\icmltitlerunning{Randomized Confidence Bounds for Stochastic Partial Monitoring}
\begin{document}

\twocolumn[
\icmltitle{Randomized Confidence Bounds for Stochastic Partial Monitoring}

\begin{icmlauthorlist}
\icmlauthor{Maxime Heuillet}{ul,comp,mila,iid}
\icmlauthor{Ola Ahmad}{ul,comp}
\icmlauthor{Audrey Durand}{ul,mila,iid,cifar}
\end{icmlauthorlist}

\icmlaffiliation{cifar}{Canada CIFAR AI Chair}
\icmlaffiliation{ul}{Université Laval}
\icmlaffiliation{mila}{Mila - Québec AI Institute}
\icmlaffiliation{comp}{Thales Digital Solutions (cortAIx lab)}
\icmlaffiliation{iid}{Institut Intelligence et Données}

\icmlcorrespondingauthor{Maxime Heuillet}{maxime.heuillet.1@ulaval.ca}


\vskip 0.3in
]

\printAffiliationsAndNotice{\icmlEqualContribution} 

\begin{abstract}
The partial monitoring (PM) framework provides a theoretical formulation of sequential learning problems with incomplete feedback. 
On each round, a learning agent plays an action while the environment simultaneously chooses an outcome. 
The agent then observes a feedback signal that is only partially informative about the (unobserved) outcome. 
The agent leverages the received feedback signals to select actions that minimize the (unobserved) cumulative loss. 
In contextual PM, the outcomes depend on some side information that is observable by the agent before selecting the action on each round.
In this paper, we consider the contextual and non-contextual PM settings with stochastic outcomes.
We introduce a new class of strategies based on the randomization of deterministic confidence bounds, that extend regret guarantees to settings where existing stochastic strategies are not applicable.
Our experiments show that the proposed \RandCBP{} and \RandCBPsidestar{} strategies improve state-of-the-art baselines in PM games.
To encourage the adoption of the PM framework, we design a use case on the real-world problem of monitoring the error rate of any deployed classification system.
\end{abstract}

\section{Introduction}

Partial monitoring~\cite{bartok2014partial} is a framework tailored for online learning problems with partially informative feedback. 
A partial monitoring (PM) game is played between a learning agent and the environment over multiple rounds. 
At a given round, the agent selects an \textit{action} and the environment simultaneously selects an \textit{outcome}.
The agent then incurs an instant \textit{loss} and receives a \textit{feedback} signal that is partially informative about the outcome. 
The challenge is that the agent does not observe the loss. 
Nonetheless, its goal is to minimize the (unobserved) cumulative loss by carefully balancing between actions associated to informative feedback signals and small-loss actions, which captures the core exploration-exploitation trade-off.

The agent's performance is measured by the \textit{regret}, which corresponds to the excess loss associated with the selected action compared to the best action in hindsight. 
The cumulative regret scales linearly with the horizon $T$ if the agent fails to identify the best action.
In this work, we consider the \textit{stochastic setting} where outcomes are independent and identically distributed (i.i.d.) according to some (unknown) outcome distribution.
In this setting, \citet{bartok2011minimax} classified PM games into four categories based on achievable bounds on the cumulative regret: trivial games (no regret); \textit{easy} games with poly-logarithmic upper bounds in
$\tilde \Theta(\sqrt{T})$ ; \textit{hard} games with upper bounds in $\Theta(T^{2/3})$; and intractable games with lower bounds in $\Omega(T)$.
The well-known multi-armed bandit problem~\citep{auer2002nonstochastic} corresponds to an easy game. 
Additionally, many problems correspond to hard games, such as learning from costly expert advice~\citep{helmbold1997some}, dynamic pricing~\citep{kleinberg2003value}, and online monitoring~\citep{ginart2022mldemon}.

Deterministic PM strategies such as \CBP{}~\citep{bartokICML2012} and \PMDMED{}~\citep{komiyama2015regret} have sub-linear regret guarantees on both easy and hard games. However, these are consistently outperformed empirically by stochastic strategies like \BPMLeast{}~\citep{vanchinathan2014efficient} and \TSPM{}~\citep{tsuchiya2020analysis}, for which regret guarantees are unfortunately limited to easy games.

The \textit{contextual} PM setting is an extension where the outcome distribution is a function of some \textit{side information} (a \textit{context}) observed by the agent before selecting the action on each round.
Existing contextual PM strategies are fairly restrictive. The deterministic \CBPside{}~\citep{bartok2012CBPside}, a contextual extension of \CBP{}, is not applicable to hard games, which capture a valuable diversity of applications (see examples above). 
On the other hand, the stochastic \IDSFW{}~\citep{kirschner2023linear} has regret guarantees on both easy and hard games at the price of several drawbacks: it scales quadratically with the number of rounds; and it requires the set of contexts to be finite and known in advance, a restriction that often does not hold in practice.

The primary aim of this study is to bridge the gap between the theoretical regret guarantees of \CBP{}-based strategies and their empirical performance, which is dominated by stochastic approaches.
\citet{kveton2019garbage} and \citet{vaswani2019old} show that the confidence bounds used in ``optimistic in the face of uncertainty'' (OFU) strategies can be randomized to improve empirical performance, while maintaining theoretical guarantees. 
We therefore raise the following question:
\emph{\textbf{Can the randomization of confidence bounds also benefit non OFU-based strategies?}} 

\vspace{-1em}
\paragraph{Contributions} 
1) We focus on \CBP{}-based strategies in the PM setting, which instantiate a successive elimination exploration strategy. We show that it is possible to randomize \CBP{}-based strategies, and obtain sub-linear regret guarantees for the resulting randomized strategies.
2) In addition, we investigate the mechanics preventing the \CBPside{} applicability in hard games. The proposed \CBPsidestar{} successfully extends \CBP{} to hard contextual PM games. 
3) Our experiments show that the proposed randomized variants, namely \RandCBP{} and \RandCBPsidestar{}, improve state-of-the-art baselines in hard and easy PM games.
4) Currently, the PM field is predominantly theoretical and there is a notable scarcity of PM applications~\citep{singla2014contextual, kirschner2023linear}. To illustrate how the PM framework can benefit real world applications, we design a new use-case based on the real-world application of monitoring the error rate of any deployed classification system. 

\vspace{-0.5em}
\section{Preliminaries on Partial Monitoring}
\label{sec:background}

We consider finite PM games defined by $N$ actions available to the agent and $M$ outcomes available to the environment.
A game is characterized by a loss matrix $\textbf{L} \in [0,1]^{N \times M} $ and a feedback matrix $ \textbf{H} \in \Sigma^{N \times M}$. 
The feedback space $\Sigma$ is finite, arbitrary, and not necessarily numeric (e.g., it could be symbols). For convenience, we assume that the difference between greatest and lowest elements in the loss matrix is bounded by $1$, i.e. $\max( \textbf{L} ) - \min( \textbf{L} ) \leq 1$.

\vspace{-0.5em}
\subsection{Finite stochastic partial monitoring games}

A finite PM game is played over $T$ rounds between a learning agent and the environment.
The horizon $T$ is unknown to both the agent and the environment.
The matrices $\textbf{L}$ and $\textbf{H}$ are known. 
At each round $t=1, 2, \dots, T$, the environment samples an outcome $J_t$ from a distribution $p^\star \in \Delta_M$, where $\Delta_M$ is the probability simplex of dimension $M$ (column vector).
We refer to $p^\star$ as the \textit{outcome distribution} and
the outcomes are independent and identically distributed (i.i.d.) according to $p^\star$.
The agent then plays an action $I_t$. 
Following this, the agent observes a feedback $\textbf{H}[I_t, J_t]$ and incurs a deterministic loss $\textbf{L}[I_t, J_t]$, where $[i,j]$ denotes the element at row $i$ and column $j$.
We emphasize that the loss and the outcome are never revealed to the agent.

\paragraph{Non-contextual setting} 

The expected loss of action $i$ is noted $\ell_i =  L_i p^\star $, where the notation $L_i$ corresponds to the $i$-th row of matrix $\textbf{L}$.
The optimal action is given by $i^\star = \argmin_{1 \leq i \leq N} \ell_i$
and the performance of the agent can be evaluated using the cumulative regret (to minimize):
\vspace{-1em}
\begin{equation}
    R(T) = \sum_{t=1}^{T} (L_{I_t} - L_{i^\star}) p^\star .
\label{eq:noncontextual_regret}
\vspace{-0.5em}
\end{equation}

\paragraph{Contextual setting}
In the non-contextual setting, the optimal action does not depend on side information (context).
In the contextual setting~\citep{bartok2012CBPside}, also known as PM with side information, the optimal action depends on side information (context).
Let $p^\star(x)$ denote the outcome distribution as a function of a context $x \in \mathcal X$, with $\mathcal X$ denoting the unknown and possibly continuous context space. 
The optimal action in context $x$ minimizes the expected loss in that context: $i^\star(x) = \argmin_{1 \leq i \leq N} L_i  p^\star(x) $.
The agent aims to minimize the cumulative contextual regret:
\vspace{-0.5em}
\begin{equation}
R(T) = \sum_{t=1}^T ( L_{I_t}  - L_{i^\star(x_t) } ) p^\star(x_t) ,
\label{eq:contextual_regret}
\end{equation}
where $L_{ i^\star(x_t) }$ is the loss vector of the optimal action in $x_t$.

\vspace{-0.5em}
\paragraph{Other relevant settings}
We focus on stochastic settings where the outcome distribution is fixed over time, which differs from \citet{piccolboni2001discrete,cesa2006prediction,lattimore2019exploration,lattimore2022rustichini,tsuchiya2022bestofbothwords} who assume the outcome distribution may change over time. We also assume finite actions and feedback spaces, unlike
\citet{kirschner2020information} who focus on settings with continuous action and feedback spaces. Finally, we consider a contextual setting where the context space $\mathcal X$ is unknown and can be continuous, whereas
\citet{kirschner2023linear} assume that $\mathcal X$ is finite and known in advance. 

\vspace{-0.5em}
\subsection{Structure of partial monitoring games}

The optimality and informativeness of actions are respectively defined by loss matrix $\textbf{L}$ and feedback matrix $\textbf{H}$.
\begin{definition}[Cell decomposition, \citet{bartokICML2012}]
    The \textit{cell} $\mathcal C_i $ of action $i$ is defined as the subspace in the probability simplex $\Delta_M$  where action $i$ is optimal. Formally, 
    $\mathcal C_i =  \{ p \in \Delta_M, j \in \{1,...,N\}, (L_i-L_j) p \rangle \leq 0 \}.$
    \label{def:cell}
\end{definition}

Based on the cell, one can tell that an action $i$ is: (i) \textit{dominated} if $\mathcal C_i = \emptyset$ (i.e. there is no outcome distribution s.t. the action would be optimal); (ii) \textit{degenerate} if the action is not dominated and there exist action $i'$ such that $\mathcal C_i \subsetneq \mathcal C_{i'}$ (i.e. actions $i$ and $i'$ are duplicates, and therefore, both are jointly optimal under some outcome distributions); (iii) \textit{Pareto-optimal} if the action is neither dominated nor degenerate. The set of Pareto-optimal actions is denoted $\mathcal P$. 

Let $\sigma_i$ denote the number of unique feedback symbols on row $i$ of $\textbf{H}$.
Let $s_1, ..., s_{\sigma_i} \in \Sigma$ be an enumeration of the unique feedback symbols induced by action $i$ (i.e. symbols in row $H_i$), sorted by order of appearance (columns) in $H_i$. 
\begin{definition}[Signal matrix, \citet{bartokICML2012}]
    Given action $i$, the elements of \textit{signal matrix} $S_i \in \{0,1\}^{\sigma_i \times M}$ are defined as $S_i[u,v] = \mathds{1}_{ \{ \textbf{H}[i,v] = s_{u}  \} } $.
\end{definition}
The signal matrix $S_i$ is binary and it can be thought as a one-hot encoding of the unique feedback symbols induced by action $i$.  

The signal matrices verify the important relation $\pi^\star_i = S_i\, p^\star  \in \Delta_{\sigma_i}$, where $\pi^\star_i$ (respectively $\pi^\star_i(x)$ in the contextual setting) is the distribution over the feedback symbols induced by action $i$.
\vspace{-0.5em}
\paragraph{Difference between easy and hard games} A PM game is \textit{easy} if it suffices to play Pareto-optimal actions to minimize the regret. In hard games, minimizing the regret requires to play all actions in the game (including dominated and degenerate actions). Formal definitions of easy and hard games are provided in Appendix \ref{appendix:games}.

\section{Towards a randomized CBP}

\CBP{}~\citep{bartokICML2012} currently stands out as the only strategy offering regret guarantees in both easy and hard games for non-contextual PM, and a practical extension in the contextual setting.
In terms of empirical performance, \CBP{} is outperformed by stochastic PM strategies. 
Similar limitations have been identified in the bandits setting for deterministic Upper Confidence Bound (UCB) strategies~\citep{chapelle2011empirical}. 
Randomizing UCB-based strategies has proven to be helpful~\citep{vaswani2019old, kveton2019garbage} for improving empirical performance while preserving the theoretical analysis.
Here, we extend these ideas to the class of successive elimination strategies~\citep{even2002pac} to which CBP belongs.
Algorithm~\ref{alg:RandCBP} jointly displays the pseudo-codes of \CBP{} and the proposed \RandCBP{}. Differences are highlighted in \textcolor{purple}{purple}. 
Implementation details are reported in Appendix~\ref{appendix:details}.

\begin{algorithm}[t]
\label{CBP}
\SetKwInOut{Input}{input}
\Input{ $\mathcal P, \mathcal N, \alpha, \eta_a, f(\cdot), \color{purple}{K, \sigma, \varepsilon}  $ }
\DontPrintSemicolon
\caption{\CBP{} \citep{bartokICML2012} and \texttt{\textcolor{purple}{RandCPB}} }
\label{alg:RandCBP}

$\#$ Notation $e(\cdot)$ is a $\sigma_{I_t}$-dimensional one-hot encoding. \;

\For{$t = 1, 2, \dots, N$} { 
     Play action $I_t = t$ (play each action once)   \; 
     
     Observe feedback $\textbf{H}[I_t, J_t]$  \; 
     
     Init $n_{I_t}(N) = 1$, $\nu_{I_t}(N) = e(\textbf{H}[I_t, J_t])$
}
\For{$t > N$} {
    Initialize $\mathcal U(t) \gets \{\}$
    
    \For{ each neighbor pair $\{i,j\} \in \mathcal N$} {
    
     $\hat \delta_{ij}(t) \gets \sum_{a \in V_{ij}} v_{ija}^\top \frac{\nu_a(t-1) }{n_a(t-1) } $  \;
     
    \color{purple}{ $B \gets  \sqrt{  \alpha log(t)   } $ } \;
    
    \textcolor{purple}{Sample $Z_{ijt}$ with Algorithm \ref{alg:randomization_procedure} (Appendix \ref{appendix:pseudo_code})}  \;
    
    $\color{purple}{ c'_{ij}(t) \gets  \sum_{a \in V_{ij}} \| v_{ija} \|_{\infty} Z_{ijt} \sqrt{  \frac{ 1 }{ n_a } } }$ \;
    
    \uIf{$| \hat \delta_{ij}(t) | >$ \st{ $c_{ij}(t)$ }  $\color{purple}{ c'_{ij}(t) } $ }{
         Add $\{i,j\}$ to $\mathcal U(t)$ \; }
         
    }
    Compute $D(t)$ based on $\mathcal U(t)$ \;
    
    Get $\mathcal P(t)$ and $\mathcal N(t)$ given $\mathcal P$, $\mathcal N$ and $D(t)$ \;
    
    $\mathcal N^{+}(t) \gets \bigcup_{ {ij} \in \mathcal N(t) }  N^{+}_{ij} $ \;
    
    $\mathcal V(t) \gets \bigcup_{ {ij} \in \mathcal N(t) } V_{ij} $ \;
    
    $\mathcal R(t) \gets \{ a \in N : n_a(t-1) \leq \eta_a f(t) \}$   \;
    
    $\mathcal S(t) \gets \mathcal P(t) \cup \mathcal N^{+}(t) \cup ( \mathcal V(t)  \cap \mathcal R(t)  ) $ \;
    
    Select action $I_t = \argmax_{a\in \mathcal S(t)} \frac{W_{a}^2}{n_a(t-1)}$ \;
    
    Observe feedback $\textbf{H}[I_t, J_t]$  \; 
    
    $n_i(t) \leftarrow n_i(t-1) + \mathbb I[i = I_t], \forall i$ \; 
    
    $\nu_i(t) \leftarrow \nu_i(t-1) + \mathbb I[i = I_t] e(\boldsymbol H[I_t, J_t]), \forall i$  \;  } 
    
\end{algorithm}

\subsection{The \CBP{} strategy}
\label{sec:CBP}
 
Recall that the unknown parameter of the game is the outcome distribution $p^\star \in \Delta_M$.
The expected loss difference between two actions $i$ and $j$ is defined as
\begin{align}
    \delta_{i,j} =& (L_i - L_j) p^\star  \label{eq:loss_diff_p_star} = \ell_i - \ell_j.
\end{align}
The sign of the expected loss indicates which action is better: action $j$ is better than action $i$ when  $\delta_{i,j}>0$.

\begin{definition}[Neighbor pairs, \citet{bartokICML2012}]
Two Pareto-optimal actions $i$ and $j$ are \textit{neighbors} if $\mathcal C_i \cap \mathcal C_j$ is an ($M-2$)-dimensional polytope. 
The set of all neighbor pairs is denoted $\mathcal N$.  
\label{def:neighbor}
\end{definition}
\vspace{-0.5em}
Two actions are neighbors when they cannot be jointly optimal under a given outcome distribution. 
\CBP{} exploits that it suffices to compute $\delta_{i,j}$ for the pairs in $\mathcal N$, instead of computing $\delta_{i,j}$ for all the action pairs in the game.

\vspace{-0.5em}
\paragraph{Successive elimination confidence bounds }
\CBP{} computes expected loss difference estimates for all action pairs in $\mathcal N$.
Pairs with low confidence estimates are then eliminated based on a successive elimination~\cite{even2002pac} criteria. Consider Eq. \ref{eq:loss_diff_p_star}: any estimate $\hat \ell_i(t)$ of the expected loss of action $i$ at round $t$ admits an upper confidence bound, denoted $\text{UCB}_i(t) = \hat \ell_i(t) + c_i(t)$ and a lower confidence bound, denoted $\text{LCB}_i(t) = \hat \ell_i(t) - c_i(t)$, where $c_i(t)$ is a confidence width that holds with some probability. 
One can tell with confidence that action $j$ has a lower expected loss (i.e. is confidently better) than action $i$ if $\text{UCB}_j(t)$ is strictly lower than $\text{LCB}_i(t)$:
\begin{align}
    \text{UCB}_j(t) < \text{LCB}_i(t) & \Leftrightarrow \hat \ell_j(t) + c_j(t) < \hat \ell_i(t) - c_i(t) \nonumber \\
                                    & \Leftrightarrow |\hat \delta_{ij}(t)| > c_{i,j}(t),
                                    \label{eqn:ucb_lcb}
\end{align}
where $\hat \delta_{i,j}(t) = \hat \ell_i(t) - \hat \ell_j(t)$ and $c_{i,j}(t) = c_i(t) + c_j(t)$.

At each round, action pairs $\{i,j\}$ that do not satisfy the elimination criteria of Eq. \ref{eqn:ucb_lcb} correspond to low confidence estimates that are eliminated by \CBP{}.
Action pairs that satisfy the criteria are added to a set of high confidence pairs, denoted $\mathcal U(t)$.
The set $\mathcal U(t)$ is then used to compute a sub-space of the probability simplex, denoted $D(t)$, that summarizes the current knowledge of \CBP{} about the outcome distribution $p^\star$:
$  D(t) = \{ p \in \Delta_M, (i,j) \in \mathcal U(t), 
        \texttt{sign}(\hat \delta_{i,j}) (L_i-L_j) p > 0 \}. $
The sub-space $D(t)$ gathers constraints based on signs of confident estimates, which inform on the relative quality of actions.

Unfortunately, one cannot empirically estimate the losses $\ell_i$ and $\ell_j$ to compute $\hat \delta_{i,j}(t)$ and $c_{i,j}(t)$. Indeed, that would require to estimate the outcome distribution $p^\star$, but the agent never observes the outcomes. To address this challenge, \CBP{} exploits a connection between the outcome and feedback distributions.

\vspace{-0.5em}
\paragraph{ Connecting outcome and feedback distributions }
\label{par:feedback_dist}

Computing $\hat \delta_{i,j}$ and $c_{i,j}$ in practice requires two definitions.
\begin{definition}[Observer set, \citet{bartokICML2012}]
The set $V_{i,j}$ associated with action $i$ and $j$ contains the actions required to verify the relation
$(L_i - L_j)^\top \in \oplus_{a \in V_{i,j}} \text{Im} (S_a^\top$), where $\oplus$ corresponds to the direct sum.
\label{def:observer_set}
\end{definition}

\begin{definition}[Observer vectors, \citet{bartokICML2012}]
The observer vector of the action pair $\{i,j\}$ with respect to action $a$ in observer set $V_{i,j}$, denoted $v_{ija} \in \mathbb R^{\sigma_a}$, is selected to satisfy the relation $(L_i - L_j)^\top = \sum_{a \in V_{i,j} } S_a^\top v_{ija}$.
\label{def:observer_vector}
\end{definition}

The set $V_{i,j}$ identifies all the actions that induce informative feedback signals about a loss difference.
Actions in $V_{i,j}$ allow $L_i - L_j$ to be expressed as a linear combination of their corresponding signal matrix images, with observer vectors as coefficients.  

From Def. \ref{def:observer_set} and Def. \ref{def:observer_vector}, one can express the expected loss difference $\delta_{i,j}$ as a function of the feedback distributions $\pi_a^\star$ associated with every action $a$ in $V_{i,j}$:
\vspace*{-0.5em}
\begin{align*}
    \delta_{i,j} =  \langle L_i - L_j , p^\star \rangle 
                =  \sum_{a \in V_{i,j} } v_{ija}^\top S_a p^\star 
                =  \sum_{a \in V_{i,j} } v_{ija}^\top \pi_a^\star,\label{eq:feedback_dist} 
\vspace{-0.5em}
\end{align*}
where we used Eq.~\ref{eq:loss_diff_p_star} and $\pi^\star_a = S_a\, p^\star$ for action $a$. As a result, \CBP{} computes $\hat \delta_{i,j}(t)$ using the estimates $\hat \pi_a(t) = \frac{\nu_a(t)}{n_a(t)}$, where the vector $\nu_a(t) \in \mathbb N^{\sigma_a}$ contains the counts of observations for each unique feedback symbol observable with action $a$ up to time $t$ and $n_a(t)$ is the number of times that action $a$ was played up to time $t$. 
The confidence bound over the estimate $\hat \delta_{i,j}(t)$ is defined as~\citep{bartokICML2012}:
\vspace{-1em}
\begin{equation}
c_{i,j}(t) = \sum_{ a\in V_{i,j} } \| v_{ija} \|_{\infty} \sqrt{ \frac{\alpha \log(t)}{n_a(t)} }, \label{eq:real_CI}
\vspace{-1em}
\end{equation}

s.t. $\mathbb P[|\hat \delta_{ij}(t) - \delta_{ij}| \geq c_{ij}(t)] \leq 2 |V_{ij}| t^{1-2\alpha}$ where $\alpha>1$, and $|V_{ij}|$ is the size of the observer set $V_{ij}$.
Greater values of $\alpha$ inflate the width of the confidence bounds, inducing more elimination from the criteria in Eq. \ref{eqn:ucb_lcb}.

\vspace{-0.5em}
\paragraph{Exploration and exploitation in \CBP{}}
At round $t$, \CBP{} identifies plausible subsets of $\mathcal P$ and $\mathcal N$, denoted $\mathcal P(t)$ and $\mathcal N(t)$, based on the constrained probability space $D(t)$.
The set $\mathcal P(t)$ contains all Pareto-optimal actions $i\in \mathcal P$ whose cell $\mathcal C_i$ intersects with $D(t)$.
Similarly, the set $\mathcal N(t)$ contains all neighbor pairs $\{i, j\} \in \mathcal N$ whose cell intersection $\mathcal C_i \cap \mathcal C_j$ also intersects with $D(t)$.
When $\mathcal P(t)$ contains only one action, the set $\mathcal N(t)$ is automatically empty and therefore \CBP{} exploits.
When $\mathcal P(t)$ contains more than one action, $\mathcal N(t)$ is not empty and \CBP{} needs to explore. 
The following definitions characterize exploration:

\begin{definition}[Underplayed actions, \citet{bartokICML2012}]
The set $\mathcal R(t) = \{  a = 1, \dots, N: n_a(t)\leq \eta_a f(t) \}$ contains actions that are underplayed according to a play rate function $f(t)$ and a constant $\eta_a > 0$.
\label{def:rarely_sampled_actions}
\end{definition}

\begin{definition}[Neighbor action set, \citet{bartokICML2012}]
    The neighbor action set of a neighbor pair $\{i,j\}$ is defined as 
    $N^+_{i,j} = \{ k = 1, \dots, N: \mathcal C_i \cap \mathcal C_j \subseteq \mathcal C_k\}.$
    Note that $N^+_{i,j}$ naturally contains $i$ and $j$. If $N^+_{i,j}$ contains another action $k$, then $\mathcal C_k = \mathcal C_i$ or $\mathcal C_k = \mathcal C_j$ or $\mathcal C_k = \mathcal C_i \cap \mathcal C_j$.
\label{def:neighbor_action_set}
\end{definition}

Based on $\mathcal N(t)$ and Def. \ref{def:neighbor_action_set}, \CBP{} computes $\mathcal N^+(t) = \bigcup_{ i,j \in \mathcal N(t) } N^+_{i,j}$ for the neighbor pairs. Similarly, \CBP{} computes $\mathcal V(t) = \bigcup_{ i,j \in \mathcal N(t) } V_{i,j}$ for the observer actions.

The final set of actions considered by \CBP{}, denoted $\mathcal S(t)$, contains potentially optimal actions ($\mathcal P(t) \cup \mathcal N^+(t)$) and informative underplayed actions ($\mathcal V(t) \cup \mathcal R(t)$). 
\CBP{} selects the action with the smallest action count, i.e. $I_t = \argmax_{a \in \mathcal S(t)}\frac{W_{a}^2}{n_a(t)}$, weighted by $W_{a} = \max_{\{i,j\} \in \mathcal N} \| v_{ija} \|_{\infty}$. 

\subsection{Instantiating \RandCBP{} }
\label{subsec:randcbp}

We now introduce \RandCBP{}, a randomized counterpart of the \CBP{} strategy. 
The main idea behind \RandCBP{} is to replace deterministic confidence intervals (Eq.~\ref{eq:real_CI}) by randomized confidence intervals: 
\begin{equation*}
    c'_{i,j}(t) = \sum_{ a\in V_{i,j} } \| v_{ija} \|_{\infty}  \frac{Z_{ijt} }{\sqrt{n_a(t)}},
\vspace{-1em}
\end{equation*}

where $Z_{ijt}$ is sampled for each action pair $\{i,j\}$ from a discrete probability distribution supported over $K$ bins in the interval $[A, B]$.
Note that the \CBP{} strategy corresponds to the specific case of $K = 1$ and $A=B=\sqrt{\alpha \log(t)}$.

\vspace{-0.5em}
\paragraph{Randomization procedure} 
\label{par:rand_procedure} 
Let $\rho_1 = A,\dots,\rho_K = B$ denote $K$ equally spaced values, and $p_k$ denote the probability of sampling the value $\rho_k$, with $k=1,\dots,K$.
The probabilities assigned to the remaining $K-1$ points are shaped according to the positive side of a discretized Gaussian distribution centered at 0. Formally, for $k \leq K-1$, let $\bar p_k := \exp(-\rho_k^2 / 2 \sigma^2)$. 
Then, the $p_k$ corresponds to the normalized probabilities, that is, $p_k := (1 - \varepsilon) \bar p_k / (\sum_k \bar p_k)$.
The above distribution from which the $Z_{ijt}$ are sampled is a truncated (between $A$ and $B$) and discretized (into $K$ points) Gaussian distribution with tunable hyper-parameters $\varepsilon$, $\sigma > 0$, and $K$.
A pseudo-code of the randomization procedure is provided in Appendix \ref{appendix:details}. 

This randomization procedure was introduced by \citet{vaswani2019old} for randomizing Upper Confidence Bound (UCB) strategies in the bandit setting.
We generalize these ideas to the broader PM setting, where confidence bounds articulate a successive elimination criterion. This requires to define the randomized confidence bounds on quantities estimated for each action pair $\{i,j\}$ in $\mathcal N$. We will now show how this mechanism can be considered seamlessly in the theoretical analysis of \CBP{}, allowing to maintain the regret guarantees with \RandCBP{}. 

\vspace{-0.5em}
\paragraph{Regret analysis}
The analysis, follows the structure of \CBP's analysis \citep{bartok2011minimax}, and involves upper bounding the expected number of times the confidence bounds succeed and fail, as detailed in Appendix \ref{subsection:boundingA} and \ref{subsection:boundingB}.
Inspired by \citet{kveton2019garbage}, we leverage that the probability that the randomized SE criteria fails becomes negligible over time. 
For the success case, we adapt lemmas from \citet{bartokICML2012} by observing that the randomized bounds are always upper bounded by their deterministic counterparts. Detailed analysis is reported in Appendix \ref{appendix:regretRandCBP}. 

\begin{theorem}
Consider the interval $[A,B]$, with $B = \sqrt{\alpha \log(t) }$ and $A\leq0$. 
Set the randomization over $K$ bins with a probability $\epsilon$ on the tail and a standard deviation $\sigma$. 
Set $f(t) = \alpha^{1/3} t^{2/3} \log(t)^{1/3}$, $\eta_a = W_a^{2/3}$ and $\alpha>1$. 
On easy games, \RandCBP{} achieves
\begin{align*}
    \mathbb{E}[ R_T ] \leq N \left[ 2 (1 + \frac{1}{2\alpha-2})  | \mathcal V| + 1 \right] + \sum_{ k = 1 }^N \delta_k + \\ \sum_{ k = 1, \delta_k>0 }^N 4 W_k^2 \frac{g_k^2}{\delta_k}\alpha \log(T),
\end{align*}
with $\mathcal V = \bigcup_{i,j \in \mathcal N} V_{ij}$ and $g_k$ being game dependent constants.
On hard games, assuming positive constants $C_1$ and $C_2$, \RandCBP{} achieves
$$ \mathbb E[ R_T ] \leq C_1 N + C_2 T^{2/3} \log^{1/3}(T) .$$  
\label{thm:randcbp}
\vspace{-2.5em}
\end{theorem}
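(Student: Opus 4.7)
The plan is to follow the four-step skeleton used by \citet{bartokICML2012} to analyze \CBP{}, adapting each step to accommodate the randomized confidence widths $c'_{i,j}(t)$. First I would fix the good event $G_t = \{|\hat\delta_{i,j}(t) - \delta_{i,j}| \leq c'_{i,j}(t) \text{ for all } \{i,j\}\in \mathcal N\}$ and decompose the expected cumulative regret as
$$\mathbb E[R_T] = \sum_{t=1}^T \mathbb E\bigl[(L_{I_t} - L_{i^\star})p^\star\, \mathbf 1\{G_t\}\bigr] + \sum_{t=1}^T \mathbb E\bigl[(L_{I_t} - L_{i^\star})p^\star\, \mathbf 1\{\neg G_t\}\bigr].$$
Call these two terms $(A)$ (failure contribution) and $(B)$ (success contribution); the target is a logarithmic bound on $(B)$ for easy games and a $T^{2/3}\log^{1/3} T$ bound on $(B)$ for hard games, with $(A)$ contributing only a constant.

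For the failure term $(A)$, my plan is to argue as in \citet{kveton2019garbage,vaswani2019old}: condition on $Z_{ijt}$ and apply the same Hoeffding-type concentration used by \CBP{} to $\hat\pi_a(t)$, then take expectation over the randomization. The key property is that the truncated Gaussian in Algorithm~\ref{alg:randomization_procedure} places mass of order $1-\varepsilon$ on values close to $B=\sqrt{\alpha \log t}$, so with probability at least $1-\varepsilon$ one recovers the deterministic width and hence a tail probability bounded by $2|V_{ij}| t^{1-2\alpha}$. A union bound over pairs and rounds, together with the polynomial tail in $\alpha>1$, shows $(A)$ is $O(N|\mathcal V|)$ and contributes the $2(1+\tfrac{1}{2\alpha-2})|\mathcal V|$ and the additive $\sum_k \delta_k$ terms in the stated bound.

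For the success term $(B)$, the main observation I would exploit is the deterministic inequality $c'_{i,j}(t) \leq c_{i,j}(t)$ on every realization, because $Z_{ijt}\leq B=\sqrt{\alpha\log t}$ by construction. This means the randomized elimination criterion $|\hat\delta_{i,j}(t)| > c'_{i,j}(t)$ is easier to satisfy than its deterministic counterpart, so whenever the classical \CBP{} analysis would add a pair to $\mathcal U(t)$, \RandCBP{} does as well. I would then transplant Lemmas 3--5 of \citet{bartokICML2012}, which control (i) the size of $\mathcal P(t)\cup\mathcal N^+(t)$ once confident pairs are confirmed, (ii) the number of plays of each $a\in \mathcal V(t)\cap \mathcal R(t)$ through the underplay rule with $f(t)=\alpha^{1/3} t^{2/3}\log(t)^{1/3}$ and $\eta_a=W_a^{2/3}$, and (iii) the number of plays of each confident Pareto-optimal action through $c_{i,j}(t)$. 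Step (iii) is where $c'\leq c$ matters most: under the good event, $|\hat\delta_{i,j}|\leq c'_{i,j}(t) + |\delta_{i,j}|\leq c_{i,j}(t)+|\delta_{i,j}|$, which is exactly the inequality driving the $W_k^2 g_k^2/\delta_k\cdot \alpha\log T$ per-action bound. For the hard case, the standard argument forces the $\mathcal V(t)\cap\mathcal R(t)$ actions to dominate, and the choice of $f(t)$ gives the $T^{2/3}\log^{1/3} T$ scaling with game-dependent constants $C_1,C_2$.

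The main obstacle I anticipate is step (i) above: making rigorous that the randomized criterion does not produce \emph{too many} spurious additions to $\mathcal U(t)$ on the good event. Although $c'\leq c$ makes elimination easier, we only ever invoke elimination for pairs where the true sign of $\delta_{i,j}$ aligns with $\hat\delta_{i,j}$ on $G_t$, so any wrongly signed constraint contributes to $(A)$ rather than $(B)$; hence the argument should close cleanly. The remaining bookkeeping is to propagate $Z_{ijt}$ through the $W_a^2/n_a(t-1)$ weighting in the action-selection rule, which I would handle by using $\mathbb E[Z_{ijt}]\leq B$ to upper bound expectations and otherwise reusing the deterministic proof verbatim. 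Putting the two terms together yields the easy-game bound, and specializing the exploration budget from $f(t)$ yields the hard-game bound.
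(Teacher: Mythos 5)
Your overall skeleton and your treatment of the success event coincide with the paper's proof: the paper decomposes $\mathbb E[n_k(T)]$ across the event that the confidence interval holds, and its entire success-case analysis (the four-term decomposition over $\mathcal D_t$ and membership in $\mathcal P(t)\cup N^+(t)$, feeding into the adapted Lemma of \citet{bartokICML2012}) rests on precisely the observation you make, namely that $Z_{ijt}\leq B=\sqrt{\alpha\log t}$ implies $c'_{i,j}(t)\leq c_{i,j}(t)$ pointwise, so the chain $\delta_{i,j}\leq 2c'_{i,j}(t)\leq 2c_{i,j}(t)$ lets the deterministic \CBP{} argument go through and yields the $4W_k^2 g_k^2\alpha\log(T)/\delta_k$ terms and the hard-game $T^{2/3}\log^{1/3}(T)$ rate from the choice of $f(t)$ and $\eta_a$.

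The gap is in the failure term. You assert that the truncated Gaussian ``places mass of order $1-\varepsilon$ on values close to $B$,'' so that conditioning on $Z_{ijt}$ and applying Hoeffding recovers the deterministic tail $2|V_{ij}|t^{1-2\alpha}$ with probability $1-\varepsilon$. This is backwards: the weights $\bar p_k=\exp(-\rho_k^2/2\sigma^2)$ are largest for $\rho_k$ near $0$, the grid runs from $A\leq 0$ up to $B$, and only the explicit tail mass (of order $\varepsilon$) sits at $\rho_K=B$; the bulk of the distribution of $Z_{ijt}$ lies on small, possibly nonpositive values. Consequently the ``condition on $Z$ and average'' route does not close: for $Z_{ijt}=\rho_1=A\leq 0$ the width $c'_{i,j}(t)$ is nonpositive and the conditional failure probability is not small (it is $1$ whenever $\hat\delta_{i,j}(t)\neq\delta_{i,j}$), so $\mathbb E_{Z}[\mathbb P(\mathcal E_t^c\mid Z)]=\Omega(1)$ per round and a union bound over $T$ rounds produces a linear regret term rather than the stated constant. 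The paper avoids this with the \citet{kveton2019garbage}-style device that you cite but do not actually deploy: it introduces $Q_{i,j}(t)=\mathbb P_{Z_{ijt}}(|\hat\delta_{i,j}(t)-\delta_{i,j}|>c'_{i,j}(t))$ and the set $\Upsilon_k$ of rounds on which $Q_{i,j}(t)>1/T$; rounds outside $\Upsilon_k$ contribute at most $1$ in total, and the event $\{Q_{i,j}(t)>1/T\}$, which depends only on $\hat\delta_{i,j}(t)$, is reduced via the tail mass at $\rho_K$ to the deviation $|\hat\delta_{i,j}(t)-\delta_{i,j}|$ exceeding the width at the single largest bin $\rho_K=\sqrt{\alpha\log T}$, to which Hoeffding is then applied to obtain $2(1+\frac{1}{2\alpha-2})|\mathcal V|$. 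Some intermediate reduction of this kind is essential; without it the failure term is not controlled.
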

Similarly to the guarantees of \CBP{} \citep{bartokICML2012}, the bound on easy games is problem dependent while the bound on hard games is problem independent. 
In both cases, the expected regret of \RandCBP{} grows at the same rate as \CBP{} on the horizon $T$. 
We will see in the experiments that \RandCBP{} empirically outperforms \CBP{}.

\vspace{-0.5em}
\section{The contextual setting}
\label{sec:contextual_setting}

\CBPside{}~\citep{bartok2012CBPside} extends \CBP{} to the linear and logistic contextual PM settings.
The regret guarantee of \CBPside{} is currently restricted to easy games.
An empirical study suggests that a sub-linear regret guarantee should be achievable on hard games as well~\citep{lienert2013exploiting}.
However, the exploration strategy of \CBPside{} prevents both applicability and derivation of a regret guarantee on hard games.
We introduce \CBPsidestar{}, an extension of \CBPside{} with an exploration strategy based on pseudo-counts applicable on easy and hard games.
We then propose \RandCBPsidestar{}, a stochastic counterpart of \CBPsidestar{}, that enjoys regret guarantees on both easy and hard games in the linear setting, while empirically outperforming its deterministic counterpart.
Pseudo-codes of \CBPsidestar{} and \RandCBPsidestar{} are reported in Appendix A.

\subsection{The linear \CBPsidestar{} strategy}

Recall that under the contextual PM setting, $p^\star(x)$ denotes the outcome distribution given $d$-dimensional contexts $x \in \mathcal X\subseteq \mathbb R^d$. In the linear setting, $p^\star(x) = \theta x  $, where $\theta \in\mathbb R^{M\times d}$ is an unknown parameter matrix.
Similarly to the non-contextual setting, it is not possible to estimate the outcome distribution directly (see Section~\ref{par:feedback_dist}). 
Consequently, \CBPsidestar{} exploits the connection between outcome and feedback distributions. In the contextual setting, the feedback distribution is $\pi_i^\star(x) = S_i p^\star(x) \in \Delta_{\sigma_i}$ for all actions $i \in \{1, \dots, N\}$ .
If we denote $\theta_i =S_i  \theta  \in \mathbb R^{\sigma_i\times d}$ as the per-action unknown parameter of the regression, then the contextual feedback distribution is $\pi_i^\star(x) =  \theta_i x $. 

\CBPsidestar{} estimates $\theta_i$ with a ridge estimator defined as $\hat \theta_i(t) = Y_{i,t} X_{i,t}^\top (\lambda I_d + X_{i,t} X_{i,t}^\top)^{-1}, $ where $X_{it} \in \mathbb R^{d \times t}$ is the history of contexts, $Y_{it} \in \{0,1\}^{\sigma_i \times t}$ is the history of one-hot-encoded feedback symbols for action $i$, and $I_d$ the $d$-dimensional identity matrix.
The following confidence bound on $\hat \delta_{i,j}(x)$ holds with probability $1/t^2$:
\vspace{-1em}
\begin{align}
c_{i,j}(x) = \displaystyle \sum_{a\in V_{ij}} \|v_{ija}\|_2  \nonumber \\ 
\times \sigma_a \left(\sqrt{ d \log(t) + 2 \log(1/t^2) } +   \sigma_a \right)   \| x \|_{G_{a,t}^{-1}}, 
\label{eq:contextual_CI}  
\end{align}
where $G_{a,t}= \lambda I_d + X_{at} X_{at}^\top$ is the Gram matrix and $\| x \|^2_{G^{-1}_{a,t}} = x^\top G^{-1}_{a,t} x$ is the weighted 2-norm.

\begin{remark}
    The confidence bound of \CBPsidestar{} (Eq. \ref{eq:contextual_CI}) corrects the bound $c_{i,j}(x) \propto d (\sqrt{d \log(t) \dots })$ used in \citet{bartok2012CBPside, lienert2013exploiting}. 
    We multiply by $\sigma_a$ instead of $d$ to correctly instantiate Theorem 3 in \citet{bartok2012CBPside} over matrix traces.
    The corrected confidence bound is less conservative as $\sigma_a$ is usually smaller than $d$.
\end{remark}

\vspace{-1em}
\paragraph{Exploration based on pseudo-counts}
\CBPsidestar{} is based on a new definition of underplayed actions (Def.~\ref{def:rarely_sampled_actions}) suitable for the contextual setting. 
The definition enables the applicability of \CBPsidestar{} to hard games, and the regret analysis in both easy and hard games.
In the non-contextual setting, underplayed actions are based on the number of times that action $a$ was played up to time~$t$, i.e. $n_a(t)$. 
A natural extension would consist in counting the number of times action $a$ was played in context $x_t$. 
Unfortunately, given that contexts are usually sampled from a continuous domain $\mathcal X$, each context is typically encountered only once over a game, making such counters irrelevant. 

\begin{definition}[Underplayed actions (contextual case) ]
At round $t$, the set $ \mathcal R(x_t) = \{  a = 1, \dots, N: 1/ \| x_t \|^2_{G_{a,t}^{-1}} <   \eta_a f(t)   \}$ contains actions that are underplayed at context $x_t$ given a play rate function $f(t)$ and a constant $\eta_a > 0$. 
\label{def:rarely_sampled_actions_contextual}
\end{definition}

The quantity $1 / \| x \|^2_{G^{-1}_{a,t}}$  is a \textit{pseudo-count} of the number of selections of action $a$ at a given context $x$. 
In the specific case of orthogonal contexts sampled from the finite set of $d$-dimensional one-hot vectors, $1/ \| x \|^2_{G^{-1}_{a,t}}$ corresponds to the exact number of selections of action $a$ in context $x$. When contexts are not orthogonal, the pseudo-count 
increases proportionally to the frequency of the action being played
in similar contexts.
\vspace{-0.5em}
\subsection{Instantiating linear \RandCBPsidestar{} }

The randomized counterpart of \CBPsidestar{}, namely \RandCBPsidestar{}, relies on randomized confidence intervals defined at time $t$ for a pair $\{i,j\}$, defined as 
\begin{equation*}
    c_{i,j}'(x) = \sum_{a\in V_{ij}} ||v_{ija}||_2 \sigma_a ( Z_{ijt} + \sigma_a )  || x||_{G_{a,t}^{-1}},
\vspace{-1em}
\end{equation*}
where $Z_{ijt}$ is a random variable bounded in $[A, B]$ and follows the randomization procedure presented in Section~\ref{par:rand_procedure}.

\vspace{-0.5em}
\paragraph{Regret analysis}
We leverage the non-contextual analysis of \CBP{} \citep{bartokICML2012}. We adapt the analysis to the contextual case by introducing pseudo-counts and simplifying the obtained expressions with the Cauchy-Schwartz inequality. The contextual confidence bounds are simplified by considering the total number of feedback symbols in the game. To upper bound the expected number of times the confidence bound fails, we consider some worst-case probability over all actions of the game. 

\begin{theorem}
Consider the interval $[A,B]$, with $B = \sqrt{ d \log(t) + 2 \log(1/t^2) } $ and $A\leq0$.
Set the randomization over $K$ bins with a probability $\epsilon$ on the tail and a standard deviation $\sigma$. 
Let $f(t) = \alpha^{1/3} t^{2/3} \log(t)^{1/3}$, $\eta_a = W_a^{2/3}$ and $\alpha>1$.
Assume $\| x_t\|_2 \leq E$ and positive constants $C_1$, $C_2$, $C_3$, and $C_4$. On easy games, \RandCBPsidestar{} achieves:
\begin{equation*}
    \mathbb{E}[ R_T ]\leq C_1 N + C_2 N d \sqrt{T} \log(T)
\end{equation*}
and, on hard games, \RandCBPsidestar{} achieves:
\begin{equation*}
    \mathbb{E}[ R_T ]\leq C_3 N + C_4 \sqrt{d} \log(T)^{1/3} T^{2/3} .
\end{equation*}
\label{thm:contextual_upperbound}
\vspace{-2.5em}
\end{theorem}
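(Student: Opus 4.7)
The plan is to mirror the CBP-style analysis structure used for the non-contextual \RandCBP{} (Theorem 3.1 of the excerpt) but with two contextual modifications: replacing action counts by the pseudo-counts $1/\|x_t\|^2_{G^{-1}_{a,t}}$ from Definition 4.1, and replacing the loss/feedback empirical averages by the ridge estimator behind Eq.~\ref{eq:contextual_CI}. As in \CBP{}'s proof, I would split the expected regret at round $t$ into a contribution from rounds where the randomized confidence event $\mathcal{E}_t=\{\,|\hat\delta_{ij}(x_t)-\delta_{ij}(x_t)|\le c'_{ij}(x_t)\text{ for all }\{i,j\}\in\mathcal{N}\,\}$ holds, and a contribution from its complement.

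For the failure term $\sum_t\Pr[\mathcal{E}_t^c]$, I would reuse the probabilistic argument from the self-normalised martingale bound behind Eq.~\ref{eq:contextual_CI}, together with the randomization lemma of \citet{vaswani2019old}. Concretely, conditioning on $Z_{ijt}=\rho_k$, each randomized bound dominates the ``deterministic at scale $\rho_k$'' bound; summing the failure probabilities over the $K$ discrete values and over neighbor pairs gives a $O(1/t^2)$ contribution after using $B=\sqrt{d\log t+2\log(1/t^2)}$, so the total failure regret contributes a constant $C_1 N$. This gives the first term in both bounds.

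On the success event, the key observation is $c'_{ij}(x_t)\le c_{ij}(x_t)$ pointwise, so every geometric consequence exploited by \CBPsidestar{}---membership of $p^\star(x_t)$ in $D(t)$, correctness of signs in $\hat\delta_{ij}$, and the structural inclusions $\mathcal{P}(t)\cup\mathcal{N}^{+}(t)\cup(\mathcal{V}(t)\cap\mathcal{R}(x_t))$---continues to hold. From this point I would adapt Bart{\'o}k et al.'s decomposition of the instantaneous regret into contributions from Pareto-optimal plays, neighbor-action plays, and observer/underplayed plays. For each observer pair $\{i,j\}$ with positive gap $\delta_{ij}(x_t)$, I would invoke the confidence criterion $|\hat\delta_{ij}(x_t)|>c'_{ij}(x_t)$ to lower-bound $\delta_{ij}(x_t)$ by $2c'_{ij}(x_t)$, then use Cauchy--Schwarz together with the elliptic potential lemma $\sum_{t=1}^T\|x_t\|^2_{G^{-1}_{a,t}}\le 2d\log(1+TE^2/(\lambda d))$ to convert the sum of squared pseudo-count contributions into a $d\sqrt{T}\log T$ factor. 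This yields the easy-game bound $C_2 N d\sqrt{T}\log T$.

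For hard games the gap-dependent argument above is not available on all relevant pairs, and this is where I expect the main obstacle. Following the non-contextual treatment but with the contextual set $\mathcal{R}(x_t)$, I would bound the number of rounds where $1/\|x_t\|^2_{G^{-1}_{a,t}}\le \eta_a f(t)$ for some $a$, and then choose $f(t)=\alpha^{1/3}t^{2/3}\log(t)^{1/3}$ so that the exploration-driven plays contribute at most $O(T^{2/3}\log^{1/3}T)$, while the exploitation-driven plays are absorbed by the easy-case argument restricted to pairs where the gap is already $\Omega(f(t)^{-1/2})$. Tracking dimensions through the contextual bound replaces the $W_a$-type game constants by $\sqrt{d}$, giving $C_4\sqrt{d}\log^{1/3}(T)T^{2/3}$. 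The delicate point will be to show that the pseudo-count-based $\mathcal{R}(x_t)$ plays the same role as the count-based set in Definition 3.4 despite the continuous context space, which I would handle by arguing that on any horizon either the pseudo-count already exceeds $\eta_a f(t)$ (contributing to exploitation) or action $a$ gets added to $\mathcal{S}(t)$ and thus its pseudo-count grows at the next visit of a ``similar'' context---formalised through the elliptic potential inequality as in the easy-case step.
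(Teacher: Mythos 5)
Your proposal follows essentially the same route as the paper's Appendix~D: decompose the regret over the success/failure of the confidence event $\mathcal E_t$, use the pointwise domination $c'_{i,j}(x_t)\le c_{i,j}(x_t)$ to carry over the geometric structure, bound the success term by splitting on whether the decaying-exploration rule $\mathcal D_t$ is in effect and whether $k\in\mathcal P(t)\cup N^+(t)$, control the gap sums via Cauchy--Schwarz plus the elliptic potential lemma (giving the $Nd\sqrt{T}\log T$ term on easy games), and let the pseudo-count exploration rule with $f(t)=\alpha^{1/3}t^{2/3}\log^{1/3}(t)$ produce the $T^{2/3}\log^{1/3}(T)$ term on hard games. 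One step is stated backwards and would fail as written: you say you would ``invoke the confidence criterion $|\hat\delta_{i,j}(x_t)|>c'_{i,j}(x_t)$ to lower-bound $\delta_{i,j}(x_t)$ by $2c'_{i,j}(x_t)$.'' The inequality actually used is the opposite one, for pairs that are \emph{not} eliminated: if $\{i,j\}\in\mathcal N(t)$ then $|\hat\delta_{i,j}(x_t)|\le c'_{i,j}(x_t)$, and on $\mathcal E_t$ this gives the \emph{upper} bound $\delta_{i,j}(x_t)\le 2c'_{i,j}(x_t)\le 2c_{i,j}(x_t)$, which is then summed along a path of neighboring pairs from the played action to $i^\star(x_t)$. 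With that direction fixed, the rest goes through. A genuine (and welcome) difference is your treatment of the failure term: you propose to actually compute $\Pr[\mathcal E_t^c]$ by conditioning on the $K$ discrete values of $Z_{ijt}$ and summing, whereas the paper abstracts this step into an assumed sequence $\beta_{k,t}$ bounding $\Pr[\mathcal E_t^c, I_t=k]$ and never instantiates it in the contextual appendix; your version is more complete on this point.
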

The detailed analysis is reported in Appendix \ref{appendix:regret_CBPside}.
Similarly to \CBPside{},  the guarantee of \RandCBPsidestar{} is problem independent and grows at the same rate on easy games. 
However, \RandCBPsidestar{} presents a new problem dependent guarantee on hard games.
Additionally, since \CBPsidestar{} is a special case of \RandCBPsidestar{} with $A=B$ and $K=1$, our bounds for \RandCBPsidestar{} also hold for \CBPsidestar{}.
We will see in the experiments that \RandCBPsidestar{} empirically outperforms \CBPsidestar{} on easy and hard games.

\vspace{-0.5em}
\paragraph{ Applicability to the logistic setting }

While the focus of this manuscript is on the linear setting, \CBPsidestar{} and \RandCBPsidestar{} (and their guarantees) can be extended to the logistic setting by considering the logistic estimator and confidence bounds defined in \citet{bartokICML2012}. 

\vspace{-0.5em}
\section{Numerical experiments}
\label{sec:experiments}

We conduct experiments to validate the empirical performance of \RandCBP{} and \RandCBPsidestar{} on the well-known Apple Tasting (AT)~\citep{helmbold2000apple} and Label Efficient (LE)~\citep{helmbold1997some} games. 
AT is a two actions and two outcomes easy game:
\renewcommand{\arraystretch}{0.8}
\vspace{-1.5em}
$$\bf{L}=\kbordermatrix{
    & \text{} & \text{}\\
      \text{action 1}  & 1 & 0 \\
      \text{action 2} & 0 & 1}, \notag  
      \bf{H}= \kbordermatrix{
    &\text{} & \text{}\\
     \text{} & \bot & \bot \\
     \text{} & \wedge & \odot }.$$
LE is a hard game with three actions and two outcomes:
\vspace{-1.5em}
$$\bf{L}=\kbordermatrix{ & \text{} & \text{}\\
        \text{action 1} & 1 & 1\\
        \text{action 2} & 0 & 1\\
        \text{action 3} & 1 & 0}, \notag 
        \bf{H}= \kbordermatrix{
        & \text{ } & \text{ }\\
        \text{} & \bot & \odot\\
        \text{} & \wedge & \wedge\\
        \text{} & \wedge & \wedge}.$$
For reproducibility, we provide in Appendix \ref{appendix:games} a detailed analysis of both games. 
\renewcommand{\arraystretch}{1}

\vspace{-0.5em}
\subsection{Evaluation of \RandCBP{}}

Since both AT and LE admit binary outcomes, the outcome distribution corresponds to $p^\star=[p, 1 - p]$ with $p\in [0, 1]$.
We consider \textit{imbalanced} and \textit{balanced} instances.
Imbalanced instances, where $p \sim \mathcal U_{[0, 0.2] \cup [0.8, 1]}$, are usually solved faster since outcomes have lower variance.
Balanced instances, where $p \sim \mathcal U_{[0.4,0.6]}$, require more exploration to estimate $p^{\star}$ with confidence. 
This leads to four cases: imbalanced/balanced AT and imbalanced/balanced LE.  
For each of the four cases, we run the experiment $96$ times on a $T = 20$k horizon.  
We consider the deterministic \PMDMED{} and \CBP{} as baselines, as well as the stochastic \BPMLeast, \TSPM{}, and \TSPMgaussian{} (in the settings where they have a guarantee).
Implementation and hyper-parameters details are reported in Appendix~\ref{appendix:details}.
We measure the performance with the average non-contextual cumulative regret (Eq.~\ref{eq:noncontextual_regret}) and the win-count (number of times a strategy achieves the lowest cumulative regret at end of the game). 
We perform a one sided Welch's t-test to asses if the cumulative regret of \RandCBP{} at the end of the game is significantly lower than the baselines'.

\begin{figure}
     \centering
     \begin{subfigure}[b]{0.23\textwidth}
         \centering
         \includegraphics[width=\textwidth]{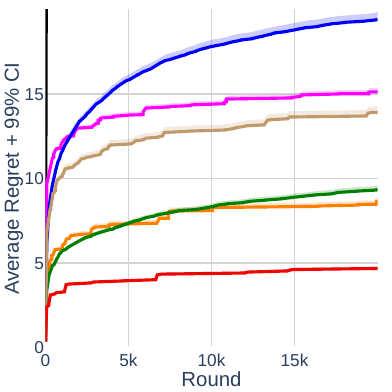}
         \caption{Imbalanced AT}
         \label{fig:easyFF}
     \end{subfigure}
     \hfill
     \begin{subfigure}[b]{0.23\textwidth}
         \centering
         \includegraphics[width=\textwidth]{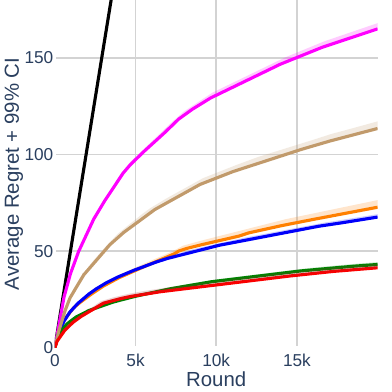}
         \caption{Balanced AT}
         \label{fig:hardFF}
     \end{subfigure}
     \hfill
     \begin{subfigure}[b]{0.23\textwidth}
         \centering
         \includegraphics[width=\textwidth]{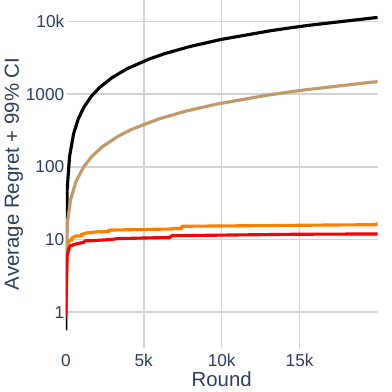}
         \caption{Imbalanced LE }
         \label{fig:easyEF}
     \end{subfigure}
     \hfill
     \begin{subfigure}[b]{0.23\textwidth}
         \centering
         \includegraphics[width=\textwidth]{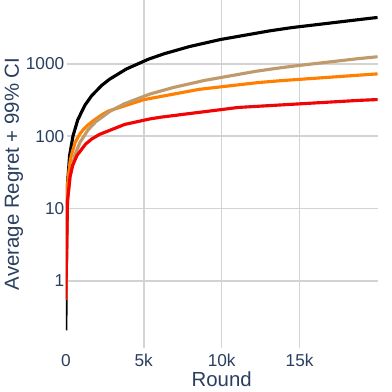}
         \caption{Balanced LE}
         \label{fig:hardEF}
     \end{subfigure}
     \begin{subfigure}[b]{0.45 \textwidth}
         \centering
         \vspace{0.5em}
         \includegraphics[width=\textwidth]{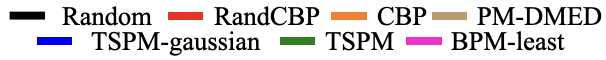}
     \end{subfigure}
     \vspace{-0.5em}
     \caption{Average regret (with $99\%$ confidence interval above) on non-contextual AT and LE games. }
    \label{fig:without_side_info}
    \vspace{-2em}
\end{figure}

\vspace{-1em}

\paragraph{Results}
Figure~\ref{fig:without_side_info} shows the non-contextual cumulative regret for each strategy over the four configurations considered. Numeric details are reported in Table~\ref{tab:numericdetail_noncontextual_AT} and \ref{tab:numericdetail_noncontextual_LE} of Appendix~\ref{appendix:additional_exps}.
In all four cases, \RandCBP{} is the best strategy in terms of average regret.
\RandCBP{} achieves a regret significantly lower (p-value$<0.01$) than all baselines in three settings (Figures~\ref{fig:easyFF}, \ref{fig:easyEF}, and~\ref{fig:hardEF}) out of the four considered. 
In the balanced AT game (Figure~\ref{fig:hardFF}), \RandCBP{} is not statistically different from \CBP{} (p-value=0.055) and \TSPM{} (p-value=0.854). For \CBP{}, this can be attributed to its high variance (std=138). For \TSPM{}, we observe from the win-count that \RandCBP{} achieves lowest regret 37 times, against 13 for \TSPM{}.
Performance similarity between \RandCBP{} and \TSPM{} reflects the theoretical connections between randomizing confidence bounds and Thompson Sampling~\citep{vaswani2019old}, on which \TSPM{} is based.

\vspace{-0.5em}
\subsection{Evaluation of \RandCBPsidestar{} }

Here, the outcome distribution is a linear function of $10$-dimensional contexts (sampled uniformly in $[0,1]^{10}$)   and a fixed unknown parameter $\theta \in \mathbb R^{M \times 10}$ with all values set at $0.1$. From the uniform context distribution, we have $0.5$ as mean for each context feature. Therefore, the resulting outcome distributions are more \textit{balanced}.
We run the experiment $96$ times over a $T = 20$k horizon.
We report the contextual cumulative regret (Eq.~\ref{eq:contextual_regret}), the win-count and Welch's t-test.
The only PM baseline in this setting is \CBPsidestar{}.
We therefore resort to baselines that only apply on specific games. We consider \PGIDS{} \citep{grant2021bayesianappletasting}, \PGTS{} \citep{grant2021bayesianappletasting}, and \STAP{} \citep{helmbold2000apple} for the AT game, and \CESA{} \citep{cesa2006prediction} for the LE game.
Implementation and hyper-parameters details are reported in Appendix \ref{appendix:details}. 

\vspace{-1em}
\paragraph{Results}
Figure~\ref{fig:with_linear_info} shows the contextual cumulative regret for each strategy on AT and LE, with dotted-lines indicating game-specific baselines.
Numeric details are reported in Table \ref{tab:numericdetail_contextual} (Appendix~\ref{appendix:additional_exps}).
Over the horizon $T=20$k, \RandCBPsidestar{} achieves the best regret performance in both settings and significantly (p-value$<0.01$ in AT and LE) improves over \CBPsidestar{}, \STAP{}, and \CESA{}.
In the AT game (Figure \ref{fig:AT_with_linear_info}), \PGIDS{} achieves the lowest regret on the truncated horizon $T=7.5$k. 
However, \PGIDS{} and \PGTS{} scale in cubic time with the number of contexts due to the necessity to sample and invert matrices at each round, whereas \CBPsidestar{} and \RandCBPsidestar{} enjoy lower complexity thanks to the Sherman-Morison update \citep{sherman_morrison}, making them usable on long horizons. 
We emphasize that \PGIDS{}, \PGTS{}, \STAP{}, and \CESA{} are \textit{game-specific}, unlike \CBPsidestar{} and \RandCBPsidestar{}. 

\begin{figure}
  \centering
  \begin{subfigure}{0.23\textwidth}
    \centering
    \includegraphics[width=\linewidth]{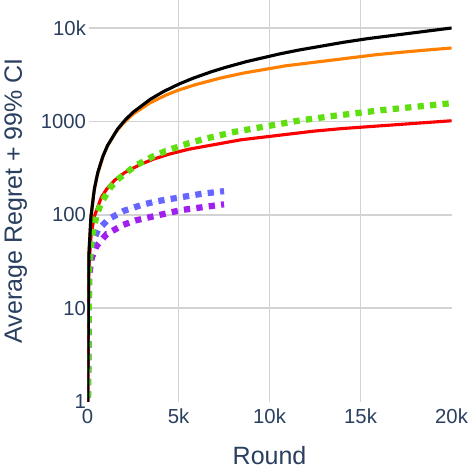}
    \caption{Apple Tasting (AT)}
    \label{fig:AT_with_linear_info}
  \end{subfigure}
  \begin{subfigure}{0.229\textwidth}
    \centering
    \includegraphics[width=\linewidth]{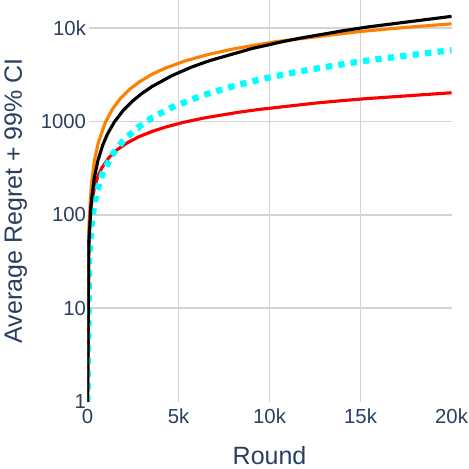}
    \caption{Label Efficient (LE)}
    \label{fig:LE_with_linear_info}
  \end{subfigure}
  \begin{subfigure}{0.4\textwidth}
    \centering
    \vspace{0.5em}
    \includegraphics[width=\linewidth]{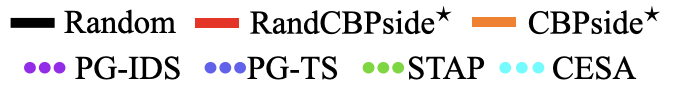}
  \end{subfigure}
  \vspace{-0.75em}
  \caption{Average regret (with $99\%$ confidence interval above) on contextual games (10-$d$ contexts).  }
    \label{fig:with_linear_info}
    \vspace{-1.5em}
\end{figure}

\vspace{-0.5em}
\section{Use-case: adaptive monitoring of a deployed black-box classifier}
\label{sec:case_study}

Partial monitoring has a reputation for being a complex framework due to its generality \cite{kirschner2023linear}, which can hinder its adoption in real-world problems.
Documented applied studies of PM do not emphasize on how to employ the framework towards an application~\citep{singla2014contextual, kirschner2023linear}.
Here, we show how to formulate a real-world application as a PM problem, to encourage future applied research. 

We consider the problem of cost-efficiently verifying the prediction error rate of a deployed black-box classifier.
We assume a streaming setting where, at each round, the classifier receives an input and outputs probabilities to $C$ classes. 
The index of the highest probability determines the \textit{predicted class}. 
Each of the $C$ predicted classes has an error rate $p_c$.  
The goal is to identify which predicted classes have an error rate greater than a tolerance threshold $\tau \in [0,1]$ while minimizing the number of verifications. 
In contrast to \citet{kossen2021active}, who require a verification budget to be specified, 
our approach assumes no prior knowledge regarding the number of required verifications.

\vspace{-0.5em}
\paragraph{Problem formulation}

Everytime class $c$ is predicted, a binary outcome is generated: either the classifier mispredicted (0) or not (1). 
Thus, the outcome distribution is $p^\star_c = [p_c, 1-p_c]$ where $p_c$ denotes the error rate we aim to estimate. 
We design a PM game, that we name $\tau$-detection game, to estimate the outcome distribution over multiple rounds for a predicted class $c$:
\vspace{-0.75em}
\begin{equation*}
    \bf{L}=\kbordermatrix{
    & \text{error} & \text{no error}\\
      \text{verify}  & 1 & 1 \\
      \text{pass} & 1/\tau &  0 }, \notag
    \bf{H}= \kbordermatrix{
    &\text{error} & \text{no error}\\
     \text{verify} & \wedge & \odot  \\
     \text{pass} & \bot & \bot  }.
\end{equation*}

After each classifier prediction, the PM agent can either require a verification (observation of the true class) or not (pass). 
The loss matrix is designed such that  the optimal action is to pass when $p_c <\tau$ and to verify when $p_c \geq \tau$.
The ``verify'' action is informative about the error rate $p_c$, but it has a fixed cost no matter the outcome.  
For reproducibility, we provide in Appendix \ref{appendix:detection_game} the analysis of this game.

\begin{figure}
  \centering
  \begin{subfigure}{0.23\textwidth}
    \centering
    \includegraphics[width=\linewidth]{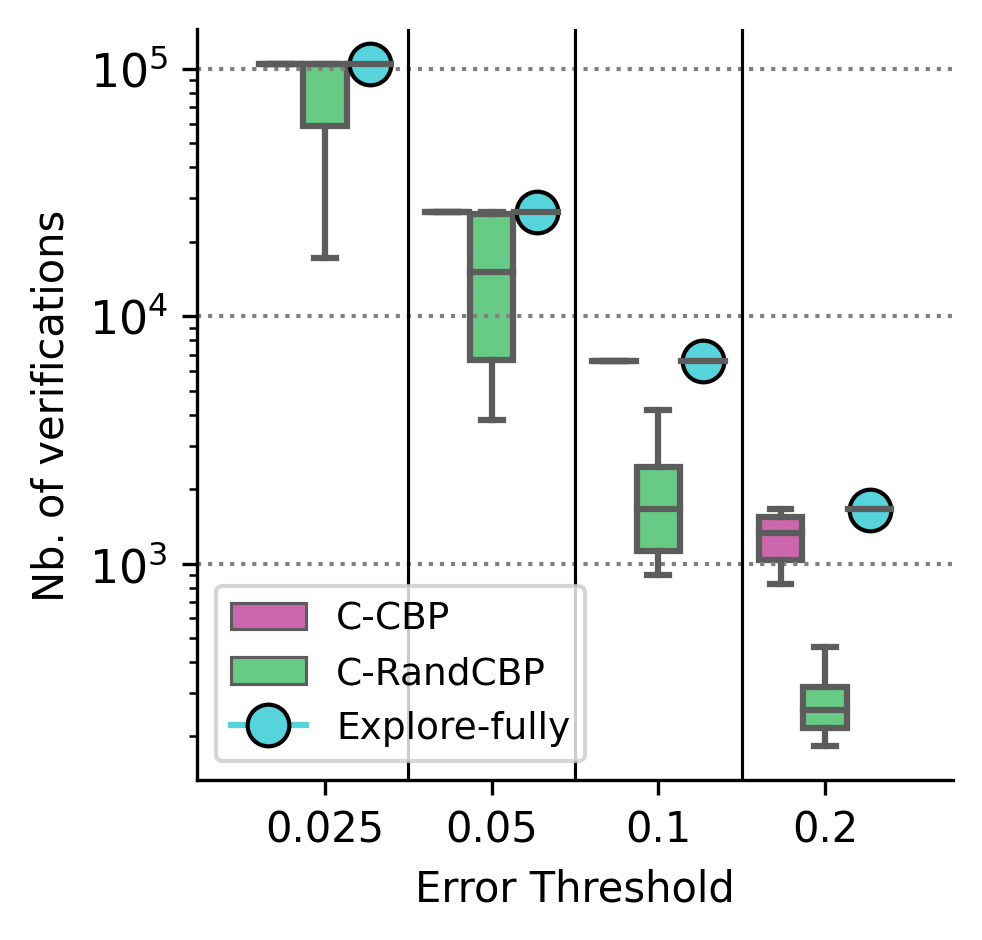}
    \caption{Case 1}
  \end{subfigure}
  \begin{subfigure}{0.23\textwidth}
    \centering
    \includegraphics[width=\linewidth]{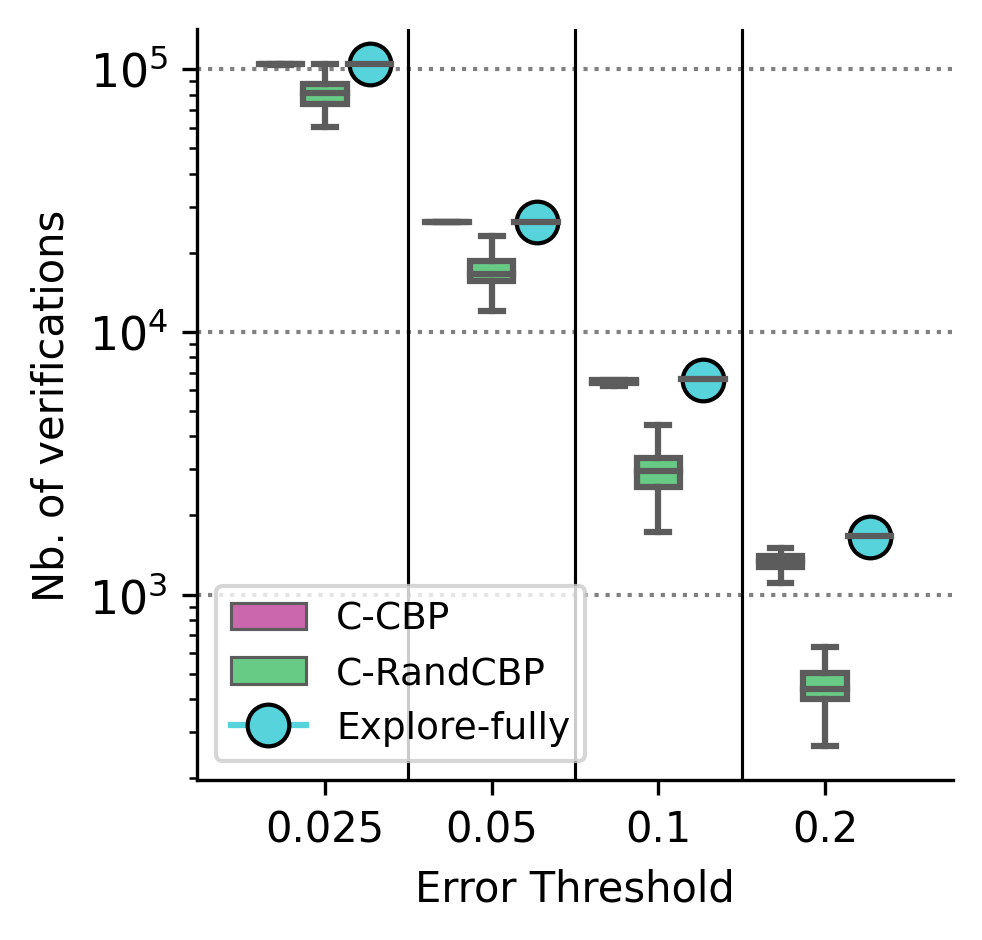}
    \caption{Case 2}
  \end{subfigure}
  \caption{Cost-efficiency of the considered approaches (quartiles of number of verifications).  }
  \label{figure:use_case}
  \vspace{-1em}
\end{figure}

\vspace{-0.5em}
\paragraph{Experiment setup}

We simulate a variety of black-box classifiers by randomly generating confusion matrices with a global error rate lower than $10\%$ (the black-box would not be deployed otherwise).
Prediction errors from the black-box can be \textit{uniformly} distributed across the classes or \textit{non-uniformly} distributed. 
In addition, the distribution of the true classes in the stream can be \textit{balanced} or \textit{imbalanced}.
We obtain four configurations (uniform/balanced, uniform/imbalanced, non-uniform/balanced, non-uniform/imbalanced). We consider the two opposite configurations: i) balanced true classes with uniform black-box errors (case 1), and ii) imbalanced true classes with non-uniform black-box errors (case 2).

We run the experiment $96$ times. We consider four error tolerance thresholds $\tau \in \{ 0.025, 0.05, 0.1, 0.2\}$ and  
a classification task with $C=10$ classes. We measure the mean and median f1-score to assess how accurate a given approach is at identifying predicted classes whose error rate exceeds $\tau$, and the underlying average number of verifications used by each approach.
For validation and comparison purposes, we consider a maximum number of verifications that one is willing to spend to estimate accurately each error rate. The maximum number of verifications is derived from Wald's confidence intervals formula (more details in Appendix \ref{appendix:case_study}).
The non-adaptive \explorecommit{} baseline consumes entirely the maximum number of verifications. 
We compare \explorecommit{} against the adaptive strategies \crandcbp{} and \ccbp{}, which consist of $C$ instances of \RandCBP{} (resp. \CBP{}) that play the $\tau$-detection game.

\vspace{-0.5em}
\paragraph{Results} Tables \ref{tab:case1} and \ref{tab:case2} (reported in Appendix \ref{appendix:additional_exps}) show that \crandcbp{}, \ccbp{} and \explorecommit{} all have an average f1-score is within the same range, indicating that the three strategies are equally effective in identifying predicted classes that exceed the threshold $\tau$. 
In case 1, for the smallest threshold $\tau = 0.025$, \explorecommit{} and \ccbp{} have an average f1-score of $0.96\pm0.15$ and \crandcbp{} of $0.95\pm0.16$. For $\tau=0.2$, the average f1-score is equal to $1.0$ for all strategies. Similar tendencies are observed for the other cases.
Figure \ref{figure:use_case} shows that the number of verifications consumed by \crandcbp{} to achieve the task is consistently lower than the one of \explorecommit{} and \ccbp{}.
In case 1, \crandcbp{} reduces the verification cost by $15\%$ for a small error threshold ($\tau = 0.025$) and by $73\%$ for $\tau = 0.2$, relatively to \explorecommit{}.
In case 2, \crandcbp{} reduces the verification cost by $18\%$ for a small error threshold ($\tau = 0.025$) and by $62\%$ at $\tau = 0.2$, relatively to \explorecommit{}. 

\vspace{-0.5em}
\section{Conclusion}

This work extends randomization techniques~\cite{kveton2019garbage,vaswani2019old} designed for OFU-based methods in the bandit setting to successive elimination strategies in the more general partial monitoring framework. We show that it is possible to randomize \CBP{}-based strategies~\cite{bartokICML2012,bartok2012CBPside}, allowing to maintain the regret guarantees while improving significantly their empirical performance.
In the contextual PM setting, we propose a correction to the seminal \CBPside{}; the resulting \CBPsidestar{} is the first strategy to enjoy regret guarantees on both easy and hard contextual games.
Our proposed \RandCBP{} and \RandCBPsidestar{} establish state-of-the-art performance in multiple settings while maintaining regret guarantees.
To further bridge the gap between theory and practice, we present a use case on the real-world problem of monitoring the error rate of deployed classifiers. 
Future research may consist in obtaining tighter regret bounds for \RandCBP{} and \RandCBPsidestar{}.
Obtaining lower bounds in the contextual setting is another possible future research.



\bibliography{Bib.bib}

@InProceedings{ginart2022mldemon,
  title = 	 { MLDemon:Deployment Monitoring for Machine Learning Systems },
  author =       {Ginart, Tony and Jinye Zhang, Martin and Zou, James},
  booktitle = 	 {Proceedings of The 25th International Conference on Artificial Intelligence and Statistics},
  pages = 	 {3962--3997},
  year = 	 {2022},
  editor = 	 {Camps-Valls, Gustau and Ruiz, Francisco J. R. and Valera, Isabel},
  volume = 	 {151},
  series = 	 {Proceedings of Machine Learning Research},
  month = 	 {28--30 Mar},
  publisher =    {PMLR},
  pdf = 	 {https://proceedings.mlr.press/v151/ginart22a/ginart22a.pdf},
  url = 	 {https://proceedings.mlr.press/v151/ginart22a.html},
  abstract = 	 { Post-deployment monitoring of ML systems is critical for ensuring reliability, especially as new user inputs can differ from the training distribution. Here we propose a novel approach, MLDemon, for ML DEployment MONitoring. MLDemon integrates both unlabeled data and a small amount of on-demand labels to produce a real-time estimate of the ML model’s current performance on a given data stream. Subject to budget constraints, MLDemon decides when to acquire additional, potentially costly, expert supervised labels to verify the model. On temporal datasets with diverse distribution drifts and models, MLDemon outperforms existing approaches. Moreover, we provide theoretical analysis to show that MLDemon is minimax rate optimal for a broad class of distribution drifts. }
}

@inproceedings{bartok2011minimax,
  author       = {G{\'{a}}bor Bart{\'{o}}k and
                  D{\'{a}}vid P{\'{a}}l and
                  Csaba Szepesv{\'{a}}ri},
  editor       = {Sham M. Kakade and
                  Ulrike von Luxburg},
  title        = {Minimax Regret of Finite Partial-Monitoring Games in Stochastic Environments},
  booktitle    = {{COLT} 2011 - The 24th Annual Conference on Learning Theory, June
                  9-11, 2011, Budapest, Hungary},
  series       = {{JMLR} Proceedings},
  pages        = {133--154},
  publisher    = {JMLR.org},
  year         = {2011},
  url          = {http://proceedings.mlr.press/v19/bartok11a/bartok11a.pdf},
  timestamp    = {Wed, 29 May 2019 08:41:48 +0200},
  biburl       = {https://dblp.org/rec/journals/jmlr/BartokPS11.bib},
  bibsource    = {dblp computer science bibliography, https://dblp.org}
}

@book{cesa2006prediction,
  title={Prediction, learning, and games},
  author={Cesa-Bianchi, Nicolo and Lugosi, G{\'a}bor},
  year={2006},
  publisher={Cambridge university press}
}

@mastersthesis{lienert2013exploiting,
  title={Exploiting side information in partial monitoring games: An empirical study of the cbp-side algorithm with applications to procurement},
  author={Lienert, Ian},
  year={2013},
  school={Eidgen{\"o}ssische Technische Hochschule Z{\"u}rich, Department of Computer Science}
}

@inproceedings{piccolboni2001discrete,
  title={Discrete Prediction Games with Arbitrary Feedback and Loss},
  author={Antonio Piccolboni and Christian Schindelhauer},
  booktitle={COLT/EuroCOLT},
  year={2001},
  url={https://api.semanticscholar.org/CorpusID:269616606}
}

@inproceedings{tsuchiya2020analysis,
  author       = {Taira Tsuchiya and
                  Junya Honda and
                  Masashi Sugiyama},
  editor       = {Hugo Larochelle and
                  Marc'Aurelio Ranzato and
                  Raia Hadsell and
                  Maria{-}Florina Balcan and
                  Hsuan{-}Tien Lin},
  title        = {Analysis and Design of Thompson Sampling for Stochastic Partial Monitoring},
  booktitle    = {Advances in Neural Information Processing Systems 33: Annual Conference
                  on Neural Information Processing Systems 2020, NeurIPS 2020, December
                  6-12, 2020, virtual},
  year         = {2020},
  url          = {https://proceedings.neurips.cc/paper/2020/hash/649d45bf179296e31731adfd4df25588-Abstract.html},
  timestamp    = {Tue, 19 Jan 2021 15:57:23 +0100},
  biburl       = {https://dblp.org/rec/conf/nips/TsuchiyaHS20.bib},
  bibsource    = {dblp computer science bibliography, https://dblp.org}
}

@inproceedings{tsuchiya2022bestofbothwords,
  author       = {Taira Tsuchiya and
                  Shinji Ito and
                  Junya Honda},
  editor       = {Shipra Agrawal and
                  Francesco Orabona},
  title        = {Best-of-Both-Worlds Algorithms for Partial Monitoring},
  booktitle    = {International Conference on Algorithmic Learning Theory, February
                  20-23, 2023, Singapore},
  series       = {Proceedings of Machine Learning Research},
  pages        = {1484--1515},
  publisher    = {{PMLR}},
  year         = {2023},
  url          = {https://proceedings.mlr.press/v201/tsuchiya23a.html},
  timestamp    = {Wed, 15 Mar 2023 16:51:17 +0100},
  biburl       = {https://dblp.org/rec/conf/alt/TsuchiyaIH23.bib},
  bibsource    = {dblp computer science bibliography, https://dblp.org}
}

@article{grant2021bayesianappletasting,
  author       = {James A. Grant and
                  David S. Leslie},
  title        = {Apple Tasting Revisited: Bayesian Approaches to Partially Monitored
                  Online Binary Classification},
  journal      = {CoRR},
  volume       = {abs/2109.14412},
  year         = {2021},
  url          = {https://arxiv.org/abs/2109.14412},
  eprinttype   = {arXiv},
  eprint       = {2109.14412},
  timestamp    = {Sun, 06 Oct 2024 21:22:54 +0200},
  biburl       = {https://dblp.org/rec/journals/corr/abs-2109-14412.bib},
  bibsource    = {dblp computer science bibliography, https://dblp.org}
}

@InProceedings{lattimore2022rustichini,
  title = 	 {Minimax Regret for Partial Monitoring: Infinite Outcomes and Rustichini’s Regret},
  author =       {Lattimore, Tor},
  booktitle = 	 {Proceedings of Thirty Fifth Conference on Learning Theory},
  pages = 	 {1547--1575},
  year = 	 {2022},
  editor = 	 {Loh, Po-Ling and Raginsky, Maxim},
  volume = 	 {178},
  series = 	 {Proceedings of Machine Learning Research},
  month = 	 {02--05 Jul},
  publisher =    {PMLR},
  pdf = 	 {https://proceedings.mlr.press/v178/lattimore22a/lattimore22a.pdf},
  url = 	 {https://proceedings.mlr.press/v178/lattimore22a.html},
  abstract = 	 {We show that a version of the generalised information ratio of Lattimore and Gyorgy (2020) determines the asymptotic minimax regret for all finite-action partial monitoring games provided that (a) the standard definition of regret is used but the latent space where the adversary plays is potentially infinite; or (b) the regret introduced by Rustichini (1999) is used and the latent space is finite. Our results are complemented by a number of examples. For any p ∈ [1/2, 1] there exists an infinite partial monitoring game for which the minimax regret over n rounds is n^p up to subpolynomial factors and there exist finite games for which the minimax Rustichini regret is n^(4/7) up to subpolynomial factors.}
}

@article{helmbold2000apple,
title = {Apple tasting},
journal = {Information and Computation},
volume = {161},
number = {2},
pages = {85-139},
year = {2000},
issn = {0890-5401},
doi = {https://doi.org/10.1006/inco.2000.2870},
url = {https://www.sciencedirect.com/science/article/pii/S0890540100928700},
author = {David P. Helmbold and Nicholas Littlestone and Philip M. Long},
abstract = {In the standard on-line model the learning algorithm tries to minimizethe total number of mistakes made in a series of trials. On each trial the learner sees an instance, makes a prediction of its classification, then finds out the correct classification. We define a natural variant of this model (“apple tasting”) whereu• the classes are interpreted as the good and bad instances,• the prediction is interpreted as accepting or rejecting the instance,and• the learner gets feedback only when the instance is accepted. We use two transformations to relate the apple tasting model to an enhanced standard model where false acceptances are counted separately from false rejections. We apply our results to obtain a good general-purpose apple tasting algorithm as well as nearly optimal apple tasting algorithms for a variety of standard classes, such as conjunctions and disjunctions of n boolean variables. We also present and analyze a simpler transformation useful when the instances are drawn at random rather than selected by an adversary.}
}

@article{auer2002nonstochastic,
  author       = {Peter Auer and
                  Nicol{\`{o}} Cesa{-}Bianchi and
                  Yoav Freund and
                  Robert E. Schapire},
  title        = {The Nonstochastic Multiarmed Bandit Problem},
  journal      = {{SIAM} J. Comput.},
  volume       = {32},
  number       = {1},
  pages        = {48--77},
  year         = {2002},
  url          = {https://doi.org/10.1137/S0097539701398375},
  doi          = {10.1137/S0097539701398375},
  timestamp    = {Wed, 14 Nov 2018 10:45:07 +0100},
  biburl       = {https://dblp.org/rec/journals/siamcomp/AuerCFS02.bib},
  bibsource    = {dblp computer science bibliography, https://dblp.org}
}

@InProceedings{vaswani2019old,
  title = 	 {Old Dog Learns New Tricks: Randomized UCB for Bandit Problems},
  author =       {Vaswani, Sharan and Mehrabian, Abbas and Durand, Audrey and Kveton, Branislav},
  booktitle = 	 {Proceedings of the Twenty Third International Conference on Artificial Intelligence and Statistics},
  pages = 	 {1988--1998},
  year = 	 {2020},
  editor = 	 {Chiappa, Silvia and Calandra, Roberto},
  volume = 	 {108},
  series = 	 {Proceedings of Machine Learning Research},
  month = 	 {26--28 Aug},
  publisher =    {PMLR},
  pdf = 	 {http://proceedings.mlr.press/v108/vaswani20a/vaswani20a.pdf},
  url = 	 {https://proceedings.mlr.press/v108/vaswani20a.html},
  abstract = 	 {We propose RandUCB, a bandit strategy that uses theoretically derived confidence intervals similar to upper confidence bound (UCB) algorithms, but akin to Thompson sampling (TS), uses randomization to trade off exploration and exploitation. In the $K$-armed bandit setting, we show that there are infinitely many variants of RandUCB, all of which achieve the minimax-optimal $\widetilde{O}(\sqrt{K T})$ regret after $T$ rounds. Moreover, in a specific multi-armed bandit setting, we show that both  UCB and TS can be recovered as special cases of RandUCB. For structured bandits, where each arm is associated with a $d$-dimensional feature vector and rewards are distributed according to a linear or generalized linear model, we prove that RandUCB achieves the minimax-optimal $\widetilde{O}(d \sqrt{T})$ regret even in the case of infinite arms. We demonstrate the practical effectiveness of RandUCB with experiments in both multi-armed and structured bandit settings. We show that RandUCB matches the empirical performance of TS while matching the theoretically optimal bounds of UCB algorithms, thus achieving the best of both worlds. }
}

@inproceedings{even2002pac,
author = {Even-Dar, Eyal and Mannor, Shie and Mansour, Yishay},
title = {PAC Bounds for Multi-armed Bandit and Markov Decision Processes},
year = {2002},
isbn = {354043836X},
publisher = {Springer-Verlag},
address = {Berlin, Heidelberg},
abstract = {The bandit problem is revisited and considered under the PAC model. Our main contribution in this part is to show that given n arms, it suffices to pull the arms  O (  n / 2 log 1/ ) times to find an -optimal arm with probability of at least 1 - . This is in contrast to the naive bound of  O (  n / 2 log  n / ). We derive another algorithm whose complexity depends on the specific setting of the rewards, rather than the worst case setting. We also provide a matching lower bound. We show how given an algorithm for the PAC model Multi-armed Bandit problem, one can derive a batch learningalg orithm for Markov Decision Processes. This is done essentially by simulatingV alue Iteration, and in each iteration invokingt he multi-armed bandit algorithm. Using our PAC algorithm for the multi-armed bandit problem we improve the dependence on the number of actions.},
booktitle = {Proceedings of the 15th Annual Conference on Computational Learning Theory},
pages = {255–270},
numpages = {16},
series = {COLT '02}
}

@inproceedings{chapelle2011empirical,
 author = {Chapelle, Olivier and Li, Lihong},
 booktitle = {Advances in Neural Information Processing Systems},
 editor = {J. Shawe-Taylor and R. Zemel and P. Bartlett and F. Pereira and K. Weinberger},
 pages = {},
 publisher = {Curran Associates, Inc.},
 title = {An Empirical Evaluation of Thompson Sampling},
 url = {https://proceedings.neurips.cc/paper_files/paper/2011/file/e53a0a2978c28872a4505bdb51db06dc-Paper.pdf},
 volume = {24},
 year = {2011}
}

@misc{gurobi,
  author = {{Gurobi Optimization, LLC}},
  title = {{Gurobi Optimizer Reference Manual}},
  year = 2023,
  url = "https://www.gurobi.com"
}

@article{mitchell2011pulp,
  title={PuLP: a linear programming toolkit for python},
  author={Mitchell, Stuart and OSullivan, Michael and Dunning, Iain},
  journal={The University of Auckland, Auckland, New Zealand},
  volume={65},
  year={2011}
}

@inproceedings{vaswani2017attention,
  author       = {Ashish Vaswani and
                  Noam Shazeer and
                  Niki Parmar and
                  Jakob Uszkoreit and
                  Llion Jones and
                  Aidan N. Gomez and
                  Lukasz Kaiser and
                  Illia Polosukhin},
  editor       = {Isabelle Guyon and
                  Ulrike von Luxburg and
                  Samy Bengio and
                  Hanna M. Wallach and
                  Rob Fergus and
                  S. V. N. Vishwanathan and
                  Roman Garnett},
  title        = {Attention is All you Need},
  booktitle    = {Advances in Neural Information Processing Systems 30: Annual Conference
                  on Neural Information Processing Systems 2017, December 4-9, 2017,
                  Long Beach, CA, {USA}},
  pages        = {5998--6008},
  year         = {2017},
  url          = {https://proceedings.neurips.cc/paper/2017/hash/3f5ee243547dee91fbd053c1c4a845aa-Abstract.html},
  timestamp    = {Thu, 21 Jan 2021 15:15:21 +0100},
  biburl       = {https://dblp.org/rec/conf/nips/VaswaniSPUJGKP17.bib},
  bibsource    = {dblp computer science bibliography, https://dblp.org}
}

@article{cifar,
  title={Learning multiple layers of features from tiny images},
  author={Krizhevsky, Alex and others},
  year={2009}
}

@inproceedings{helmbold1997some,
author = {Helmbold, David and Panizza, Sandra},
title = {Some label efficient learning results},
year = {1997},
isbn = {0897918916},
publisher = {Association for Computing Machinery},
address = {New York, NY, USA},
url = {https://doi.org/10.1145/267460.267502},
doi = {10.1145/267460.267502},
booktitle = {Proceedings of the Tenth Annual Conference on Computational Learning Theory},
pages = {218–230},
numpages = {13},
location = {Nashville, Tennessee, USA},
series = {COLT '97}
}

@InProceedings{kirschner2020information,
  title = 	 {Information Directed Sampling for Linear Partial Monitoring},
  author =       {Kirschner, Johannes and Lattimore, Tor and Krause, Andreas},
  booktitle = 	 {Proceedings of Thirty Third Conference on Learning Theory},
  pages = 	 {2328--2369},
  year = 	 {2020},
  editor = 	 {Abernethy, Jacob and Agarwal, Shivani},
  volume = 	 {125},
  series = 	 {Proceedings of Machine Learning Research},
  month = 	 {09--12 Jul},
  publisher =    {PMLR},
  pdf = 	 {http://proceedings.mlr.press/v125/kirschner20a/kirschner20a.pdf},
  url = 	 {https://proceedings.mlr.press/v125/kirschner20a.html},
  abstract = 	 { Partial monitoring is a rich framework for sequential decision making under uncertainty that generalizes many well known bandit models, including linear, combinatorial  and dueling bandits. We introduce {\em information directed sampling} (IDS) for stochastic partial monitoring with a linear reward and observation structure. IDS achieves adaptive worst-case regret rates that depend on precise observability conditions of the game. Moreover, we prove lower bounds that classify the minimax regret of all finite games into four possible regimes. IDS achieves the optimal rate in all cases  up to logarithmic factors, without tuning any hyper-parameters. We further extend our results to the contextual and the kernelized setting, which significantly increases the range of possible applications.}
}

@InProceedings{lattimore2019exploration,
  title = 	 {Exploration by Optimisation in Partial Monitoring},
  author =       {Lattimore, Tor and Szepesv{\'a}ri, Csaba},
  booktitle = 	 {Proceedings of Thirty Third Conference on Learning Theory},
  pages = 	 {2488--2515},
  year = 	 {2020},
  editor = 	 {Abernethy, Jacob and Agarwal, Shivani},
  volume = 	 {125},
  series = 	 {Proceedings of Machine Learning Research},
  month = 	 {09--12 Jul},
  publisher =    {PMLR},
  pdf = 	 {http://proceedings.mlr.press/v125/lattimore20a/lattimore20a.pdf},
  url = 	 {https://proceedings.mlr.press/v125/lattimore20a.html},
  abstract = 	 { We provide a novel algorithm for adversarial k-action d-outcome partial monitoring that is adaptive, intuitive and efficient. The highlight is that for the non-degenerate locally observable games, the n-round minimax regret is bounded by 6m k^(3/2) sqrt(n log(k)), where m is the number of signals. This matches the best known information-theoretic upper bound derived via Bayesian minimax duality. The same algorithm also achieves near-optimal regret for full information, bandit and globally observable games. High probability bounds and simple experiments are also provided.}
}

@inproceedings{komiyama2015regret,
author = {Komiyama, Junpei and Honda, Junya and Nakagawa, Hiroshi},
title = {Regret lower bound and optimal algorithm in finite stochastic partial monitoring},
year = {2015},
publisher = {MIT Press},
address = {Cambridge, MA, USA},
abstract = {Partial monitoring is a general model for sequential learning with limited feedback formalized as a game between two players. In this game, the learner chooses an action and at the same time the opponent chooses an outcome, then the learner suffers a loss and receives a feedback signal. The goal of the learner is to minimize the total loss. In this paper, we study partial monitoring with finite actions and stochastic outcomes. We derive a logarithmic distribution-dependent regret lower bound that defines the hardness of the problem. Inspired by the DMED algorithm (Honda and Takemura, 2010) for the multi-armed bandit problem, we propose PM-DMED, an algorithm that minimizes the distribution-dependent regret. PM-DMED significantly outperforms state-of-the-art algorithms in numerical experiments. To show the optimality of PM-DMED with respect to the regret bound, we slightly modify the algorithm by introducing a hinge function (PM-DMED-Hinge). Then, we derive an asymptotically optimal regret upper bound of PM-DMED-Hinge that matches the lower bound.},
booktitle = {Proceedings of the 29th International Conference on Neural Information Processing Systems - Volume 1},
pages = {1792–1800},
numpages = {9},
location = {Montreal, Canada},
series = {NIPS'15}
}

@inproceedings{vanchinathan2014efficient,
  author       = {Hastagiri P. Vanchinathan and
                  G{\'{a}}bor Bart{\'{o}}k and
                  Andreas Krause},
  editor       = {Zoubin Ghahramani and
                  Max Welling and
                  Corinna Cortes and
                  Neil D. Lawrence and
                  Kilian Q. Weinberger},
  title        = {Efficient Partial Monitoring with Prior Information},
  booktitle    = {Advances in Neural Information Processing Systems 27: Annual Conference
                  on Neural Information Processing Systems 2014, December 8-13 2014,
                  Montreal, Quebec, Canada},
  pages        = {1691--1699},
  year         = {2014},
  url          = {https://proceedings.neurips.cc/paper/2014/hash/0a113ef6b61820daa5611c870ed8d5ee-Abstract.html},
  timestamp    = {Mon, 16 May 2022 15:41:51 +0200},
  biburl       = {https://dblp.org/rec/conf/nips/VanchinathanBK14.bib},
  bibsource    = {dblp computer science bibliography, https://dblp.org}
}

@InProceedings{bartok2012CBPside,
author="Bart{\'o}k, G{\'a}bor
and Szepesv{\'a}ri, Csaba",
editor="Bshouty, Nader H.
and Stoltz, Gilles
and Vayatis, Nicolas
and Zeugmann, Thomas",
title="Partial Monitoring with Side Information",
booktitle="Algorithmic Learning Theory",
year="2012",
publisher="Springer Berlin Heidelberg",
address="Berlin, Heidelberg",
pages="305--319",
abstract="In a partial-monitoring problem in every round a learner chooses an action, simultaneously an opponent chooses an outcome, then the learner suffers some loss and receives some feedback. The goal of the learner is to minimize his (unobserved) cumulative loss. In this paper we explore a variant of this problem where in every round, before the learner makes his decision, he receives some side-information. We assume that the outcomes are generated randomly from a distribution that is influenced by the side-information. We present a ``meta'' algorithm scheme that reduces the problem to that of the construction of an algorithm that is able to estimate the distributions of observations while producing confidence bounds for these estimates. Two specific examples are shown for such estimators: One uses linear estimates, the other uses multinomial logistic regression. In both cases the resulting algorithm is shown to achieve {\$}{\backslash}widetilde O({\backslash}sqrt{\{}T{\}}){\$}minimax regret for locally observable partial-monitoring games.",
isbn="978-3-642-34106-9"
}

@inproceedings{bartokICML2012,
  author       = {G{\'{a}}bor Bart{\'{o}}k and
                  Navid Zolghadr and
                  Csaba Szepesv{\'{a}}ri},
  title        = {An adaptive algorithm for finite stochastic partial monitoring},
  booktitle    = {Proceedings of the 29th International Conference on Machine Learning,
                  {ICML} 2012, Edinburgh, Scotland, UK, June 26 - July 1, 2012},
  publisher    = {icml.cc / Omnipress},
  year         = {2012},
  url          = {http://icml.cc/2012/papers/846.pdf},
  timestamp    = {Wed, 03 Apr 2019 17:43:37 +0200},
  biburl       = {https://dblp.org/rec/conf/icml/BartokZS12.bib},
  bibsource    = {dblp computer science bibliography, https://dblp.org}
}

@article{bartok2014partial,
author = {Bart\'{o}k, G\'{a}bor and Foster, Dean P. and P\'{a}l, D\'{a}vid and Rakhlin, Alexander and Szepesv\'{a}ri, Csaba},
title = {Partial Monitoring—Classification, Regret Bounds, and Algorithms},
year = {2014},
issue_date = {November 2014},
publisher = {INFORMS},
address = {Linthicum, MD, USA},
volume = {39},
number = {4},
issn = {0364-765X},
url = {https://doi.org/10.1287/moor.2014.0663},
doi = {10.1287/moor.2014.0663},
abstract = {In a partial monitoring game, the learner repeatedly chooses an action, the environment responds with an outcome, and then the learner suffers a loss and receives a feedback signal, both of which are fixed functions of the action and the outcome. The goal of the learner is to minimize his regret, which is the difference between his total cumulative loss and the total loss of the best fixed action in hindsight. In this paper we characterize the minimax regret of any partial monitoring game with finitely many actions and outcomes. It turns out that the minimax regret of any such game is either zero or scales as T1/2, T2/3, or T up to constants and logarithmic factors. We provide computationally efficient learning algorithms that achieve the minimax regret within a logarithmic factor for any game. In addition to the bounds on the minimax regret, if we assume that the outcomes are generated in an i.i.d. fashion, we prove individual upper bounds on the expected regret.},
journal = {Math. Oper. Res.},
month = nov,
pages = {967–997},
numpages = {31},
keywords = {regret analysis, imperfect information, partial monitoring, repeated games}
}

@inproceedings{abbasi2011improved,
  author       = {Yasin Abbasi{-}Yadkori and
                  D{\'{a}}vid P{\'{a}}l and
                  Csaba Szepesv{\'{a}}ri},
  editor       = {John Shawe{-}Taylor and
                  Richard S. Zemel and
                  Peter L. Bartlett and
                  Fernando C. N. Pereira and
                  Kilian Q. Weinberger},
  title        = {Improved Algorithms for Linear Stochastic Bandits},
  booktitle    = {Advances in Neural Information Processing Systems 24: 25th Annual
                  Conference on Neural Information Processing Systems 2011. Proceedings
                  of a meeting held 12-14 December 2011, Granada, Spain},
  pages        = {2312--2320},
  year         = {2011},
  url          = {https://proceedings.neurips.cc/paper/2011/hash/e1d5be1c7f2f456670de3d53c7b54f4a-Abstract.html},
  timestamp    = {Mon, 16 May 2022 15:41:51 +0200},
  biburl       = {https://dblp.org/rec/conf/nips/Abbasi-YadkoriPS11.bib},
  bibsource    = {dblp computer science bibliography, https://dblp.org}
}

@inproceedings{kleinberg2003value,
  title={The value of knowing a demand curve: Bounds on regret for online posted-price auctions},
  author={Kleinberg, Robert and others},
  booktitle={44th Annual IEEE Symposium on Foundations of Computer Science, 2003. Proceedings.},
  pages={594--605},
  year={2003},
  organization={IEEE}
}

@inproceedings{kveton2019garbage,
  author       = {Branislav Kveton and
                  Csaba Szepesv{\'{a}}ri and
                  Sharan Vaswani and
                  Zheng Wen and
                  Tor Lattimore and
                  Mohammad Ghavamzadeh},
  editor       = {Kamalika Chaudhuri and
                  Ruslan Salakhutdinov},
  title        = {Garbage In, Reward Out: Bootstrapping Exploration in Multi-Armed Bandits},
  booktitle    = {Proceedings of the 36th International Conference on Machine Learning,
                  {ICML} 2019, 9-15 June 2019, Long Beach, California, {USA}},
  series       = {Proceedings of Machine Learning Research},
  pages        = {3601--3610},
  publisher    = {{PMLR}},
  year         = {2019},
  url          = {http://proceedings.mlr.press/v97/kveton19a.html},
  timestamp    = {Wed, 14 Aug 2024 10:20:09 +0200},
  biburl       = {https://dblp.org/rec/conf/icml/KvetonSVWLG19.bib},
  bibsource    = {dblp computer science bibliography, https://dblp.org}
}

@inproceedings{singla2014contextual,
  author       = {Adish Singla and
                  Ian Lienert and
                  G{\'{a}}bor Bart{\'{o}}k and
                  Andreas Krause},
  editor       = {Jeffrey P. Bigham and
                  David C. Parkes},
  title        = {Contextual Procurement in Online Crowdsourcing Markets},
  booktitle    = {Proceedings of the Second {AAAI} Conference on Human Computation and
                  Crowdsourcing, {HCOMP} 2014, November 2-4, 2014, Pittsburgh, Pennsylvania,
                  {USA}},
  pages        = {58--59},
  publisher    = {{AAAI}},
  year         = {2014},
  url          = {https://doi.org/10.1609/hcomp.v2i1.13195},
  doi          = {10.1609/HCOMP.V2I1.13195},
  timestamp    = {Mon, 20 Oct 2025 14:01:26 +0200},
  biburl       = {https://dblp.org/rec/conf/hcomp/SinglaLBK14.bib},
  bibsource    = {dblp computer science bibliography, https://dblp.org}
}

@inproceedings{kossen2021active,
  author       = {Jannik Kossen and
                  Sebastian Farquhar and
                  Yarin Gal and
                  Tom Rainforth},
  editor       = {Marina Meila and
                  Tong Zhang},
  title        = {Active Testing: Sample-Efficient Model Evaluation},
  booktitle    = {Proceedings of the 38th International Conference on Machine Learning,
                  {ICML} 2021, 18-24 July 2021, Virtual Event},
  series       = {Proceedings of Machine Learning Research},
  pages        = {5753--5763},
  publisher    = {{PMLR}},
  year         = {2021},
  url          = {http://proceedings.mlr.press/v139/kossen21a.html},
  timestamp    = {Wed, 25 Aug 2021 17:11:17 +0200},
  biburl       = {https://dblp.org/rec/conf/icml/KossenFGR21.bib},
  bibsource    = {dblp computer science bibliography, https://dblp.org}
}

@article{sherman_morrison,
  title={Adjustment of an Inverse Matrix Corresponding to a Change in One Element of a Given Matrix},
  author={Jack Sherman and Winifred J. Morrison},
  journal={Annals of Mathematical Statistics},
  year={1950},
  volume={21},
  pages={124-127},
  url={https://api.semanticscholar.org/CorpusID:123460064}
}

@article{kirschner2023linear,
author = {Kirschner, Johannes and Lattimore, Tor and Krause, Andreas},
title = {Linear partial monitoring for sequential decision making: algorithms, regret bounds and applications},
year = {2023},
issue_date = {January 2023},
publisher = {JMLR.org},
volume = {24},
number = {1},
issn = {1532-4435},
abstract = {Partial monitoring is an expressive framework for sequential decision-making with an abundance of applications, including graph-structured and dueling bandits, dynamic pricing and transductive feedback models. We survey and extend recent results on the linear formulation of partial monitoring that naturally generalizes the standard linear bandit setting. The main result is that a single algorithm, information-directed sampling (IDS), is (nearly) worst-case rate optimal in all finite-action games. We present a simple and unified analysis of stochastic partial monitoring, and further extend the model to the contextual and kernelized setting.},
journal = {J. Mach. Learn. Res.},
month = jan,
articleno = {346},
numpages = {45},
keywords = {sequential decision-making, linear partial monitoring, information-directed sampling, linear bandits}
}
\bibliographystyle{icml2024}

\appendix 
\onecolumn 
\section{Implementation details for CBP-based strategies}
\label{appendix:details}

\subsection{Pseudo-code of the randomization procedure}

In Section \ref{par:rand_procedure} we described textually the randomization procedure used in \RandCBP{} and \RandCBPsidestar{}. Algorithm \ref{alg:randomization_procedure} provides the pseudo-code for the described randomization procedure. The parameter $K$ corresponds to the number of bins in the discretized distribution in $[A,B]$; $\sigma$ corresponds to the variance of this distribution and $\epsilon$ to the probability of sampling the value $B$. 

\subsection{Pseudo-code for \CBPsidestar{} and \RandCBPsidestar{}
\label{appendix:pseudo_code} }

Algorithm~\ref{alg:RandCBPside} provides the pseudo-code of \CBPside{} as defined by \citet{lienert2013exploiting} and our proposed \RandCBPsidestar{}. Differences are highlighted in \textcolor{purple}{purple}. The strategies are instantiated with the set of Pareto optimal actions $\mathcal P$ (see Definition \ref{def:cell}), the set of neighbor pairs $\mathcal N$ (see Definition \ref{def:neighbor}), parameters $\eta_a$ for each action, the exploration parameter $\alpha>1$ and the decaying exploring rate $f(t)$. 

\begin{remark}
    Obtaining $\mathcal{P}(t)$ and $\mathcal{N}(t)$ at each round entails solving a computationally expensive optimization problem with evolving constraints. However, by caching the various half-spaces collected over time, the encountered problems can be buffered, significantly enhancing the overall computational complexity of the approach. In practice, Gurobi \citep{gurobi} or PULP \citep{mitchell2011pulp} can be used to solve the optimization problems.
\end{remark}

\begin{remark}
    In the contextual scenario, the update process of the inverse Gram matrix $G_{a,t}$ of action $a$ at time $t$ within \CBPside{} and \RandCBPsidestar{} can be efficiently implemented using the Sherman-Morrison update \citep{sherman_morrison} instead of relying on a costly matrix inversion operation.    
\end{remark}

\begin{algorithm}
\caption{Randomization Procedure}
\label{alg:randomization_procedure}

\SetKwInOut{Input}{Input}
\SetKwInOut{Output}{Output}

\Input{$A, B, K, \varepsilon, \sigma$}
\Output{Z}

\BlankLine

Initialize an array $\rho$ of size $K$ with equally spaced values in $[A,B]$\;

\For{$k \gets 1$ \KwTo $K$}{
    Calculate $\bar{p}_k$ using $\bar{p}_k = \exp\left(-\frac{\rho_k^2}{2 \sigma^2}\right)$\;
}

Initialize an array $p$ of size $K$\;

\For{$k \gets 1$ \KwTo $K-1$}{
    Calculate $p_k$ using $p_k = \frac{(1 - \varepsilon) \bar{p}_k}{\sum_k \bar{p}_k}$\;
}

Sample Z in $\rho$ with probabilities $p$ \;

\end{algorithm}

\begin{algorithm}
\SetKwInOut{Input}{input}
\Input{ $\mathcal P, \mathcal N, \alpha, f(\cdot), \eta_a, \color{purple}{K, \sigma, \varepsilon}  $ }
\DontPrintSemicolon
\caption{\CBPside{}
 \citep{lienert2013exploiting} and  \texttt{\textcolor{purple}{RandCPBside$^\star$}} }
\label{alg:RandCBPside}

$\#$ Notation $e(\cdot)$ is a $\sigma_{I_t}$-dimensional one-hot encoding. \;

Initialize $G_{a,0} = \lambda I_d$ for all $a = \{1, \dots, N\}$ \;

\For{$t = 1, 2, \dots, N$} {
     Receive side-information $x_t$ \;
     
     Play action $I_t = t$ \;
     
     Observe feedback $\textbf{H}[I_t, J_t]$ \;

    $X_{i, t} =X_{i, t-1} \cup \{x_t\} $ if $I_t=i$ else $X_{i, t} =X_{i, t-1}, \forall i$ \;

    $Y_{I_t, t} = Y_{I_t, t-1} \cup \{e(\textbf{H}[I_t, J_t])\}$ if $I_t=i$ else $Y_{i, t} =Y_{i, t-1}, \forall i$ \;

    Compute $G^{-1}_{I_t,t}$ (Sherman-Morrison update, \citet{sherman_morrison}) \;
    
    Update $\hat \theta_i(t) = Y_{i,t} X_{i,t}^\top (\lambda I_d + X_{i,t} X_{i,t}^\top)^{-1}$

     }
\For{$t > N$} {
    Initialize $\mathcal U(t) \gets \{\}$
    Receive side-information $x_t$ \;
    
    \For{$a = 1, \dots,N$} {
         $\hat \pi_a(x_t) = \hat \theta_a x_t$ \;

         \st{  $w_a(x_t) = \sigma_a \left(\sqrt{ d \log(t) + 2 \log(1/t^2) } + \sigma_a \right) \| x_t \|_{G_{a,t}^{-1}} $ }
         
         \textcolor{purple}{ $B = \sqrt{ d \log(t) + 2 \log(1/t^2) }$ } \;
         
         \textcolor{purple}{Sample $Z_{a,t}$, according to Algorithm \ref{alg:randomization_procedure} } \;
         
         \textcolor{purple}{ $w'_a(x_t) = \sigma_a \left(Z_{a,t} + \sigma_a \right) \| x_t \|_{G_{a,t}^{-1}} $  } \;
         
         }
         
    \For{each neighbor pair $\{i,j\} \in \mathcal N$} {

         $\hat \delta_{i,j}(x_t) = \sum_{a \in V_{i,j} } v_{ija}^\top \hat \pi_a(x_t)$ \;
         
         \st{ $c_{i,j}(x_t) \gets \sum_{a\in V_{i,j}} \|v_{ija}\|_2 w_a(x_t)$ } \;
          
          $\color{purple}{ c'_{ij}(x_t) \gets  \sum_{a \in V_{i,j}} \| v_{ija} \|_2 w'_{a}(x_t)  }$  \;
        
        \uIf{$| \hat \delta_{i,j}(x_t) | >$ \st{ $c_{i,j}(x_t)$ }  $\color{purple}{ c'_{i,j}(x_t) } $ }{

         Add pair $\{i,j\}$ to $\mathcal U(t)$ \; }

    }     
    Compute $D(t)$ based on $\mathcal U(t)$ \;
    
    Get $\mathcal P(t)$ and $\mathcal N(t)$ from $\mathcal P$, $\mathcal N$ and $D(t)$ \;
    
    $\mathcal N^{+}(t) \gets \bigcup_{ {ij} \in \mathcal N(t) }  N^{+}_{ij} $ \;
    
    $\mathcal V(t) \gets \bigcup_{ {ij} \in \mathcal N(t) } V_{ij} $ \;
    
    \st{ $\mathcal R(t) \gets \{ a \in \{1, \dots, N\}  : n_a(t) \leq \eta_a f(t) \} $ } \;
    
    $\color{purple}{\mathcal R(x_t) \gets \{ a \in \{1, \dots, N\} : 1/ \| x_t \|^2_{G_{a,t}^{-1}} <   \eta_a f(t)  \} }$  \;
    
    $\mathcal S(t) \gets \mathcal P(t) \cup \mathcal N^{+}(t) \cup ( \mathcal V(t)  \cap \mathcal  R(x_t)   ) $ \;
    
    \st{Play $I_t = \argmax_{ a \in \mathcal S(t) } W_{a} w_a(x_t) $ } \;
    
    \textcolor{purple}{Play} $\color{purple}{ I_t = \argmax_{ a \in \mathcal S(t) } W_{a}  w'_a(x_t) }$ \label{line:decision_rule_RandCBPside}\;
    
    Observe feedback $\textbf{H}[I_t, J_t]$ \;

    $X_{i, t} =X_{i, t-1} \cup \{x_t\} $ if $I_t=i$ else $X_{i, t} =X_{i, t-1}$ \;

    $Y_{I_t, t} = Y_{I_t, t-1} \cup \{e(\textbf{H}[I_t, J_t])\}$ if $I_t=i$ else $Y_{i, t} =Y_{i, t-1}$ \;

    Compute $G^{-1}_{I_t,t}$ (Sherman-Morrison update, \citet{sherman_morrison}) \;
    
    Update $\hat \theta_i(t) = Y_{i,t} X_{i,t}^\top (\lambda I_d + X_{i,t} X_{i,t}^\top)^{-1}$
    }
\end{algorithm}

\section{Partial Monitoring Games}
\label{appendix:games}

In this Appendix, we analyse the Apple Tasting \citep{helmbold2000apple},  Label Efficient \citep{helmbold1997some}, and $\tau$-detection games presented in the main paper. The analysis is necessary to implement partial monitoring environments and strategies based on these games.

\subsection{Characterizing a partial monitoring game}

A game is easy or hard depending on whether it verifies the \textit{global observability} or \textit{local observability} condition. \textit{Easy} games refer to games that are locally observable while \textit{hard} games verify the global observability condition but are not locally observable. 
    
\begin{definition}[Global observability, \citet{piccolboni2001discrete}]
    A partial-monitoring game with $\textbf{L}$ and $\textbf{H}$ admits the \textit{global observability} condition, if all pairs $\{i, j\}$ verify $L_i^\top - L_j^\top \in \oplus_{1 \leq a \leq N} \text{Im} (S_a^\top)$.
    \label{def:global_observability}
\end{definition}
    
\begin{definition}[Local observability, \citet{bartokICML2012}]
    A pair of neighbor actions $i, j$ is \textit{locally observable} if $L_i^\top - L_j^\top \in \oplus_{a \in N^+_{i,j}} \text{Im} (S_a^\top)$. 
    We denote by $\mathcal L \subset \mathcal N$ the set of locally observable pairs of actions (the pairs are unordered). 
    A game satisfies the local observability condition if every pair of neighbor actions is locally observable, i.e., if $\mathcal L = \mathcal N$.
    \label{def:local_observability}
\end{definition}

\begin{remark}
When a pair is locally observable, we have $V_{ij} = N^+_{ij}$. For non-locally observable pairs, $V_{ij} = \{ 1, \dots, N\}$ is always a valid set \citet{bartokICML2012}.    
\end{remark}

\subsection{Apple Tasting game}
\label{appendix:AT}
The Apple Tasting game is defined by the following loss and feedback matrices: 
$$ \bf{L}=\kbordermatrix{
    & \text{A} & \text{B}\\
      \text{action 1}  & 1 & 0 \\
      \text{action 2} & 0 & 1},\quad \bf{H}= \kbordermatrix{
    &\text{A} & \text{B}\\
     \text{action 1} & \bot & \bot \\
     \text{action 2} & \wedge & \odot }. $$

This game has two possible actions and $N=2$ actions and $M=2$ outcomes (denoted $A$ and $B$). 

\paragraph{Signal Matrices:} Signal matrices are such that $S_1 \in \{0,1\}^{1\times2}$ and $S_2 \in \{0,1\}^{2\times2}$. The matrices verify: 
$$ S_1 = \begin{bmatrix} 1 & 1   \end{bmatrix},  \quad
S_2 = \begin{bmatrix} 1 & 0 \\ 0 & 1  \end{bmatrix} $$

The outcome distribution is denoted $p^\star = [p_A, p_B]^\top$. 
\begin{itemize}
   \item $\pi^\star_1 =  S_1 p^\star  =  \begin{bmatrix} 1 & 1   \end{bmatrix}  \begin{bmatrix} p_A \\ p_B   \end{bmatrix}  = 1$, there is only one feedback symbol ($ \perp$) induced by action $1$ therefore the probability of seeing this feedback symbol is always 1.
    \item $\pi^\star_2 =  S_2  p^\star =  \begin{bmatrix} 1 & 0 \\ 0 & 1  \end{bmatrix} \begin{bmatrix} p_A \\ p_B   \end{bmatrix} = \begin{bmatrix} p_A \\ p_B   \end{bmatrix}$ , therefore, the probability of seeing feedback $\wedge$ is $p_A$ and the probability of seeing $\odot$ is $p_B$.
\end{itemize}

\paragraph{Cells:} This game has 2 actions, each associated to a sub-space of the probability simplex: 
\begin{itemize}
    \item For action 1, we have:  $\mathcal C_1 = \{ p \in \Delta_M, \forall j \in \{1, \dots, N\}, (L_1-L_j) p \leq 0 \}$. This probability space corresponds to the following constraints: 
    $$   \begin{bmatrix} L_1-L_1 \\ L_1-L_2  \end{bmatrix} p  =  \begin{bmatrix} 0 & 0 \\ 1 & -1 \end{bmatrix} p  \leq 0 $$ 
    The first constraint $ (L_1-L_1) p \leq 0$ is always verified. The second constraint $ (L_1-L_2) p \leq 0$ implies $p_A - p_B \leq 0$.
    \item For action 2, we have:  $\mathcal C_2 = \{ p \in \Delta_M, \forall j \in \{1, \dots, N\}, (L_2-L_j) p \leq 0 \}$. This probability space corresponds to the following constraints: 
    $$  \begin{bmatrix} L_2-L_1 \\ L_2-L_2  \end{bmatrix} p  =  \begin{bmatrix} -1 & 1 \\ 0 & 0 \end{bmatrix} p  \leq 0 $$
    The second constraint $  (L_2-L_2) p \leq 0$ is always verified. The first constraint $ (L_2-L_1) p \leq 0$ implies $p_B - p_A \leq 0$.
\end{itemize}
Action 1 is optimal when the outcome distribution verifies $p_A - p_B \leq 0$ whereas it is the opposite for action 2.

\paragraph{Pareto optimal actions:} The cell respective to each action is neither empty nor included one in another. Therefore, according to Definition \ref{def:cell}, both actions $1$ and $2$ are Pareto optimal, i.e. $\mathcal P = \{ 1, 2\}$

\paragraph{Neighbor actions:} The space corresponding to $\mathcal{C}_1\cap \mathcal{C}_2$ includes only one unique point, being $\begin{bmatrix} 0.5 & 0.5   \end{bmatrix}$. Therefore, $\dim(C_1 \cap C_2) = 0 = M-2$, which satisfies Definition \ref{def:neighbor}. This implies that actions 1 and 2 are neighbors, i.e. $\mathcal N =\{ \{1, 2 \}, \}$. 

\paragraph{Neighbor action set:} This set includes: $N^+_{12} = N^+_{21} = [1,2]$.

\paragraph{Observability of the game:} We need to calculate $\text{Im}(S_1)$ and $\text{Im}(S_2)$: \begin{itemize}
    \item For $\text{Im}(S_1^\top)$ we have:
 $  \begin{bmatrix} 1 \\ 1  \end{bmatrix} \begin{bmatrix} x & y  \end{bmatrix} =  x + y = \begin{bmatrix} 1   \end{bmatrix} (x + y)  $
    \item For $\text{Im}(S_2^\top)$ we have:
$\begin{bmatrix} 1 & 0 \\ 0 & 1  \end{bmatrix} \begin{bmatrix} x \\  y \end{bmatrix} = \begin{bmatrix} x\\  y \end{bmatrix} = \begin{bmatrix} 1 \\ 0  \end{bmatrix} x + \begin{bmatrix} 0 \\ 1  \end{bmatrix} y $
\end{itemize}

Resulting in: $ \text{Im}(S_1^\top) \bigoplus \text{Im}(S_2^\top) = \text{Span}( \begin{bmatrix} 1 \\  0 \end{bmatrix}, \begin{bmatrix} 0 \\ 1 \end{bmatrix}, [1])$.  \\
The action pair $\{1,2\}$ is locally observable because $L_1^\top-L_2^\top = \begin{bmatrix} 1 &  -1 \end{bmatrix}^\top$ can be expressed from the set of vectors included in $\text{Im}(S_1) \bigoplus \text{Im}(S_2)$. We can conclude that the game is globally and locally observable. Therefore, it can be classified as an \textit{easy game}.

\paragraph{Observer set:} The pair $\{1,2\}$ is locally observable. According to Definition \ref{def:local_observability}, we have: $V_{12} = N_{12}^+ = [1,2]$. The pair of actions $2$ and $1$ is also  locally observable therefore $V_{21} = N_{21}^+ = \{1,2\}$.

\paragraph{Observer vector:} For the pair of actions $1$ and $2$, we have to find $v_{ija}, a \in V_{ij}$ such that $L_1^\top - L_2^\top = \sum_{a \in V_{ij} } S_i^T v_{ija}$, according to Definition \ref{def:observer_vector}. Choosing and $v_{121} = 0$ and $v_{122}^\top = \begin{bmatrix} 1 & -1 \end{bmatrix}$ verifies the relation: 

\begin{equation}
    L_1^\top - L_2^\top =   \begin{bmatrix} 1 \\ -1 \end{bmatrix} =  \langle \begin{bmatrix} 1 \\  1 \end{bmatrix}, 0 \rangle + \begin{bmatrix} 1 & 0 \\ 0 & 1  \end{bmatrix} \begin{bmatrix} 1 \\ -1 \end{bmatrix}  
\end{equation}

It suffices to reproduce the same procedure for pair of actions $2$ and $1$.

\subsection{Label Efficient game}
\label{appendix:LE}

The Label Efficient game \citep{helmbold1997some} is defined by the following loss and feedback matrices: \\

$$\bf{L}=\kbordermatrix{ & \text{A} & \text{B}\\
        \text{action 1} & 1 & 1\\
        \text{action 2} & 1 & 0\\
        \text{action 3} & 0 & 1},\quad \bf{H}= \kbordermatrix{
        & \text{A} & \text{B}\\
        \text{action 1} & \bot & \odot\\
        \text{action 2} & \wedge & \wedge\\
        \text{action 3} & \wedge & \wedge}. $$

The game includes a set of $N=3$ possible actions and $M=2$ possible outcomes (denoted $A$ and $B$).

\paragraph{Signal Matrices:} The dimension of the signal matrices are such that $S_1 \in \{0,1\}^{2\times2}$, $S_2 \in \{0,1\}^{1\times2}$ and $S_3 \in \{0,1\}^{1\times2}$. The matrices verify:  \\
$$ S_1 = \begin{bmatrix} 1 & 0 \\ 0 & 1  \end{bmatrix} , \quad  
S_2 =  \begin{bmatrix} 1 & 1   \end{bmatrix}, \quad S_3 =  \begin{bmatrix} 1 & 1   \end{bmatrix} $$

The outcome distribution is noted $p^\star = [p_A, p_B]^\top$.

\paragraph{Cells:} Each action can be associated to a sub-space of the probability simplex noted \textit{cell} (see Definition \ref{def:cell}): 
\begin{itemize}
    \item For action 1, we have:  $\mathcal C_1 = \{ p \in  \Delta_M, \forall j \in \{1,\dots, N\}, (L_1-L_j) p \leq 0 \}$. 
    This probability space corresponds to the following constraints: 
    $$   \begin{bmatrix} L_1-L_1 \\ L_1-L_2 \\ L_1-L_3 \end{bmatrix}  p    =   \begin{bmatrix} 0 & 0 \\ 0 & 1 \\ 1 & 0 \end{bmatrix}  p    \leq 0 $$
    The first constraint $   (L_1-L_1) p    \leq 0$ is always verified. The second constraint $   (L_1-L_2) p    \leq 0$ implies $ p_B \leq 0$ and the third constraint $   (L_1-L_3) p    \leq 0$ implies $ p_A \leq 0$. 
    There exist no probability vector in $\Delta_M$ satisfying these three constraints at the same time. 
    \item For action 2, we have:  $\mathcal C_2 = \{ p \in  \Delta_M, \forall j \in \{1,\dots, N\}, (L_2-L_j) p \leq 0 \}$. This probability space corresponds to the following constraints: 
    $$   \begin{bmatrix} L_2-L_1 \\ L_2-L_2 \\ L_2-L_3 \end{bmatrix} p    =   \begin{bmatrix} 0 & -1 \\ 0 & 0 \\ 1 & -1 \end{bmatrix} p    \leq 0 $$
    The second constraint $   (L_2-L_2) p    \leq 0$) is always verified. The first constraint $   (L_2-L_1) p    \leq 0$ implies $-p_B \leq 0 \iff p_B \geq 0$. The third constraint $  (L_2-L_3) p    \leq 0$ implies $p_A-p_B \leq 0 \iff p_A \leq p_B$.
    \item For action 3, we have:  
    $\mathcal C_3 = \{ p \in  \Delta_M, \forall j \in \{1,\dots, N\}, (L_3-L_j) p \leq 0 \}$. This probability space corresponds to the following constraints: 
    $$  \begin{bmatrix} L_3-L_1 \\ L_3-L_2 \\ L_3-L_3  \end{bmatrix} p    =   \begin{bmatrix} -1 & 0 \\ -1 & 1 \\ 0 & 0 \end{bmatrix} p    \leq 0 $$
    The third constraint $  (L_3-L_3) p    \leq 0$ is always satisfied. The second constraint $  (L_3-L_1) p   \leq 0$ implies $-p_A+p_B \leq 0 \iff p_B \geq p_A$. The first constraint ($ (L_3-L_1) p \leq 0$) implies $-p_A \leq 0 \iff p_A \geq 0$.
\end{itemize}

\paragraph{Pareto optimal actions:} From the analysis of the cells, we have $\mathcal{C}_{1} = \emptyset$. Therefore, action 1 is dominated, according to Definition \ref{def:cell}. The remaining actions 2 and 3 are Pareto optimal because their respective cells are not included in one another, i.e. $\mathcal P = \{2,3\}$.

\paragraph{Neighboring actions:} In this paragraph, we will determine whether action 2 and 3 are a neighbor pair.

$$ \mathcal{C}_1 \cap \mathcal{C}_2 = \begin{cases}  p_2\geq 0 \\  p_1  \leq p_2 \\  p_2 \leq p_1 \\ p_1 \geq 0 \end{cases} $$
The only point in this vector space is $\begin{bmatrix} 0.5 & 0.5   \end{bmatrix}^\top$. Therefore, $\dim(\mathcal C_1 \cap \mathcal C_2) = 0 = M-2$ and the pair $\{2,3\}$ is a neighbor pair, i.e. $\mathcal N = \{ \{2,3\}, \}$.

\paragraph{Neighborhood action set:} This set is defined as $N^+_{ij} = \{ k \in \{ 1, \dots, N\}, \mathcal C_i \cap \mathcal C_j \subset C_k \}$. This yields: $N^+_{23} = N^+_{32} = [2,3]$ because the cell of action $1$ is empty.

\paragraph{Observability of the game:} 
We will determine whether this game is globally and/or locally observable (see Definitions \ref{def:local_observability} and \ref{def:global_observability}). 
We need to calculate $\text{\text{Im}}(S_1^\top)$, $\text{\text{Im}}(S_2^\top)$ and $\text{\text{Im}}(S_3^\top)$ according to the definition of local and global observability: 
\begin{itemize}
    \item For $\text{\text{Im}}(S_1^\top)$ we have:
$   \begin{bmatrix} 1 & 0 \\ 0 & 1  \end{bmatrix},  \begin{bmatrix} x \\  y \end{bmatrix}    =  \begin{bmatrix} x \\  y \end{bmatrix}  =  x \begin{bmatrix} 1 \\  0 \end{bmatrix}  + y  \begin{bmatrix} 0 \\ 1 \end{bmatrix}  $
    \item For $\text{\text{Im}}(S_2^\top) = \text{\text{Im}}(S_3^\top)$ we have:
$   \begin{bmatrix} 1 \\ 1  \end{bmatrix}, \begin{bmatrix} x & y  \end{bmatrix}    =  x + y = \begin{bmatrix} 1   \end{bmatrix} (x + y)  $
\end{itemize}

We have $ \bigoplus_{1 \leq i \leq N} \text{Im}(S_i^\top) = \text{Span}( \begin{bmatrix} 1 \\  0 \end{bmatrix}, \begin{bmatrix} 0 \\ 1 \end{bmatrix}, [1])$.
 
The pair $\{2,3\}$ is not locally observable because it is not possible to express $L_2^\top-L_3^\top$ from $ \bigoplus_{i \in N_{23}^+} \text{Im}(S_i^\top)= \text{Span}( [1])$. On the contrary, it is possible to express $L_2^\top-L_3^\top$ from $ \bigoplus_{1 \leq i \leq N} \text{Im}(S_i^\top) = \text{Span}( \begin{bmatrix} 1 \\  0 \end{bmatrix}, \begin{bmatrix} 0 \\ 1 \end{bmatrix}, [1])$. We can conclude that the game is not locally observable and that the pair $\{2,3\}$ is globally observable. Therefore, the Label Efficient game belongs to the class of hard games. 

\paragraph{Observer set:} The pair $\{2,3\}$ is not locally observable. According to Definition \ref{def:local_observability}, we have: $V_{23} = \{ 1, \dots, N\}$ same applies to $V_{32} = \{ 1, \dots, N\}$.

\paragraph{Observer vector:} For the pair $\{2,3\}$, we have to find $v_{ija}, a \in V_{ij}$ such that $L_2^\top - L_3^\top = \sum_{a \in V_{ij} } S_i^T v_{ija}$, according to Definition \ref{def:observer_vector}. Choosing and $v_{231}^\top = \begin{bmatrix} -1 & 1 \end{bmatrix}$, $v_{232} = 0$ and $v_{233} = 0$ verifies the relation: 

\begin{equation}
    L_2^\top - L_3^\top =   \begin{bmatrix} -1 \\ 1 \end{bmatrix} =    \begin{bmatrix} 1 & 0 \\ 0 & 1  \end{bmatrix} \begin{bmatrix} -1 \\ 1 \end{bmatrix}    +   \begin{bmatrix} 1 \\ 1 \end{bmatrix} 0     +   \begin{bmatrix} 1 \\ 1 \end{bmatrix} 0    
\end{equation}

It suffices to reproduce the same procedure for pair of actions $3$ and $2$.

\subsection{$\tau$-detection game}
\label{appendix:detection_game}
Let us consider the $\tau$-detection game, with $\tau \in ]0,1[$. The game is defined by the following loss and feedback matrices: 
$$ \bf{L}=\kbordermatrix{
    & \text{A} & \text{B}\\
      \text{action 1}  & 1 & 1 \\
      \text{action 2} & 1/\tau &  0 },\notag  \\
    \bf{H}= \kbordermatrix{
    &\text{A} & \text{B}\\
     \text{action 1} & \wedge & \odot  \\
     \text{action 2} & \bot & \bot  }.$$

This game includes a set of $N=2$ possible actions and $M=2$ possible outcomes (denoted $A$ and $B$).

\paragraph{Signal Matrices:} The dimension of the signal matrices are such that $S_1 \in \{0,1\}^{2\times2} $ and $S_2 \in \{0,1\}^{1\times2}$. The matrices verify: 
$$ S_1 = \begin{bmatrix} 1 & 0 \\ 0 & 1  \end{bmatrix} , \quad  
S_2 =  \begin{bmatrix} 1 & 1   \end{bmatrix} $$

Consider a general instance of the problem where the outcome distribution is $p^\star = [p_A, p_B]^\top$. 

\paragraph{Cells:} This game has two actions, each can be associated to a cell: 
\begin{itemize}
    \item For action 1, we have:  $\mathcal C_1 = \{ p \in  \Delta_M, \forall j \in \{ 1, \dots, N \}, (L_1-L_j) p \leq 0 \}$. 
    This probability space corresponds to the following constraints: 
    $$  \begin{bmatrix} L_1-L_1 \\ L_1-L_2  \end{bmatrix}  p  =  \begin{bmatrix} 0 & 0 \\ 1-1/\tau & 1 \end{bmatrix}  p  \leq 0 $$
    The first constraint $  (L_1-L_1) p  \leq 0$ is always verified. The second constraint $  (L_1-L_2) p  \leq 0$ implies $ p_A (2-1/\tau) \leq 1$. 
    \item For action 2, we have:  $\mathcal C_2 = \{ p \in  \Delta_M, \forall j \in \{ 1, \dots, N \}, (L_2-L_j) p \leq 0 \}$. This probability space corresponds to the following constraints: 
    $$  \begin{bmatrix} L_2-L_1 \\ L_2-L_2  \end{bmatrix} p  = \begin{bmatrix} 1/\tau -1 & -1 \\ 0 & 0 \end{bmatrix} p  \leq 0 $$
    The second constraint $ (L_2-L_2) p \leq 0$) is always verified. The first constraint $  (L_2-L_1) p \leq 0$ implies $p_A \leq \tau$.
\end{itemize}

\paragraph{Pareto optimal actions:} The cell respective to each action is neither empty nor included one in another. Therefore, according to Definition \ref{def:cell}, both actions $1$ and $2$ are Pareto optimal, i.e. $\mathcal P = \{ 1, 2\}$

\paragraph{Neighboring actions:} For values of $\tau \in ]0,1[$, $\mathcal{C}_1\cap \mathcal{C}_2 = \emptyset$. Therefore, $\dim(C_1 \cap C_2) = 0$, which satisfies the definition \ref{def:neighbor}. This implies that actions 1 and 2 are neighboring actions, i.e. $\mathcal N =\{ \{1, 2 \}, \}$.

\paragraph{Neighborhood action set:} This set is defined as $N^+_{ij} = \{ k  \in \{ 1, \dots, N \}, \mathcal C_i \cap \mathcal C_j \subset C_k \}$. This yields: $N^+_{12} = N^+_{21} = [1,2]$.

\paragraph{Observability of the game:} In this paragraph, we will determine whether this game is globally and/or locally observable (see Definitions \ref{def:local_observability} and \ref{def:global_observability}). We need to calculate $\text{Im}(S_1^\top)$ and $\text{Im}(S_2^\top)$, according to the definition of local and global observability. \begin{itemize}
    \item For $\text{Im}(S_1^\top)$ we have:
$  \begin{bmatrix} 1 & 0 \\ 0 & 1  \end{bmatrix}  \begin{bmatrix} x \\  y \end{bmatrix}  =  \begin{bmatrix} x \\  y \end{bmatrix}  =  x \begin{bmatrix} 1 \\  0 \end{bmatrix}  + y  \begin{bmatrix} 0 \\ 1 \end{bmatrix}  $
    \item For $\text{Im}(S_2^\top)$ we have:
 $  \begin{bmatrix} 1 & 1  \end{bmatrix} \begin{bmatrix} x \\ y  \end{bmatrix}  = x + y = (x+y) \times [1]  $
\end{itemize}
Resulting in: $ \text{Im}(S_1^\top) \bigoplus \text{Im}(S_2^\top) = \text{Span}( \begin{bmatrix} 1 \\  0 \end{bmatrix}, \begin{bmatrix} 0 \\ 1 \end{bmatrix}, [1])$.  \\
The action pair $\{ 1, 2 \}$ is locally observable because $L_1^\top-L_2^\top = \begin{bmatrix} 1-1/\tau & 1  \end{bmatrix}$ can be expressed from the set of basis vectors included in $\text{Im}(S_1) \bigoplus \text{Im}(S_2)$ (see Definition \ref{def:local_observability}). Since this also applies to the pair $\{2,1\}$, we can conclude that the game is globally and locally observable. Therefore, it can be classified as an \textit{easy game}.

\paragraph{Observer set:} The pair $\{1,2\}$ is locally observable. According to the definition \ref{def:observer_set}, we have: $V_{12} = N_{12}^+ = [1,2]$. The pair $\{2,1\}$ being also locally observable, we have $V_{21} = N_{21}^+ = \{1,2\}$.

\paragraph{Observer vector:} For the pair $\{1,2\}$, we have to find $v_{ija}, a \in V_{ij}$ such that $L_1^\top - L_2^\top = \sum_{a \in V_{ij} } S_i^T v_{ija}$. Choosing and $v_{121} = 0$ and $v_{122}^\top = \begin{bmatrix} 1 & -(1-b_{opt}) \end{bmatrix}$ verifies the relation: 

\begin{equation}
    L_1^\top - L_2^\top =   \begin{bmatrix} 1-1/\tau \\ 1  \end{bmatrix} =  \begin{bmatrix} 1 \\  1 \end{bmatrix} 0  + \begin{bmatrix} 1 & 0 \\ 0 & 1  \end{bmatrix} \begin{bmatrix} 1 \\ -(1-b_{opt}) \end{bmatrix}, 
\end{equation}
where $b_{opt}$ satisfies the constraint $1/b_{opt}-\tau = 0$

\section{Regret analysis of \RandCBP{} }
\label{appendix:regretRandCBP}
In this section, we provide an upper bound on the expected regret of \RandCBP{}. 
The incidence of randomization on \textit{Upper confidence bound} strategies was characterized by \citet{kveton2019garbage} and \citet{vaswani2019old}. 
CBP-based strategies belong instead to the class of \textit{Successive Elimination} strategies, which utilize both upper and lower confidence bounds.

Let $\delta_{i} = \max_{1 \leq j \leq N} \delta_{ij}$ be the sub-optimality gap between the expected loss of action $i$ and the optimal action. Similarly to \citet{bartok2012CBPside}, define $g_i$ as
\begin{equation}
g_i = \max_{ \mathcal P', \mathcal N' \in \Psi, i \in \mathcal P' } \min_{ \pi \in B_i(\mathcal N'), \pi = (i_0, \dots, i_r) } \sum_{s=1}^r | V_{i_{s-1},i_s} |   
\end{equation}
where $\Psi$ corresponds to the set of plausible configurations and $B_i(\mathcal N')$ the set of possible paths. The quantity $g_i$ is correlated with the number of actions $N$.

\subsection{Regret decomposition of RandCBP}
Assuming action $1$ is optimal:
\begin{align}
  \mathbb{E} [ R(T) ]  & = \mathbb{E} [ \sum_{ t = 1 }^T (L_{I_t} - L_{1}) p^\star   ]  \\
                       & = \sum_{ k = 1 }^N \mathbb{E} [ n_k(T) ] \delta_k 
\end{align}
\noindent
The goal is to bound $\mathbb{E} [ n_k(T) ]$. 
Define the event $\mathcal E_{t}$: "the confidence interval succeeds"\footnote{We reverse the notation used in \citet{bartokICML2012}.}. Formally,
$\mathcal E_{t} = \{ | \hat \delta_{i,j}(t) - \delta_{i,j} | \leq c_{i,j}(t) \}$.
The event $\mathcal E_{t}$ induces the following decomposition:
\begin{align}
  \mathbb{E} [ n_k(T) ]  & = \mathbb{E} [ \sum_{ t = 1 }^T \mathds{1}_{ \{ I_t=k \} } ]   \\
  & = \underbrace{  \mathbb{E} [ \sum_{ t = 1 }^T \mathds{1}_{ \{ I_t=k, \mathcal E_{t}  \} } ] }_{A_k} + \underbrace{  \mathbb{E} [ \sum_{ t = 1 }^T \mathds{1}_{\{ I_t=k,  \mathcal E^c_{t} \} } ]  }_{B_k} 
\end{align}
The regret can thus be expressed as:
\begin{equation}
    \mathbb{E}[ R(T) ] = \sum_{k=1}^{N} \delta_k A_k + \delta_k B_k 
\label{eq:last_step_proof}
\end{equation}

To obtain an upper bound on the regret of  \RandCBP{}, we need to upper bound the terms $A_k$ and $B_k$. The bound of $A_k$ is reported in Section \ref{subsection:boundingA}. The bound of $B_k$ is reported in Section \ref{subsection:boundingB}. 
The theorem that follows is obtained by combining Eq. \ref{eq:last_step_proof} and the analyses from Sections Section \ref{subsection:boundingA} and \ref{subsection:boundingB}.

\begin{theorem}

Consider the randomization over $K$ bins in the interval $[A,B]$, a probability $\epsilon$ on the tail and a standard deviation $\sigma$. Setting $\eta_a = W_k^{2/3}$, $ f(t) = \alpha^{ 1/3}t^{2/3} \log^{1/3}(t)$ and, with the notations $W = \max_{1\leq a \leq N} W_a$, $\mathcal V = \bigcup_{i,j \in \mathcal N} V_{i,j}$, and $N^+ =\bigcup_{i,j \in \mathcal N} N^+_{i,j}$, we obtain:
\begin{dmath}
        \mathbb{E}[ R_T ] \leq \quad \sum_{1 \leq k \leq N} \left[ 2 (1 + \frac{1}{2\alpha-2}) | \mathcal V | + 1 \right] +
                                \sum_{ k = 1 }^N \delta_k +  \\ \sum_{ k = 1, \delta_k>0 }^N  4 W_k^2 \frac{g_k^2}{\delta_k}\alpha \log(T) + \\
                                \sum_{k \in \mathcal V \backslash N^+} \delta_k \min (  4 W_k^2 \frac{g^2_{l(k)} }{ \delta^2_{l(k) } } \alpha \log(T),  \alpha^{1/3} W_k^{2/3} T^{2/3} \log^{1/3}(T)  ) + \\ 
                                \sum_{k \in \mathcal V \backslash N^+} \delta_k  \alpha^{1/3} W_k^{2/3} T^{2/3} \log^{1/3}(T)  + 2 g_k  \alpha^{1/3} W^{2/3} T^{2/3} \log^{1/3}(T)
\end{dmath}
\label{thm:non_contextual}
\end{theorem}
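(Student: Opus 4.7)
The plan is to mirror the CBP analysis of \citet{bartokICML2012}, adapting each step to accommodate the randomized confidence width $c'_{ij}(t) = \sum_{a \in V_{ij}} \|v_{ija}\|_\infty Z_{ijt}/\sqrt{n_a(t)}$. First I would reuse the same regret decomposition sketched in Appendix~\ref{appendix:regretRandCBP}: write $\mathbb{E}[R(T)] = \sum_k \delta_k \mathbb{E}[n_k(T)]$, and split $\mathbb{E}[n_k(T)] = A_k + B_k$ according to whether the randomized successive-elimination event $\mathcal{E}_t = \{|\hat\delta_{ij}(t) - \delta_{ij}| \leq c'_{ij}(t)\}$ holds. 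The remainder of the proof consists of bounding $A_k$ and $B_k$ separately, then aggregating over the action categories $k \in \mathcal P$, $k \in \mathcal N^+ \setminus \mathcal P$, and $k \in \mathcal V \setminus \mathcal N^+$, exactly as in the CBP decomposition.

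For the failure term $B_k$, the key observation (in the spirit of \citet{kveton2019garbage}) is that the randomization is designed so that, with probability at least $\varepsilon$, the sampled $Z_{ijt}$ equals the maximum value $B = \sqrt{\alpha \log t}$, in which case $c'_{ij}(t)$ coincides with the deterministic CBP width $c_{ij}(t)$. Conditioning on this event and invoking the Hoeffding-style concentration used for CBP (which yields $2|V_{ij}| t^{1-2\alpha}$) gives a failure probability that sums to the finite constant $1 + 1/(2\alpha-2)$ over $t$; the complementary event contributes an additional finite constant absorbed into the $|\mathcal V|$ prefactor. Summing over pairs in $\mathcal N$ and over the actions in each $V_{ij}$ yields the leading $N[2(1 + 1/(2\alpha-2))|\mathcal V| + 1]$ term.

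For the success term $A_k$, I would carry over the structural lemmas of \citet{bartokICML2012} (Lemmas on $\mathcal P(t), \mathcal N(t), D(t)$ and the role of underplayed actions via $\mathcal R(t)$) by exploiting the inequality $c'_{ij}(t) \leq c_{ij}(t)$, which holds deterministically since $Z_{ijt} \leq B$. This inequality ensures that, on $\mathcal{E}_t$, the constraints added to $D(t)$ from the randomized elimination criterion are still valid (they only become tighter around $p^\star$), so the plausibility arguments about $\mathcal P(t)$ and $\mathcal N(t)$ remain intact. Then $n_k(T)$ for Pareto-optimal and neighbor-action-set actions can be bounded by the same pigeonhole/information argument as in CBP, producing the $4 W_k^2 g_k^2 / \delta_k \cdot \alpha \log T$ term; actions in $\mathcal V \setminus \mathcal N^+$ contribute through the play-rate function $f(t) = \alpha^{1/3} t^{2/3} \log^{1/3}(t)$, giving the $T^{2/3}\log^{1/3}T$ contribution responsible for the hard-game bound.

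The main obstacle I anticipate is the failure-probability step, since the randomized SE criterion is two-sided (both $c'$ too small for a true pair to pass and $c'$ too large to properly eliminate) and the width is a sum of independent random scalings across $a \in V_{ij}$, so the "$Z_{ijt} = B$" trick has to be applied coordinate-wise and then union-bounded; ensuring the resulting constants still sum to something independent of $T$ requires careful accounting of $|V_{ij}|$ inside the exponent. Once this bound is in hand, the remaining steps (partitioning $k$ by category, applying the CBP structural lemmas essentially verbatim, and instantiating $\eta_a = W_a^{2/3}$ and $f$ to balance the two terms) are mechanical and produce both the easy-game bound and, by combining the hard-case minimum with the play-rate-driven exploration, the $C_1 N + C_2 T^{2/3}\log^{1/3}(T)$ bound for hard games.
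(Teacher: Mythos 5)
Your skeleton matches the paper's proof: the same decomposition $\mathbb{E}[R(T)]=\sum_k \delta_k(A_k+B_k)$ around the success event, the same treatment of $A_k$ via the pointwise inequality $c'_{ij}(t)\leq c_{ij}(t)$ (which lets the \CBP{} structural lemmas and the four-way split over $\mathcal D_t$ and $k\in\mathcal P(t)\cup N^+(t)$ go through verbatim, yielding the $4W_k^2 g_k^2\alpha\log(T)/\delta_k$ and $\eta_k f(T)$ terms), and the same balancing via $f(t)=\alpha^{1/3}t^{2/3}\log^{1/3}(t)$, $\eta_a=W_a^{2/3}$.

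The gap is in your treatment of $B_k$. You propose to condition on the event $\{Z_{ijt}=B\}$ (probability $\geq\varepsilon$), apply Hoeffding there, and claim "the complementary event contributes an additional finite constant absorbed into the $|\mathcal V|$ prefactor." That last claim is where the argument breaks: on $\{Z_{ijt}<B\}$ the randomized width $c'_{ij}(t)$ is strictly smaller than $c_{ij}(t)$, and since $A\leq 0$ the sampled $Z_{ijt}$ can be near zero or negative, in which case the per-round failure probability $\mathbb P(|\hat\delta_{ij}(t)-\delta_{ij}|>c'_{ij}(t))$ is $\Theta(1)$, not $o(1/T)$; summing this over $t$ and multiplying by $\mathbb P(Z_{ijt}<B)=1-\varepsilon$ gives a term growing linearly in $T$, not a constant. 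The paper's actual argument (following \citet{kveton2019garbage}) is structurally different at exactly this point: it defines $Q_{ij}(t)$ as the conditional failure probability over the randomization given the history, splits the rounds into $\Upsilon_k=\{t: Q_{ij}(t)>1/T\}$ and its complement, bounds the complement's total expected contribution by $T\cdot(1/T)=1$ regardless of how many such rounds there are, and then shows that $t\in\Upsilon_k$ forces the empirical deviation $|\hat\delta_{ij}(t)-\delta_{ij}|$ to exceed the \emph{full} deterministic width $\sum_a\|v_{ija}\|_\infty\rho_K/\sqrt{n_a(t)}$ (using that the top bin $\rho_K=B$ carries probability exceeding $1/T$), which Hoeffding controls by $2|V_{ij}|\exp(-2\rho_K^2)$ and sums to $2(1+\frac{1}{2\alpha-2})|\mathcal V|+1$. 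Without this reordering of quantifiers --- bounding the number of rounds with non-negligible conditional failure probability rather than the per-round failure probability itself --- the constant leading term of the theorem cannot be obtained. A secondary misreading: a single $Z_{ijt}$ is drawn per pair $\{i,j\}$ per round and shared across all $a\in V_{ij}$, so the coordinate-wise union bound over observer actions that you anticipate as the main technical obstacle does not arise.
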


On easy games, we have $ \mathcal V \backslash N^+ = \emptyset$.
The theorem implies a bound on the individual regret of \RandCBP{} on easy games: 

\begin{corollary} 
Consider an easy game, and the same assumptions as in Theorem \ref{thm:non_contextual}. Then: 
$$ \mathbb{E}[ R_T ] \leq \quad N \left[ 2 (1 + \frac{1}{2\alpha-2})  | \mathcal V| + 1 \right] + \sum_{ k = 1 }^N \delta_k + \sum_{ k = 1, \delta_k>0 }^N 4 W_k^2 \frac{g_k^2}{\delta_k}\alpha \log(T). $$
\label{thm:easy_games_randcbp}
\end{corollary}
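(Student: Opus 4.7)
The plan is to derive the corollary as a direct specialization of Theorem~\ref{thm:non_contextual}, exploiting the structural simplification that occurs in easy (locally observable) games.

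First, I would recall the definition of an easy game: every neighbor pair $\{i,j\}\in\mathcal N$ is locally observable, which by Definition~\ref{def:local_observability} means $(L_i - L_j)^\top \in \bigoplus_{a\in N^+_{ij}}\mathrm{Im}(S_a^\top)$. This allows one to pick the observer set as $V_{ij} = N^+_{ij}$ for every neighbor pair, rather than having to take $V_{ij} = \{1,\dots,N\}$ as in the non-locally observable case. Taking the union over all neighbor pairs then gives
\begin{equation*}
\mathcal V \;=\; \bigcup_{\{i,j\}\in\mathcal N} V_{ij} \;=\; \bigcup_{\{i,j\}\in\mathcal N} N^+_{ij} \;=\; N^+,
\end{equation*}
so $\mathcal V \setminus N^+ = \emptyset$.

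Next, I would substitute this into the bound of Theorem~\ref{thm:non_contextual}. The two sums indexed by $k\in\mathcal V\setminus N^+$ vanish identically, leaving only
\begin{equation*}
\mathbb E[R_T] \;\leq\; \sum_{1\le k\le N}\!\left[2\Bigl(1+\tfrac{1}{2\alpha-2}\Bigr)|\mathcal V|+1\right] + \sum_{k=1}^N\delta_k + \sum_{k=1,\,\delta_k>0}^N 4W_k^2\,\tfrac{g_k^2}{\delta_k}\,\alpha\log(T).
\end{equation*}
The first sum does not depend on $k$, so it collapses to the factor $N$ in front of the bracket, giving exactly the stated expression.

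I do not expect a hard step here, since Theorem~\ref{thm:non_contextual} has already done the work of bounding $A_k$ and $B_k$ and of controlling the confidence-bound failures for the randomized widths. The only subtle point worth double-checking is that the same choice $\eta_a = W_a^{2/3}$ and $f(t) = \alpha^{1/3}t^{2/3}\log^{1/3}(t)$ used in Theorem~\ref{thm:non_contextual} is valid here; this is immediate because the hypotheses of the corollary are inherited verbatim from the theorem, and the identification $V_{ij}=N^+_{ij}$ does not affect the bounds on $A_k$ or $B_k$ (which are already stated in terms of the generic $V_{ij}$). Hence the corollary follows.
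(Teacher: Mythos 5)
Your proposal is correct and matches the paper's own derivation, which likewise obtains the corollary by specializing Theorem~\ref{thm:non_contextual} via the observation that $\mathcal V \setminus N^+ = \emptyset$ on easy games (since local observability lets one take $V_{ij} = N^+_{ij}$), so the extra sums vanish and $\sum_{1\le k\le N}[\cdots]$ collapses to $N[\cdots]$. Your added justification of $\mathcal V \subseteq N^+$ from Definition~\ref{def:local_observability} is exactly the remark the paper records in Appendix~\ref{appendix:games}.
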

Corollary \ref{thm:easy_games_randcbp} matches the upper bound on the regret of \CBP{} on the time horizon \cite{bartokICML2012}. 
The first term corresponds to the confidence interval of the failure event. The second term comes from the initialization phase of the algorithm. The third term comes from the exploration-exploitation trade-off achievable on easy games.

\begin{corollary} 
Consider a hard game and the same assumptions as in Theorem \ref{thm:non_contextual}. Then, there exists a constant $C_1$ and $C_2$ such that the expected regret can be upper bounded independently of the choice of $p^\star$ as
$$ \mathbb{E}[ R_T ] \leq C_1 N + C_2 T^{2/3} \log^{1/3}(T)  $$
\end{corollary}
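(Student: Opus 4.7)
The plan is to apply Theorem \ref{thm:non_contextual} and bound each of its summed contributions either by a constant multiple of $N$ (absorbed into $C_1 N$) or by a constant multiple of $T^{2/3}\log^{1/3}(T)$ (absorbed into $C_2 T^{2/3}\log^{1/3}(T)$). The only game-level fact I will repeatedly use is the structural constraint $\delta_k\le 1$ for every action $k$, which follows from the assumption $\max(\mathbf{L})-\min(\mathbf{L})\le 1$.

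The two ``initialization''-style sums of Theorem \ref{thm:non_contextual} (the one involving $|\mathcal{V}|$ and the sum $\sum_k\delta_k$) are already problem-independent and of order $N$, so they go straight into $C_1 N$. For the two sums indexed by $k\in\mathcal{V}\setminus N^+$, I would pick the second branch $\alpha^{1/3}W_k^{2/3}T^{2/3}\log^{1/3}(T)$ of the $\min$ in term 4 and apply $\delta_k\le 1$ throughout both sums; this leaves at most $|\mathcal{V}\setminus N^+|\le N$ summands each of order $T^{2/3}\log^{1/3}(T)$, whose total is absorbed into $C_2 T^{2/3}\log^{1/3}(T)$ after folding the game-dependent constants $W_k$, $g_k$, and $W$ into $C_2$.

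The main obstacle is the easy-game-style problem-dependent term $\sum_{k:\delta_k>0}4W_k^2 g_k^2 \alpha\log(T)/\delta_k$, whose inverse-gap factor is adversarial when some $\delta_k$ is small. My plan is to revisit the underlying bound on $A_k$ that produces this term in the proof of Theorem \ref{thm:non_contextual}, and couple it with the trivial bound $A_k\le T$, so that $\delta_k A_k\le\min\bigl(4W_k^2 g_k^2\alpha\log(T)/\delta_k,\ \delta_k T\bigr)$. The elementary inequality $\min(a/\delta,\,b\delta)\le\sqrt{ab}$ then yields $\delta_k A_k\le 2W_k g_k\sqrt{\alpha T\log(T)}$ uniformly in $\delta_k$ and hence independently of $p^\star$. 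Since $\sqrt{T\log(T)}\le T^{2/3}\log^{1/3}(T)$ whenever $\log(T)\le T^{1/3}$ (which holds for all sufficiently large $T$), summing over the $N$ actions yields a contribution that is also absorbable into $C_2 T^{2/3}\log^{1/3}(T)$. Assembling the three pieces proves the claim with a $C_1$ depending only on $N$, $\alpha$, $|\mathcal{V}|$ and a $C_2$ depending only on the game structure, completing the corollary.
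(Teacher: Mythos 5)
Your proposal is correct, and it supplies precisely the argument that the paper leaves implicit: the paper states this corollary immediately after Theorem~\ref{thm:non_contextual} with no derivation, so the reader must check that every term is problem-independent and of the right order, which you do. Your identification of the obstacle is the right one: terms 1, 2, 4, and 5 of the theorem are handled exactly as you say (absorbing game constants and using $\delta_k\le 1$, which is licensed by the normalization $\max(\mathbf{L})-\min(\mathbf{L})\le 1$), and the only genuinely $p^\star$-dependent danger is the $\sum_{k:\delta_k>0}4W_k^2 g_k^2\alpha\log(T)/\delta_k$ term coming from $A_1$. Your fix --- returning to the proof of the theorem, pairing the bound $n_k(T)\le 4W_k^2g_k^2\alpha\log(T)/\delta_k^2$ with the trivial $n_k(T)\le T$, and applying $\min(a/\delta,\,b\delta)\le\sqrt{ab}$ to get $\delta_k A_1\le 2W_kg_k\sqrt{\alpha T\log T}$ --- is the standard device from the original \CBP{} analysis of \citet{bartokICML2012} for converting the gap-dependent bound into a minimax one, and it is legitimate here because you are refining the proof rather than the black-box statement of the theorem. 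One cosmetic remark: the comparison $\sqrt{T\log T}\le T^{2/3}\log^{1/3}T$ is equivalent to $\log T\le T$, which holds for all $T\ge 1$, so your sufficient condition $\log T\le T^{1/3}$ (and the restriction to large $T$) is stronger than necessary; this does not affect correctness.
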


The regret bound of \RandCBP{} on hard games matches \CBP{}'s on hard games on the time horizon \cite{bartokICML2012}. 
Note that the bound on hard games is problem-independent unlike the bound on easy games.

\subsection{Bounding $A_k$}
\label{subsection:boundingA}

This part is quite similar to that of \citet{bartokICML2012}, except that the underlying Lemma \ref{lemma1} has been adapted for the randomized confidence bounds. 
We include the steps for completeness. 

The notation $I_t$ corresponds to the action that was effectively played at round $t$.
Define $k(t) = \argmax_{i \in \mathcal P(t) \cup V(t)} W_i^2/n_i(t)$. 
The event $k(t) \neq I_t$ happens when $k(t) \notin N^+(t)$ and $k(t) \notin \mathcal R(t)$, i.e. $k(t)$ is a purely information seeking (exploratory) action which has been sampled frequently. This corresponds to the event $\mathcal D_t = \{ k(t) \neq I_t \}$ = \text{"the decaying exploration rule is in effect at time t" }.

We can decompose:

\begin{dmath}
    \mathbb{E}[ \sum_{t=1}^{T} \mathds{1}_{ \{ I_t=k,\mathcal E_{t} \} } ] \delta_k  \leq  \delta_k + \\ 
    \underbrace{ \mathbb{E}[ \sum_{t=N+1}^{T} \mathds{1}_{ \{ \mathcal E_t, \mathcal D_t^c, k \in \mathcal{P}(t)\cup N^+(t), I_t=k \} } ]  \delta_k }_{A_1} + \\
    \underbrace{ \mathbb{E}[ \sum_{t=N+1}^{T} \mathds{1}_{ \{ \mathcal E_t, \mathcal D_t^c, k \notin \mathcal{P}(t)\cup N^+(t), I_t=k \} } ] \delta_k}_{A_2}  + \\
    \underbrace{ \mathbb{E}[ \sum_{t=N+1}^{T} \mathds{1}_{ \{ \mathcal E_t, \mathcal D_t, k \in \mathcal{P}(t)\cup N^+(t), I_t=k \} } ] \delta_k}_{A_3} + \\
    \underbrace{ \mathbb{E}[ \sum_{t=N+1}^{T} \mathds{1}_{ \{ \mathcal E_t, \mathcal D_t, k \notin \mathcal{P}(t)\cup N^+(t), I_t=k \} } ] \delta_k}_{A_4}
\end{dmath}
The first $\delta_k$ corresponds to the initialization phase of the algorithm when every action is chosen once. The next paragraphs are devoted to upper bounding the remaining four expressions $A_1, A_2, A_3$ and $A_4$, using the results from Lemma \ref{lemma1}. Note that, if action $k$ is optimal, then $\delta_k = 0$, so all the terms are zero. Thus, we can assume from now on that $\delta_k>0$.

\paragraph{Term $A_1$:} Consider the event $\mathcal E_t \cap \mathcal D_t^c \cap \{ k \in \mathcal P(t) \cup N^+(t) \}$. Using case 2 from Lemma \ref{lemma1} with the choice $k = i$. Thus, from $I_t = i$, we get that $I_t = i = k \in \mathcal{P}(t)\cup N^+(t)$. The result of the lemma gives:
$$ n_k(t) \leq A_k(t) = 4 W_k^2 \frac{g_k^2}{\delta_k^2}\alpha \log(t) $$
Therefore, we have
\begin{align}
    \sum_{t=N+1}^{T} \mathds{1}_{ \{ \mathcal E_t, \mathcal D_t^c, k \in \mathcal{P}(t)\cup N^+(t), I_t=k \} }  \\
    \leq &\quad \sum_{t=N+1}^{T} \mathds{1}_{ \{ I_t=k, n_k(t) \leq A_k(t) \} } + \sum_{t=N+1}^{T} \mathds{1}_{ \{ \mathcal E_t, \mathcal D_t, k \notin \mathcal{P}(t)\cup N^+(t), I_t=k, n_k(t)>A_k(t) \} }\\
    = &\quad \sum_{t=N+1}^{T} \mathds{1}_{ \{ I_t=k,n_k(t)\leq A_k(t) \} } \\
    \leq &\quad A_k(T) = 4 W_k^2 \frac{g_k^2}{\delta_k^2}\alpha \log(T) 
\end{align}

Consequently, 
\begin{align}
    \sum_{t=N+1}^{T} \mathds{1}_{ \{ \mathcal E_t, \mathcal D_t^c, k \in \mathcal{P}(t)\cup N^+(t), I_t=k \} } \delta_k 
    \leq &\quad  4 W_k^2 \frac{g_k^2}{\delta_k}\alpha \log(T)
\end{align}

\paragraph{Term $A_2$:} Consider the event $\mathcal E_t \cap \mathcal D_t^c \cap \{ k \notin \mathcal P(t) \cup N^+(t) \}$. From case 2 of Lemma \ref{lemma1}. The Lemma gives:
$$ n_k(t) \leq \min_{ j \in \mathcal P(t) \cup N^+(t)} 4 W_k^2 \frac{g_j^2}{\delta_j}\alpha \log(T)$$
We know that $k \in \mathcal V(t)= \bigcup_{ k,j \in \mathcal N(t)  } V_{i,j}$. 
Let $\Phi_t$ be the set of pairs $\{i,j\}$ in $\mathcal N(t) \subseteq \mathcal N$ such that $k \in V_{i,j}$. 
For any $\{i,j\} \in \Phi_t$, we also have that $i,j \in \mathcal P(t)$ and thus if $l'_{\{i,j\} } = \argmax_{l \in \{i,j \} } \delta_l$ then:

$$ n_k(t-1) \leq 4 W_k^2 \frac{g^2_{l'_{ \{i,j\} }} }{ \delta^2_{l'_{  \{i,j\} }} } \alpha \log(t)$$

If we define $l(k)$ as the action with
$$ \delta_{l(k) } = \min \{ \delta_{l'_{ \{i,j\} }}: \{i,j \} \in \mathcal N, k \in V_{ij} \}$$

Then, it follows that:
$$ n_k(t-1) \leq 4 W_k^2 \frac{g^2_{l(k)} }{ \delta^2_{l(k) } } \alpha \log(t) $$

Note that $\delta_{l(k)}$ can be zero and thus we use the convention $c/0 = \infty$. Also, since $k$ is not in $ \mathcal P(t) \cup N^+(t)$, we have that $n_k(t-1) \leq \eta_k f(t)$. 
Define $A_k(t)$ as:

$$ A_k(t) =  \min \Bigl\{  4 W_k^2 \frac{g^2_{l(k)} }{ \delta^2_{l(k) } } \alpha \log(t),  \eta_k f(t) \Bigl\}$$

Then, with the same argument as in the previous case (and recalling that $f(t)$ is increasing), we get:
$$ \mathbb{E}[ \sum_{t=N+1}^{T} \mathds{1}_{ \{ \mathcal E_t, \mathcal D_t^c, k \notin \mathcal{P}(t)\cup N^+(t), I_t=k \} } ] \leq \delta_k \min \Bigl\{  4 W_k^2 \frac{g^2_{l(k)} }{ \delta^2_{l(k) } } \alpha \log(t),  \eta_k f(t) \Bigl\}$$
 
\paragraph{Term $A_3$:} Consider the event $ \mathcal E_t \cap D_t \cap \{ k \in \mathcal P(t) \cup N^+(t) \} $. From Lemma \ref{lemma1} we have that:
$$ \delta_k \leq 2 g_k \sqrt{ \frac{\alpha \log(T) }{ f(t) } } \max_{1 \leq l \leq N } \frac{W_l}{\sqrt{\eta_l}}  $$
Thus, 
$$ \mathbb{E}[ \sum_{t=N+1}^{T} \mathds{1}_{ \{ \mathcal E_t, \mathcal D_t, k \in \mathcal{P}(t)\cup N^+(t), I_t=k \} } ]  \leq g_k \sqrt{ \frac{\alpha \log(T) }{ f(T) } } \max_{ 1 \leq l \leq N } \frac{W_l}{\sqrt{\eta_l}}$$

\paragraph{Term $A_4$:}  Consider the event $ \mathcal E_t \cap D_t \cap \{ k \notin \mathcal P(t) \cup N^+(t) \} $ we know that $k \in \mathcal V(t) \cap \mathcal R(t) \subseteq \mathcal R(t)$ and hence $n_k(t)\leq \eta_k f(t)$. With the same argument as in the first and second term, we get that:
$$ \mathbb{E}[ \sum_{t=N+1}^{T} \mathds{1}_{ \{ \mathcal E_t, \mathcal D_t, k \notin \mathcal{P}(t)\cup N^+(t), I_t=k \} } ] \leq \delta_k \eta_k f(T) $$ 

\subsection{Bounding term $B_k$: }
\label{subsection:boundingB}

In the analysis of $B_k$, the goal is to upper-bound the probability that the confidence interval fails.
For the deterministic \CBP{}, this corresponds to Lemma 1 in \citet{bartokICML2012}.
\RandCBP{} uses instead randomized confidence intervals.
Following the terminology in \citet{vaswani2017attention}, we use \textit{uncoupled randomized confidence intervals} because we sample a value for each action pair.

For a pair of actions $\{i,j\} \in \mathcal  N$, at a time $t$, note $ Q_{ij}( t )$ the probability that the confidence interval of pair $\{i,j\}$ fails: 
\begin{align} 
    Q_{i,j}(t)  & = \quad  \mathbb P_{ Z_{ijt} }( \{ \delta_{i,j} < \hat \delta_{i,j}(t) - c_{i,j}'(t) \}  \cup  \{ \delta_{i,j} > \hat \delta_{i,j}(t) + c_{i,j}'(t) \} ) \\
                & = \quad \mathbb P_{ Z_{ijt} }( | \hat \delta_{i,j}(t) - \delta_{i,j} | > c_{i,j}'(t) ) 
\end{align}
\noindent
The event $\mathcal E^c_t$ is unlikely to occur when $T$ is large; let 
$$\Upsilon_{k} = \{ t \in [T], \forall \{i,j\} \in \mathcal N, Q_{i,j}(t) > \frac{1}{T} \}$$ 
be the set of time steps where the probability of failure is non-negligible, i.e. is higher than $1/T$. Following \citet{kveton2019garbage}, the regret can be decomposed according to $\Upsilon_k$:
\begin{align}
  \mathbb{E} [ \sum_{ t = 1 }^T \mathds{1}_{\{ I_t=k,  \mathcal E^c_{t} \} } ] & =  \mathbb{E} [ \sum_{ t \in \Upsilon_{k} } \mathds{1}_{ \{ I_t=k \} } ] + \mathbb{E} [ \sum_{ t \notin \Upsilon_{k} } \mathds{1}_{ \{ \mathcal E^c_{t} \} } ]  \\
  & \leq    \mathbb{E} [ \sum_{ t = 0 }^{T} \sum_{ \{i,j\} \in \mathcal N } \mathds{1}_{ \{  Q_{i,j}(t) > \frac{1}{T}   \} } ] + \mathbb{E} [ \sum_{ t \notin \Upsilon_{k} } \frac{1}{T} ] \\
  & \leq  \sum_{ t = 0 }^{T}  \sum_{ \{i,j\} \in \mathcal N } \mathbb P_{\hat \delta_{i,j}(t)} (   Q_{i,j}(t) > \frac{1}{T}  ) + 1   
\end{align}
\noindent
For a given pair $\{i,j\} \in \mathcal N$, and for a specific time $t$, define:
\begin{align}
    b_{i,j}(t) & = \quad \mathbb P_{\hat \delta_{i,j}(t)} \left[ Q_{i,j}(t) > \frac{1}{T} \right] \\
            & = \quad \mathbb P_{\hat \delta_{i,j}(t)} \left[ \mathbb P_{ Z_{ijt} }(|\hat \delta_{i,j}(t) - \delta_{i,j}| \geq c_{i,j}'(t) ) > \frac{1}{T} \right] 
\end{align}
\noindent
By definition of $Z_{ijt}$ (that are sampled from a discrete probability distribution) we have:
\begin{align}
    b_{i,j}(t) & = \quad \mathbb P_{\hat \delta_{i,j}(t) } \left[  \mathbb P_{Z_{ijt}}(|\hat \delta_{i,j}(t) - \delta_{i,j}| \geq c'_{i,j}(t) ) > \frac{1}{T} \right] \\
            & = \quad \mathbb P_{\hat \delta_{i,j}(t) } \left[ \sum_{k=1}^K p_k \mathds{1}_{ \{ |\hat \delta_{i,j}(t) - \delta_{i,j}| \geq c_{i,j}^{k}(t)   \} } > \frac{1}{T} \right]
\end{align}
where $c_{i,j}^{k}(t)$ denotes the confidence interval associated to the sampled value $\rho_k$. Since $p_K > \frac{1}{T}$, we have:
\begin{align}
    b_{i,j}(t) & = \quad \mathbb P_{\hat \delta_{i,j}(t) }(  |\hat \delta_{i,j}(t) - \delta_{i,j}| \geq c_{i,j}^{K}(t)  ) \\
            & = \quad \mathbb P_{\hat \delta_{i,j}(t) }( |\hat \delta_{i,j}(t) - \delta_{i,j}| >  \sum_{a \in V_{i,j} }|| v_{ija} ||_{\infty} \frac{\rho_K}{ \sqrt{ n_a(t) } }   ) \\
            & \leq \quad \sum_{a \in V_{i,j} } \sum_{s = 1}^t  \mathbb P_{\hat \delta_{i,j}}( \hat \delta_{ij}(s) - \delta_{i,j} >  || v_{ija} ||_{\infty} \frac{\rho_K}{ \sqrt{ l } }  ) \mathds{1}_{ \{ n_a(t) = s \} }  \\
            & \leq \quad \sum_{a \in V_{i,j} } \sum_{s = 1}^t  2 \exp(- 2 s (|| v_{ija} ||_{\infty} \frac{\rho_K}{ \sqrt{ s } })^2   ) \mathds{1}_{ \{ n_a(t) = s \} } \label{eqn:hoefding} \\
            & \leq \quad \sum_{a \in V_{i,j} } \sum_{s = 1}^t  2 \exp(- 2 || v_{ija} ||_{\infty}^2 \rho_K^2   ) \mathds{1}_{ \{ n_a(t) = s \} } \\
            & \leq \quad \sum_{a \in V_{i,j} } 2  \exp(- 2\rho_K^2   ) \sum_{s = 1}^t \mathds{1}_{ \{ n_a(t) = s \} } \label{eqn:bound_norm} \\
            & \leq \quad \sum_{a \in V_{i,j} }  2 \exp(- 2\rho_K^2   )  \\
            & \leq \quad 2 |V_{i,j}| \exp(- 2\rho_K^2   )  \\
\end{align}
Where the Hoeffding's inequality was used in \ref{eqn:hoefding}. Therefore, 
\begin{align}
    B_{k}      & \leq \quad  \sum_{t=1}^T \sum_{ \{i,j\} \in \mathcal N } b_{i,j}(t)  + 1 \\
               & \leq \quad  \sum_{t=1}^T \sum_{ \{i,j\} \in \mathcal N } 2 |V_{i,j}| \exp(- 2\rho_K^2 )  + 1 \\
               & \leq \quad 2 | \mathcal V | \exp(- 2\rho_K^2 ) T  + 1 
\intertext{The linear dependency on T is cancelled with $\rho_K = \sqrt{ \alpha \log(T) }$ and for $\alpha>1$, we have:}
               & \leq \quad  2 (1 + \frac{1}{2\alpha-2}) | \mathcal V | + 1,
\end{align}
where $\mathcal V = \bigcup_{i,j \in \mathcal N} V_{i,j}$.

\subsection{Proofs of lemmas}

\begin{lemma}
Fix any $t\geq 1 $. \begin{enumerate}
    \item Take any action $i$. On the event $\mathcal E_t \cap \mathcal D_t $, from $i \in \mathcal P(t) \cap N^+(t)$ it follows that
    $$ \delta_i \leq 2 g_i \sqrt{ \frac{\alpha \log(t)}{f(t)} } \max_{1 \leq k \leq N} \frac{W_k}{\sqrt{\eta_k}} $$ 
    \item Take any action k. On the event $\mathcal E_t \cap \mathcal D_t^c $, from $I_t=i$ it follows that
    $$ n_k(t-1) \leq \min_{j \in \mathcal P(t) \cup N^+(t) } 4 W_k^2 \frac{g_j^2}{\delta_j^2}\alpha \log(t)$$ 
\end{enumerate}
\label{lemma1}
\end{lemma}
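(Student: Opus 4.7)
The plan is to adapt the proof of Lemma~3 from \citet{bartokICML2012}, leveraging the single new algorithmic fact: for all pairs $\{i,j\} \in \mathcal N$ and all $t$, the randomized width satisfies $c'_{i,j}(t) \leq c_{i,j}(t)$. This holds by construction because the randomization is supported in $[A,B]$ with $B = \sqrt{\alpha \log t}$, so $Z_{ijt} \leq \sqrt{\alpha \log t}$ deterministically. This monotone domination is what lets the classical CBP path/argmax arguments go through essentially unchanged once $\mathcal E_t$ is defined with respect to the deterministic bound.

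For Part 1, the strategy is a path-in-the-neighbor-graph argument. Since $i \in \mathcal P(t)\cup N^+(t)$, the action $i$ has not been pruned from the plausible cell decomposition induced by $D(t)$, so there exists a path $i = i_0, i_1, \ldots, i_r = i^\star$ whose consecutive pairs lie in $\mathcal N(t)$ and whose length and observer sets are controlled by the game constant $g_i$ (this is where $g_i$ enters, via the definition involving $\min_\pi \sum_s |V_{i_{s-1},i_s}|$). Telescoping $\delta_i = \sum_s \delta_{i_{s-1},i_s}$ and using, on $\mathcal E_t$, the triangle inequality $|\delta_{i_{s-1},i_s}| \leq |\hat\delta_{i_{s-1},i_s}(t)| + c_{i_{s-1},i_s}(t)$, together with the fact that the pair was \emph{not} added to $\mathcal U(t)$ (so $|\hat\delta_{i_{s-1},i_s}(t)| \leq c'_{i_{s-1},i_s}(t) \leq c_{i_{s-1},i_s}(t)$), yields $\delta_i \leq 2\sum_s c_{i_{s-1},i_s}(t)$. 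The event $\mathcal D_t$ then forces a uniform lower bound on $n_a(t)$ for every action $a$ appearing in the path's observer sets: if some $a$ had $n_a(t) < \eta_a f(t)$, it would belong to $\mathcal R(t) \cap \mathcal V(t) \subseteq \mathcal S(t)$, and the argmax decision rule combined with $k(t) \neq I_t$ would force $n_a(t)/W_a^2$ to exceed $f(t)/\max_k(W_k^2/\eta_k)$. Substituting this lower bound into $c_{i_{s-1},i_s}(t)$ and collapsing the sum via $g_i$ produces the stated inequality $\delta_i \leq 2 g_i \sqrt{\alpha \log(t)/f(t)} \,\max_k W_k/\sqrt{\eta_k}$.

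For Part 2, the strategy is a counting argument based on the algorithm's argmax selection. On $\mathcal D_t^c$ we have $I_t = k(t) = \argmax_{a \in \mathcal S(t)} W_a^2/n_a(t-1)$, so $I_t = k$ gives $W_k^2/n_k(t-1) \geq W_a^2/n_a(t-1)$ for every $a \in \mathcal S(t)$, equivalently $n_a(t-1) \geq (W_a/W_k)^2 n_k(t-1)$. Fix any $j \in \mathcal P(t) \cup N^+(t)$ and repeat the path argument from Part~1 to get $\delta_j \leq 2\sum_s c_{i_{s-1},i_s}(t) = 2\sum_s \sum_{a \in V_{i_{s-1},i_s}} \|v_{i_{s-1}i_s a}\|_\infty \sqrt{\alpha\log(t)/n_a(t-1)}$. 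Inserting the argmax bound on $n_a(t-1)$, the $W_a$ factors cancel against $\|v\|_\infty \leq W_a$, leaving $\delta_j \leq 2 g_j W_k \sqrt{\alpha \log(t)/n_k(t-1)}$ after taking the minimizing path that defines $g_j$. Solving for $n_k(t-1)$ gives $n_k(t-1) \leq 4 W_k^2 g_j^2 \alpha \log(t)/\delta_j^2$; minimizing over $j \in \mathcal P(t) \cup N^+(t)$ yields the claim.

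The main obstacle is the geometric path-existence step in Part~1: one must verify that the cell-decomposition analysis of \citet{bartokICML2012} relating $i \in \mathcal P(t)$ to a short neighbor path inside $\mathcal N(t)$ is unaffected by replacing $c_{i,j}(t)$ with $c'_{i,j}(t)$ in the elimination criterion. This reduces to checking that $\mathcal U(t)^{\mathrm{rand}} \supseteq \mathcal U(t)^{\mathrm{det}}$ cannot enlarge the plausible set, i.e.\ cannot create a spurious path; and indeed, since a larger $\mathcal U(t)$ only adds half-space constraints to $D(t)$, the plausible sets $\mathcal P(t)$ and $\mathcal N(t)$ can only shrink, so every path used in the deterministic analysis is still available here. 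Once this structural check is in place, the rest is a transcription of the CBP proof with $c'\leq c$ invoked exactly at the elimination step.
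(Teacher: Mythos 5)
Your proposal follows essentially the same route as the paper's proof: the one new ingredient is the pointwise domination $c'_{i,j}(t) \leq c_{i,j}(t)$, after which both parts reduce to the telescoping neighbor-path argument of \citet{bartokICML2012}, with the events $\mathcal D_t$ and $\mathcal D_t^c$ used to control observer-action counts through the $W_a^2/n_a$ argmax. Two of your justifications, however, do not close as written.

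First, the path-existence check in your final paragraph argues in the wrong direction. Since the randomized widths are smaller, $\mathcal U(t)$ can only grow, so $D(t)$ acquires \emph{more} half-space constraints and $\mathcal N(t)$ can only \emph{lose} edges --- which threatens, rather than preserves, the path from the parent action $i'$ to $i^\star$ that the telescoping requires. The correct resolution (the one the paper imports from \citet{bartokICML2012}) is that path existence is a purely geometric property of the cell decomposition restricted to any convex $D(t)$ containing $p^\star$; it never references which widths produced $\mathcal U(t)$. But this in turn requires every sign in $\mathcal U(t)$ to be correct, i.e.\ it requires $\mathcal E_t$ to be the success event at the \emph{randomized} width $c'$ (consistent with how the failure term $B_k$ is bounded), not at the deterministic width as you assume; under your definition a pair with $c'_{i,j}(t) < |\hat\delta_{i,j}(t) - \delta_{i,j}| \leq c_{i,j}(t)$ could enter $\mathcal U(t)$ with the wrong sign and evict $p^\star$ from $D(t)$. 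Second, your count-control step in Part 1 is muddled: underplayed observer actions are perfectly possible on $\mathcal D_t$, so one cannot lower-bound each $n_a(t)$ individually. The paper instead bounds $W_a^2/n_a(t) \leq W_{k(t)}^2/n_{k(t)}(t)$ for every $a \in \mathcal V(t)$ via the definition of $k(t)$ as the argmax over $\mathcal P(t)\cup\mathcal V(t)$, and uses only $k(t)\notin\mathcal R(t)$ (forced by $I_t \neq k(t)$) to get $n_{k(t)}(t) > \eta_{k(t)} f(t)$. You also omit the parent-action device needed when $i$ is degenerate, since the path must start from a Pareto-optimal action. None of this changes the architecture of the argument, but each is a step you would need to repair.
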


\begin{proof}

Observe that for any neighboring action pair $\{ i,j\} \in \mathcal N(t)$, on $\mathcal E_t$, it holds that $\delta_{i,j}(t) \leq 2 c'_{i,j}(t)$. Indeed, from ${i,j} \in \mathcal N(t)$ it follows by definition of the algorithm that $\tilde \delta_{i,j}(t) \leq c'_{i,j}(t) $. Now, from the definition of $\mathcal E_t$, we observe $\delta_{i,j}(t) \leq \tilde \delta_{i,j}(t) + c'_{ij}(t)$. Putting together the two inequalities, we get $\delta_{i,j}(t) \leq 2 c'_{i,j}(t) \leq 2 c_{i,j}(t)$.

Now, fix some action $i$ that is not dominated. We define the \textit{parent action} $i'$ of $i$ as follows: If $i$ is not degenerate then $i' = i$. If $i$ is degenerate then we define $i'$ to be the Pareto-optimal action such that $\delta_{i'} \geq \delta_i$ and $i$ is in the neighborhood action set of $i'$ and some other Pareto-optimal action. It follows from \citet{bartokICML2012} that $i'$ is well-defined.

\paragraph{Case 1}
Consider case 1. Recall that $k(t) = \argmax_{j \in \mathcal P(t) \cup \mathcal V(t)} \frac{W_j^2}{n_j(t)}$. Thus, $I_t \neq k(t)$. Consequently, $k(t) \notin \mathcal R(t)$, i.e.  $n_{k(t)}(t) > \eta_{k(t)} f(t)$. Assume now that $i \in \mathcal P(t) \cup N^+(t)$. If $i$ is degenerate, then $i'$ as defined in the previous paragraph is in $\mathcal P(t)$ (because the rejected regions in the algorithm are closed). In any case, we know from \citet{bartokICML2012} that there exists a path $(i_0,...,i_r) $ in $\mathcal N(t)$ that connects $i'$ to $i^*$ ($i^* \in \mathcal P(t)$ holds on $\mathcal E_t$). We have that:
\begin{align}
    \delta_i \leq &\quad \delta_{i'} = \sum_{s=1}^{r} \delta_{i_{s-1}, i_s}  \\
    \leq &\quad  2 \sum_{s=1}^{r} c'_{i_{s-1}, i_s}    \\
    \leq &\quad  2 \sum_{s=1}^{r} c_{i_{s-1}, i_s} \\
    \leq &\quad  2 \sum_{s=1}^{r} \sum_{a \in V_{i_{s-1}, i_s} } \| v_{i_{s-1},v_{i_{s}},a } \|_{\infty} \sqrt{ \frac{\alpha \log(t)}{n_a(t) } }   \\
    \leq &\quad  2 \sum_{s=1}^{r} \sum_{a \in V_{i_{s-1}, i_s} } W_a  \sqrt{ \frac{\alpha \log(t)}{n_a(t)} }   \\
    \leq &\quad  2 g_i W_{k(t)} \sqrt{ \frac{\alpha \log(t) }{n_{k(t)}(t) } }   \\
    \leq &\quad  2 g_i W_{k(t)} \sqrt{ \frac{\alpha \log(t) }{\eta_{k(t)}f(t) } }   
\end{align}
Upper bounding $W_{k(t)}/\sqrt{\eta_{k(t)} }$ by $\max_{1 \leq k \leq N} W_k/ \sqrt{\eta_k}$ we obtain the desired bound. \\

\paragraph{Case 2:}
Now, for case 2 take an action k, consider $\mathcal E_t \cap \mathcal D_t^c$, and assume that $I_t = k$. On the event $D_t^c$, we have that $I_t = k(t)$. Thus, from $I_t = k$ it follows that $W_k /  \sqrt{ n_k(t) } \geq W_j /  \sqrt{ n_j(t) }$ holds true for all $j \in \mathcal P(t)$. Let $J_t = \argmin_{ j \in \mathcal P (t) \cup N^+(t) } \frac{g_j^2}{\delta_j^2} $. Now, similarly to the previous case, there exists a path $(i_0,...,i_r)$ from the parent action $J_{t'} \in \mathcal P(t)$ of $J_t$ to $i^\star \in \mathcal N(t)$. Hence, 
\begin{align}
    \delta_{J_t} \leq &\quad \delta_{J'_t} = \sum_{s=1}^{r} \delta_{i_{s-1}, i_s}  \\
    \leq &\quad  2 \sum_{s=1}^{r} c'_{i_{s-1},  i_s}  \\
     \leq &\quad  2 \sum_{s=1}^{r} c_{i_{s-1}, i_s}   \\
    \leq &\quad  2 \sum_{s=1}^{r} \sum_{a \in V_{i_{s-1}, i_s} } W_a \sqrt{ \frac{\alpha \log(t)}{n_a(t)} } \\
    \leq &\quad  2 g_{J_t} W_k \sqrt{ \frac{\alpha \log(t)}{n_k(t)} }
\end{align}
This implies 
\begin{align}
    n_k(t-1) \leq &\quad 4 W_k^2 \frac{d_{J_t}^2}{\delta_{J_t}^2} \alpha \log(t) \\
    = &\quad  4 W_k^2 \min_{j \in \mathcal P(t) \cup N^+(t) } \frac{d_{j}^2}{\delta_{j}^2} \alpha \log(t)  
\end{align}

This concludes the proof of the Lemma.
\end{proof}

\section{Regret analysis of \RandCBPsidestar{} }
\label{appendix:regret_CBPside}

In this Section, we provide an upper bound on the expected regret of \RandCBPsidestar{}. Consider the problem of partial monitoring with linear side information \citep{bartok2012CBPside}. Let $\delta_{i}(x) = \max_{1\leq j \leq N} \delta_{i,j}(x)$ be the sub-optimality gap between the expected loss of action $i$ and the optimal action given the context $x$. Define $\Psi =  \max_{ 1 \leq a \leq N  }{ \sigma_a } $ as the maximum number of feedback symbols that can be induced by an action in the game.

Similarly to the proof in \citep{bartokICML2012}, consider the events $\mathcal D_t$ = \textit{"the decaying exploration rule is in effect at time t"} and $\mathcal E_t$ = \textit{"the confidence interval succeeds at time t"} =  $\{ | \hat \delta_{i,j}(x_t) - \delta_{i,j}(x_t) | \leq c_{i,j}(x_t) \}$\footnote{The notation is inversed in \citet{bartokICML2012}. }.

\subsection{Lemma: the confidence interval succeeds }
\label{proof_lemma1_contextual}

\begin{lemma}
Fix any $t\geq 1 $.  Take any action i. On the event $\mathcal E_t \cap \mathcal D_t $, from $i \in \mathcal P(t) \cap N^+(t)$ it follows that
\begin{equation}
    \delta_i(x_t) \leq  \frac{2 g_i \Psi  \left(\sqrt{ d \log(t )  } + \Psi \right) }{ \sqrt{ f(t) } }  \max_{1 \leq k \leq N}  \frac{  W_k }{ \sqrt{\eta_k }}
\end{equation}    

\label{lemma:lemmarandCBPside}
\end{lemma}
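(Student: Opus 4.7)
The plan is to mirror the proof of Case 1 of Lemma~\ref{lemma1} (the non-contextual case), transposing each ingredient to the contextual setting. I would start from an action $i \in \mathcal{P}(t) \cup N^+(t)$ and define its \emph{parent} $i'$ in the usual way: if $i$ is Pareto-optimal we take $i' = i$, otherwise $i'$ is the Pareto-optimal action in whose neighborhood $i$ sits, with $\delta_{i'}(x_t) \geq \delta_i(x_t)$. The key structural observation from~\citet{bartokICML2012}, which carries over verbatim because it depends only on the cell geometry and not on counts, is that there exists a path $(i_0,\dots,i_r)$ in $\mathcal{N}(t)$ connecting $i'$ to the optimal action $i^\star(x_t)$ (which lies in $\mathcal{P}(t)$ on the event $\mathcal{E}_t$), with $r \leq g_i$.

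Next I would show the per-edge bound $\delta_{i_{s-1},i_s}(x_t) \leq 2 c_{i_{s-1},i_s}(x_t)$. The argument has two pieces. First, because the edge $\{i_{s-1},i_s\}$ is in $\mathcal{N}(t)$, the successive elimination criterion was \emph{not} triggered at round $t$, so $|\hat\delta_{i_{s-1},i_s}(x_t)| \leq c'_{i_{s-1},i_s}(x_t)$. Second, on $\mathcal{E}_t$ we have $|\hat\delta_{i_{s-1},i_s}(x_t) - \delta_{i_{s-1},i_s}(x_t)| \leq c_{i_{s-1},i_s}(x_t)$. Combining these and using $c'_{i,j}(x_t) \leq c_{i,j}(x_t)$ (which holds because $Z_{ijt} \leq B = \sqrt{d\log t + 2\log(1/t^2)}$ by the randomization procedure in Section~\ref{par:rand_procedure}) yields the factor $2$.

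Telescoping along the path gives
\begin{equation*}
\delta_i(x_t) \leq \delta_{i'}(x_t) = \sum_{s=1}^{r} \delta_{i_{s-1},i_s}(x_t) \leq 2 \sum_{s=1}^{r} \sum_{a \in V_{i_{s-1},i_s}} \|v_{i_{s-1}i_s a}\|_2\, \sigma_a\bigl(\sqrt{d\log(t)} + \sigma_a\bigr)\|x_t\|_{G_{a,t}^{-1}}.
\end{equation*}
I then upper-bound each $\sigma_a$ by $\Psi$ and each $\|v_{i_{s-1}i_s a}\|_2$ by $W_a$, and absorb the number of edge-action terms into $g_i$ exactly as in the non-contextual proof. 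This leaves a term of the form $\Psi(\sqrt{d\log t} + \Psi)\, W_{k(t)} \|x_t\|_{G_{k(t),t}^{-1}}$, where $k(t) = \argmax_{a \in \mathcal{P}(t) \cup \mathcal{V}(t)} W_a^2 \|x_t\|_{G_{a,t}^{-1}}^2$ is the action selected by the decision rule of \RandCBPsidestar{} (line~\ref{line:decision_rule_RandCBPside} of Algorithm~\ref{alg:RandCBPside}).

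Finally I invoke the event $\mathcal{D}_t$: since $I_t \neq k(t)$, the chosen $k(t)$ must have been excluded from $\mathcal{R}(x_t)$, i.e.\ $1/\|x_t\|_{G_{k(t),t}^{-1}}^2 > \eta_{k(t)} f(t)$, giving $\|x_t\|_{G_{k(t),t}^{-1}} < 1/\sqrt{\eta_{k(t)} f(t)}$. Substituting and upper-bounding $W_{k(t)}/\sqrt{\eta_{k(t)}}$ by $\max_k W_k/\sqrt{\eta_k}$ produces the claimed bound. The main obstacle I anticipate is not the telescoping (which is mechanical) but justifying cleanly that the pseudo-count $1/\|x_t\|_{G^{-1}_{a,t}}^2$ legitimately replaces $n_a(t)$ throughout: in particular, the step ``$W_a/\sqrt{n_a(t)}$ is maximized at $k(t)$'' must be restated as ``$W_a \|x_t\|_{G^{-1}_{a,t}}$ is maximized at $k(t)$,'' which matches the contextual decision rule of Algorithm~\ref{alg:RandCBPside} but requires a short verification that the $k(t)$ used in the definition of $\mathcal{D}_t$ is indeed this argmax; the $\log(1/t^2)$ term inside the confidence width is also dropped when writing $\sqrt{d\log t}$ in the statement, which I would justify by noting it contributes a lower-order negative quantity that can be absorbed for $t \geq 1$.
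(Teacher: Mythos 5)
Your proposal follows essentially the same route as the paper's proof: the parent-action construction, the path in $\mathcal N(t)$ to $i^\star$, the per-edge bound $\delta_{i_{s-1},i_s}(x_t)\leq 2c'_{i_{s-1},i_s}(x_t)\leq 2c_{i_{s-1},i_s}(x_t)$, the absorption of the double sum into $g_i$ with $\sigma_a\leq\Psi$ and $\|v\|\leq W_{k(t)}$, and the use of $\mathcal D_t$ to convert the exclusion of $k(t)$ from $\mathcal R(x_t)$ into the pseudo-count bound $\|x_t\|_{G^{-1}_{k(t),t}}\leq 1/\sqrt{\eta_{k(t)}f(t)}$ are all exactly the paper's steps. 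The two points you flag at the end (matching $k(t)$ to the contextual decision rule, and dropping the $2\log(1/\delta_t)$ term in the lemma statement) are handled no more carefully in the paper itself, so your treatment is faithful to it.
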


\begin{proof}
We start the proof with the following remarks:

\begin{remark}
    Observe that for any neighboring action pair $\{i, j\} \in \mathcal N(t)$, on $\mathcal E_t$, it holds that $\delta_{i,j}(x_t) \leq 2 c'_{i,j}(x_t)$. 
    Indeed, from ${i,j} \in \mathcal N(t)$ it follows by definition of the algorithm that $\tilde \delta_{i,j}(x_t) \leq c'_{i,j}(x_t) $. Furthermore, we have: $\delta_{i,j}(x_t) \leq \tilde \delta_{i,j}(x_t) + c'_{i,j}(x_t)$, by definition of $\mathcal E_t$. Putting together the two inequalities, and given that of $c'_{i,j}(x_t) \leq c_{i,j}(x_t)$, we obtain $\delta_{i,j}(x_t) \leq 2 c'_{i,j}(x_t) \leq 2 c_{i,j}(x_t)$. 
\label{lemma2:remark}
\end{remark}

\begin{remark}
Now, fix some action $i$ that is not \textit{dominated}\footnote{see definition \ref{def:cell}}. 
We define the \textit{parent action} $i'$ of $i$ as follows: If $i$ is not degenerate then $i' = i$. If $i$ is degenerate then $i'$ is the Pareto-optimal action such that $\delta_{i'}(x_t) \geq \delta_i(x_t)$ and $i$ is in the neighborhood action set of $i'$ and some other Pareto-optimal action. 
It follows from Lemma 5 in \citet{bartokICML2012} that $i'$ is well-defined.     
\end{remark}

Define the action $k(t) =  \argmax_{j \in \mathcal P(t) \cup \mathcal V(t)} W_j w_j(t)$. In other words, $k(t)$ represents the action that has the largest confidence width within the set $\mathcal P(t) \cup \mathcal V(t)$, which corresponds to the exploitation component of the strategy.

Consider $\mathcal E_t \cap \mathcal D_t$. 
Due to $\mathcal D_t$, the played action $I_t$ is such that $I_t \neq k(t) $. 
Therefore, $k(t) \notin \mathcal R(x_t)$ which implies  $ \| x_t\|_{G^{-1}_{k(t),t}  } \leq \frac{1}{ \sqrt{ \eta_{k(t)} f(t) } } $ from the definition of $\mathcal R(x_t)$ in the contextual setting. 
Assume now that $i \in \mathcal P(t) \cup N^+(t)$. 
If $i$ is degenerate, then $i'$ as defined in the previous paragraph is in $\mathcal P(t)$. 
In any case, there is a path $(i_0,...,i_r) $ in $\mathcal N(t)$ that connects $i'$ to $i^*$, with $i^* \in \mathcal P(t)$ that holds on $\mathcal E_t$. We have that:
\begin{align}
    \delta_i(x_t) \leq &\quad \delta_{i'}(x_t) = \sum_{s=1}^{r} \delta_{i_{s-1}, i_s}(x_t) \label{eqn:case1:parent_action} \\
    \leq &\quad  2 \sum_{s=1}^{r} c'_{i_{s-1}, i_s}(x_t) \label{eqn:case1:conf} \\
    \leq &\quad  2 \sum_{s=1}^{r}  c_{i_{s-1}, i_s}(x_t) \label{eqn:case1:deterministic_ub}  \\
    = &\quad  2 \sum_{s=1}^{r}  \sum_{a \in V_{i_{s-1}, i_s} } \| v_{ i_{s-1}i_{s}a } \|_{\infty} \sigma_a \left(\sqrt{ d \log(t) + 2 \log(1/\delta_t) } + \sigma_a \right) \| x_t \|_{G_{a,t}^{-1} }  \label{eqn:case1:expand_formula}  \\
    \leq &\quad  2 \sum_{s=1}^{r}  \sum_{a \in V_{i_{s-1}, i_s} } W_{k(t)} \Psi \left(\sqrt{ \Psi \log(t) + 2 \log(1/\delta_t)  } + \Psi \right) \| x_t \|_{G_{k(t),t}^{-1} }  \label{eqn:case1:upper_bound_formula}  \\
    \leq &\quad  2 g_i W_{k(t)} \Psi \left( \sqrt{ d \log(t ) + 2 \log(1/\delta_t)  } + \Psi \right)   \| x_t \|_{G_{ k(t), t}^{-1} } \label{eqn:case1:cardinality} \\
    \leq &\quad  2 g_i W_{k(t)} \Psi \left(\sqrt{ d \log(t ) + 2 \log(1/\delta_t)  } + \Psi \right) \frac{ 1 }{ \sqrt{  \eta_{k(t)} f(t)  } } \label{eqn:case1:forced_exploration} \\
    \leq &\quad  \frac{2 g_i \Psi  \left(\sqrt{ d \log(t ) + 2 \log(1/\delta_t)  } + \Psi \right) }{ \sqrt{ f(t) } }  \max_{ 1 \leq k \leq N}  \frac{  W_k }{ \sqrt{\eta_k }} \label{eqn:case1:finalize},
\end{align}

Equation \ref{eqn:case1:parent_action} was derived from the definition of a parent action.
Equations \ref{eqn:case1:conf} and \ref{eqn:case1:deterministic_ub} follow from remark \ref{lemma2:remark}. 
In Equation \ref{eqn:case1:expand_formula}, we expand the formula of the confidence bound, defined in Section \ref{sec:contextual_setting}.

In Equation \ref{eqn:case1:cardinality}, we simplify the double sum by using the fact that $\| v_{ i_{s-1},i_{s},a } \|_{\infty}$ is upper bounded by $W_{k(t)}$ and that the cardinality of the double sum is $g_i$.
In Equation \ref{eqn:case1:forced_exploration} we use the upper bound on the Gram matrix obtained from the events considered in the Lemma. In Equation \ref{eqn:case1:finalize}, we finalize the upper-bound by considering the action in $\{ 1, \dots, N\}$ that maximizes $\frac{  W_k }{ \sqrt{\eta_k }}$.

This concludes the proof of the Lemma.
\end{proof}

\subsection{Bounding the sum of sub-optimality gaps}
\label{proof_lemma2_contextual}

The goal of this section is to establish an upper bound for the sum of sub-optimality gaps, specifically under the event denoted as $\mathcal E_t$, which signifies the success of the confidence interval.

\CBPside{}, as presented by \citet{bartok2012CBPside}, utilizes confidence bounds that are tailored for easy games exclusively. 
\RandCBPsidestar{} adopts a broader definition of confidence bounds, as originally introduced by \citet{bartokICML2012} and \citet{lienert2013exploiting}. 
This broader definition makes \RandCBPsidestar{} applicable to both easy and hard games.

\begin{lemma}
    When $\mathcal E_t$ holds, the sum of the sub-optimality gaps can be upper-bounded by
$$ \sqrt{ \sum_{s=1}^{n_k(T)} \delta_k(x_{t_k(s)})^2 }   \leq  2 g_k W_k  \Psi d^{1/2} \left(\sqrt{ d \log( T ) + 2 \log(1/\delta_t) } + \Psi \right) \sqrt{  2 \log(T) }, $$
where $n_k(t)$ is the total number of times action $k$ was played up to time $t$ and $t_k(s)$ is the round index where action $k$ was played for the $s$-th time.
\label{lemma:delta_upper_bound}

\end{lemma}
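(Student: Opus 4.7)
The plan is to bound $\delta_k(x_{t_k(s)})$ pointwise by the width of the randomized confidence bound at the round when action $k$ was played for the $s$-th time, then aggregate the squared widths over $s$ via the standard elliptical potential (log-determinant) lemma from linear bandit theory. This mirrors the non-contextual argument of Lemma \ref{lemma:lemmarandCBPside} but replaces the $1/\sqrt{n_a(t)}$ term by $\|x\|_{G_{a,t}^{-1}}$.

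Concretely, I would first argue that on $\mathcal{E}_t$, for every visit $s$ to action $k$ (so that $I_{t_k(s)} = k$), there is a neighbor-pair path $(i_0, \ldots, i_r)$ in $\mathcal{N}(t_k(s))$ linking the parent of $k$ to the context-optimal action $i^\star(x_{t_k(s)})$. Applying Remark inside the proof of Lemma \ref{lemma:lemmarandCBPside} edge by edge gives $\delta_k(x_{t_k(s)}) \leq 2\sum_{s'=1}^r c'_{i_{s'-1}, i_{s'}}(x_{t_k(s)}) \leq 2\sum_{s'=1}^r c_{i_{s'-1}, i_{s'}}(x_{t_k(s)})$. Expanding the deterministic confidence bound of Eq.~\ref{eq:contextual_CI}, upper bounding $\|v_{i_{s'-1} i_{s'} a}\|_2$ by $W_a$ and $\sigma_a$ by $\Psi$, using that the total number of terms over the path and observer sets is at most $g_k$, and invoking the decision rule of line~\ref{line:decision_rule_RandCBPside} (which ensures $W_a \|x_{t_k(s)}\|_{G_{a,t_k(s)}^{-1}} \leq W_k \|x_{t_k(s)}\|_{G_{k,t_k(s)}^{-1}}$ for the relevant $a$), I obtain the pointwise bound
\begin{equation*}
\delta_k(x_{t_k(s)}) \leq 2 g_k W_k \Psi \bigl(\sqrt{d\log(T)+2\log(1/\delta_t)} + \Psi\bigr)\, \|x_{t_k(s)}\|_{G_{k,t_k(s)}^{-1}}.
\end{equation*}

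Next, I would square and sum over $s = 1, \ldots, n_k(T)$ and factor out the context-independent constants, reducing the claim to showing $\sum_{s=1}^{n_k(T)} \|x_{t_k(s)}\|_{G_{k,t_k(s)}^{-1}}^2 \leq 2d \log(T)$. This is a direct application of the elliptical potential lemma (Abbasi-Yadkori et al., 2011) applied to the regression for action $k$: only the rounds in which $k$ is played update $G_{k,\cdot}$, so the log-determinant telescopes and yields the factor $d\log(T)$. Taking square roots gives exactly the stated inequality.

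The main obstacle is the bookkeeping in the pointwise step: the path decomposition involves confidence widths computed on different actions $a \in V_{i_{s'-1},i_{s'}}$, whose Gram matrices $G_{a,t}$ differ from $G_{k,t}$. I expect to handle this by leveraging the action-selection rule of \RandCBPsidestar{} (the argmax over $W_a w'_a(x_t)$), which guarantees that the played action $k$ dominates every observer action along the path in the product $W_a \|x\|_{G_{a,t}^{-1}}$, so the bound collapses to the single term involving $G_{k,t_k(s)}$. Once that reduction is justified, the remaining steps are routine algebra plus one invocation of the standard log-determinant lemma.
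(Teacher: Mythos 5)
Your overall skeleton matches the paper's: path decomposition through neighbor pairs on $\mathcal E_t$, bounding the randomized widths $c'$ by their deterministic counterparts $c$, replacing $\|v\|$ by $W_k$ and $\sigma_a$ by $\Psi$, and finishing with the elliptical potential lemma of Abbasi-Yadkori et al. But your key reduction step --- collapsing $\sum_a W_a\|x_{t}\|_{G_{a,t}^{-1}}$ over the observer actions along the path to the single term $W_k\|x_t\|_{G_{k,t}^{-1}}$ via the decision rule --- is a genuine gap, and it is not how the paper proceeds. First, the argmax on line~\ref{line:decision_rule_RandCBPside} is over $W_a w'_a(x_t)=W_a\,\sigma_a(Z_{a,t}+\sigma_a)\|x_t\|_{G_{a,t}^{-1}}$, which carries action-dependent deterministic factors $\sigma_a$ and independently sampled random factors $Z_{a,t}$ (possibly negative, since $A\leq 0$); dominance in this product does not imply $W_a\|x_t\|_{G_{a,t}^{-1}}\leq W_k\|x_t\|_{G_{k,t}^{-1}}$. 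Second, the argmax ranges only over $\mathcal S(t)=\mathcal P(t)\cup\mathcal N^+(t)\cup(\mathcal V(t)\cap\mathcal R(x_t))$, so observer actions $a\in V_{i_{s'-1},i_{s'}}$ that are neither plausibly optimal nor underplayed are not even compared against the played action. Third, the lemma is invoked in the regret decomposition also under the event $\mathcal D_t$ (term $B_2$), where the played action is by definition not the maximizer $k(t)$, so no such dominance is available there.

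The paper avoids all of this with the cruder but safe bound $\|x_t\|_{G_{a,t}^{-1}}\leq\max_{1\leq l\leq N}\|x_t\|_{G_{l,t}^{-1}}$, obtaining $\delta_k(x_t)^2\leq 4g_k^2W_k^2\Psi^2(\cdot)^2\max_l\|x_t\|^2_{G_{l,t}^{-1}}$ before summing and applying Lemma 10 of \citet{abbasi2011improved} (with the $\min(\cdot,1)$ clipping, which your write-up also omits and which is needed for the potential argument, since $\|x\|^2_{G^{-1}}$ need not be bounded by $1$). To repair your argument you would either have to adopt the paper's max-over-actions bound, or give a separate justification for why the Gram norms of observer actions can be controlled by that of the played action --- neither the selection rule nor the event $\mathcal E_t$ provides this.
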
 

\begin{proof}

Recall that $\delta_{I_t}(x_t)$ corresponds to the gap between action $k$ and the optimal action given context $x_t$. 
There exist a path of $r$ neighboring actions $I_t = k_0, k_1, ..., k_r = i^\star(x_t)$ between the action played and the optimal action. 
This sequence always exists thanks to how the algorithm constructs the set of admissible actions \footnote{for a proof of this statement, refer to \citet{bartok2012CBPside}. }. 
The first step of the proof consists in upper-bounding the sub-optimality gap:

\begin{align}
  \delta_{k}(x_t)^2 \leq  & \quad \left( \sum_{s = 1}^{r} 2 c'_{k_{s-1},k_s}(x_t) \right)^2 \label{eqn:step1} \\
                    \leq  & \quad \left( \sum_{s = 1}^{r} 2 c_{k_{s-1},k_s}(x_t) \right)^2 \\
                    \leq & \quad 4 \left(  \sum_{s=1}^{r}  \sum_{a \in V_{k_{s-1}, k_s} } \| v_{ k_{s-1},k_{s},a } \|_{\infty} \sigma_a  \left(\sqrt{ d \log(t) + 2 \log(1/\delta_t) } + \sigma_a \right) \| x_t \|_{G_{a,t}^{-1} } \right)^2  \\
                    \leq & \quad 4 \left(  \sum_{s=1}^{r}  \sum_{a \in V_{k_{s-1}, k_s} } \| v_{ k_{s-1},k_{s},a } \|_{\infty} \sigma_a  \left(\sqrt{ d \log(t) + 2 \log(1/\delta_t)  } + \sigma_a \right) \| x_t \|_{G_{a,t}^{-1} } \right)^2  \\
                    \leq & \quad 4 \left(  \sum_{s=1}^{r}  \sum_{a \in V_{k_{s-1}, k_s} } W_k \Psi  \left(\sqrt{ d \log(t ) + 2 \log(1/\delta_t) } + \Psi \right) \max_{1 \leq l \leq N}\| x_t \|_{G_{l,t}^{-1} } \right)^2  \label{eqn:final_step} \\
                    \leq & \quad  4 \left(  g_{k} W_k \Psi  \left(\sqrt{ d \log(t ) + 2 \log(1/\delta_t) } + \Psi \right) \max_{1 \leq l \leq N} \| x_t \|_{G_{l,t}^{-1} } \right)^2  \label{eqn:upper_max} \\
                    \leq & \quad  4 g_{k}^2 W_k^2 \Psi^2  \left(\sqrt{ d \log(t ) + 2 \log(1/\delta_t) } + \Psi \right)^2 \max_{1 \leq l \leq N} \| x_t \|^2_{G_{l,t}^{-1} } \label{eqn:final_upper_bound} 
\end{align}
For the detail between Equation \ref{eqn:step1} and Equation \ref{eqn:final_step}, we refer the reader to the steps described in the proof of Lemma \ref{lemma:lemmarandCBPside}. In Equation \ref{eqn:upper_max} we consider the greatest weighted norm over the action space $\{1, \dots, N\}$ to be able to remove it from the double sum.

We now analyse the square root of the sum of the sub-optimality gaps over the time horizon of the action $k$. We start with the result obtained in Equation \ref{eqn:final_upper_bound}:

\begin{align}
 \sqrt{ \sum_{t=1}^{n_k(T)} \delta_k(x_{t_k(s)})^2 }   \leq  & \quad  \sqrt{ \sum_{s=1}^{n_k(T)}  \min( 4 g_k^2 W_k^2 \Psi^{2}  \left(\sqrt{ \log( t_k(s) ) + 2 \log(1/\delta_s)  } + \Psi \right)^2  \max_{1 \leq l \leq N} \| x_{t_k(s)} \|^2_{G_{l, t_k(s)}^{-1} }, 1) } \\
                                \leq & \quad  \sqrt{   4 g_k^2 W_k^2 \Psi^2 \left(\sqrt{ d \log( T ) + 2 \log(1/\delta_T)  } + \Psi \right)^2 \sum_{s=1}^{n_k(T)}  \min(  \max_{1 \leq l \leq N} \| x_t \|_{G_{l, t_k(s) }^{-1} }^2, 1) } \\
                                \leq & \quad 2 g_k W_k  \Psi \left(\sqrt{ d \log( T ) + 2 \log(1/\delta_T) } + \Psi \right) \sqrt{  2 d \log( 1+n_k(T) E^2) } \label{eqn:abbassi_upperbound} \\
                                \in & \quad O( 2 g_k W_k  \Psi d^{1/2} \left(\sqrt{ d \log( T ) + 2 \log(1/\delta_T)  } + \Psi \right) \sqrt{  2  \log(T) } )\label{eqn:CBPside_final_result}  
\end{align}
In Equation \ref{eqn:abbassi_upperbound} we have used the upper bound on the sum of weighted norms presented in Lemma 10 of \citet{abbasi2011improved}, with the assumption $\| x_t\|_2 \leq E$. 
The difference between Line \ref{eqn:CBPside_final_result} and Equation 6 in \citet{bartok2012CBPside} is that a $\sqrt{T}$ term is not appearing. We will see that the $\sqrt{T}$ term appears appears later in the analysis from the Cauchy-Schwartz inequality. 
\end{proof}

\subsection{Regret analysis of \RandCBPside{} using Lemma \ref{lemma:lemmarandCBPside} }
\label{proof_contextual_regret}

In this Section, we analyse the regret rate of \RandCBPsidestar{} on easy and hard games. 
The initial strategy \CBPside{} \citep{bartok2012CBPside} has a guarantee restricted to easy games. 
The key component to obtain the guarantee of \RandCBPside{} to hard games is to define underplayed actions in a suitable way for the contextual setting. 

\begin{proof}

First, we decompose the regret around the event $\mathcal E_t$ and its complimentary: 

\begin{align}
\mathbb{E}[R_T] &= \sum_{t=1}^T \mathbb{E}[\textbf{L}[I_t,J_t]] - \sum_{t=1}^T \mathbb{E}[\textbf{L}[i^\star(x_t),J_t]] \\
&= \sum_{t=1}^{T} \mathbb{E}[\delta_{I_t}(x_t)] \nonumber \\
&= \sum_{t=1}^T \mathbb{E}[\mathds{1}_{\{\mathcal{E}_t\}} \delta_{I_t}(x_t)] + \mathbb{E}[\mathds{1}_{\{\mathcal{E}_t^c\}} \delta_{I_t}(x_t)] \\
&= \underbrace{\sum_{t=1}^T \mathbb{E}[\mathds{1}_{\{\mathcal{E}_t^c\}} \delta_{I_t}(x_t)]}_{A} + \underbrace{\sum_{t=1}^T \mathbb{E}[\mathds{1}_{\{\mathcal{E}_t\}} \delta_{I_t}(x_t)]}_{B}      
\end{align}

\subsection{Term A}
In this Section, we will study component A. 
Assume for each action $k$ at time $t$, there exist a number such that $p( \mathcal E_t^c, I_k = k ) \leq \beta_{k,t}$. Therefore, there exist a sequence of numbers $\beta_{k,1}, \beta_{k,2}, \dots, \beta_{k,T} \in [0,1]$.
These numbers can be seen as some probabilities that $\mathcal E^c$ occurs. 
In the previous analysis \citep{bartok2012CBPside} the numbers where action independent. 
In this work, the numbers are action dependent i.e. we add a dependency on $k$ because the strategy \RandCBPsidestar{} generates a sample $Z_{k,t}$ for each action which influences the value of $\beta_{k,t}$.
Define $a(t) = \argmax_{1 \leq k \leq N} \beta_{k,t}$:

\begin{dmath}
\sum_{ t = 1 }^T \mathbb{E} [ \mathds{1}_{ \{ \mathcal E^c_t  \} }  \delta_{I_t}(x_t)  ]  = \quad  \sum_{ t = 1 }^T \sum_{ k = 1 }^N \mathbb{E} [ \mathds{1}_{ \{ \mathcal E^c_t, k = I_t  \} }  \delta_{k}(x_t)  ]   \\
             \leq \quad \sum_{ t = 1 }^T \sum_{ k = 1 }^N \mathbb{E} [ \mathds{1}_{ \{ \mathcal E^c_t, k = I_t  \} }  ], \text{ because $\delta_k(x_t)\leq1$}  \\
            = \quad \sum_{ t = 1 }^T \sum_{k=1}^N \beta_{k,t}  \\
            \leq \quad \sum_{ t = 1 }^T N \beta_{a(t),t}  
\end{dmath}

\subsection{Term B:}

Consider a specific action $k$. The regret decomposition is decomposed into multiple components. depending whether $\mathcal D_t$ occurs or not. This decomposition was initially presented in \citet{bartokICML2012} in the non-contextual case. Here, we adapt the decomposition to the contextual case, as demonstrated by the presence of contextual sub-optimality gaps $\delta_k(x_t)$.

\begin{dmath}
\mathbb{E} [  \sum_{ t = 1 }^T   \mathds{1}_{ \{ \mathcal E_t, I_t = k  \} } \delta_{k}(x_t) ]   =   \quad   
   \delta_k(x_t) \label{eq:elemA} +  \\
   \underbrace{ \sum_{ t = N+1 }^T \mathbb{E} [ \mathds{1}_{ \{ \mathcal E_t, D_t, k \in P(t) \cup N^+(t), I_t=k \} }    ]  \delta_k(x_t)  }_{B_1} + \\
   \underbrace{ \sum_{ t = N+1 }^T \mathbb{E} [ \mathds{1}_{ \{ \mathcal E_t, D_t, k \notin P(t) \cup N^+(t), I_t=k \} }   ] \delta_k(x_t)  }_{B_2} +\\
   \underbrace{ \sum_{ t = N+1 }^T \mathbb{E} [ \mathds{1}_{ \{ \mathcal E_t, D_t^c, k \in P(t) \cup N^+(t), I_t=k \} }   ]  \delta_k(x_t) }_{B_3} +\\
   \underbrace{ \sum_{ t = N+1 }^T \mathbb{E} [ \mathds{1}_{ \{ \mathcal E_t, D_t^c, k \notin P(t) \cup N^+(t), I_t=k \} }  ] \delta_k(x_t) }_{B_4}  \\
\end{dmath}

The first term corresponds to the regret suffered at the initialization of the algorithm, where each action is played once.  We will now focus on bounding the terms $B_1$, $B_2$, $B_3$, and $B_4$.

\paragraph{Term $B_1$:} 
Consider the case $\mathcal E_t \cap D_t \cap \{ k \in P(t) \cup N^+(t) \} $.  From case 1 of Lemma \ref{lemma:lemmarandCBPside}, we have the relation: 
\begin{equation}
    \delta_k(x_t) \leq \frac{2 g_k \Psi  \left(\sqrt{ d \log(t ) + 2 \log(1/\delta_t) } + \Psi \right) }{ \sqrt{ f(t) } }  \max_{1 \leq j \leq N}  \frac{  W_j }{ \sqrt{\eta_j }}
\end{equation}

\begin{dmath}
\sum_{ t = N+1 }^T \mathbb{E} [ \mathds{1}_{ \{ \mathcal E_t, D_t, k \in \mathcal P(t) \cup N^{+}(t), I_t = k \} }  \delta_{k}(x_t)  ]  \leq  \quad  \\ 
T  \frac{2 g_k \Psi  \left(\sqrt{ d \log(T) + 2 \log(1/\delta_T) } + \Psi \right) }{ \sqrt{ f(T) } }  \max_{ 1 \leq j \leq N}  \frac{  W_j }{ \sqrt{\eta_j } }
\end{dmath}

\paragraph{Term $B_2$:} 
Consider the case $\mathcal E_t \cap D_t \cap \{ k \notin P(t) \cup N^+(t) \} $. It follows that $k \in \mathcal V(t) \cap \mathcal R(x) \subseteq \mathcal R(x)$. Hence, we know by definition of the exploration rule that $ 1/\|x\|^2_{G_{k,t}^{-1}} < \eta_k f(t) $.

\begin{dmath}
\sum_{ t = N+1 }^T \mathds{1}_{ \{ \mathcal E_t, D_t, k \notin \mathcal P(t) \cup N^{+}(t), I_t = k \} }      \leq \quad \sum_{ t = N+1 }^T \mathds{1}_{ \{ I_t = k, 1/\|x_t\|^2_{G_{k,t}^{-1}} < \eta_k f(t) \} } + \sum_{ t = N+1 }^T \mathds{1}_{ \{ \mathcal E_t, D_t, k \notin \mathcal P(t) \cup N^{+}(t), I_t = k, 1/\|x_t\|^2_{ G_{k,t}^{-1} } \geq \eta_k f(t)} \}   \label{eqn:B2_indicator}\\
 \leq \quad \eta_{k} f(T) 
\end{dmath}
In Equation \ref{eqn:B2_indicator} there are two antagonist indicators. The second one simplifies to 0 because the inequality is never verified due to $D_t$. We now apply the Cauchy-Schwartz inequality:

\begin{align}
\sum_{ t = N+1 }^T \mathds{1}_{ \{ \mathcal E_t, D_t, k \notin \mathcal P(t) \cup N^{+}(t), I_t = k \} } \delta_k(x_t) \leq & \quad \sqrt{ \left( \sum_{ t = N+1 }^T \mathds{1}_{ \{ \mathcal E_t, D_t, k \notin \mathcal P(t) \cup N^{+}(t), I_t = k \} }^2 \right) \left( \sum_{ t = N+1 }^T  \delta_k(x_t)^2 \right) } \label{eqn:b2_start}\\
\leq & \quad  \left( \sqrt{ \sum_{ t = N+1 }^T \mathds{1}_{ \{ \mathcal E_t, D_t, k \notin \mathcal P(t) \cup N^{+}(t), I_t = k \} } } \right) \left( \sqrt{ \sum_{ t = N+1 }^T  \delta_k(x_t)^2 } \right) \label{eqn:b2_sqrt} \\
 \leq & \quad \sqrt{ \eta_{k} f(T) } \left( \sqrt{ \sum_{ t = N+1 }^T  \delta_k(x_t)^2 } \right) \label{eqn:caseB2}
\end{align}

In Equation \ref{eqn:b2_start} we use the relation $ \left(\sum_i a_i b_i \right)^2 \leq \left( \sum_i a_i^2 \right) \left( \sum_i b_i^2 \right) $. In Equation \ref{eqn:b2_sqrt} we notice that the square of an indicator is equal to the indicator. We also use the relation $\sqrt{ab}\leq \sqrt{a} \sqrt{b}$.

\paragraph{Term $B_3$:} 
Consider the event $\mathcal E_t \cap \mathcal D_t^c \cap \{ k \in \mathcal P(t) \cup N^+(t) \}$. We will use the Cauchy-Schwartz inequality to simplify the expression.

\begin{align}
\sum_{ t = N+1 }^T \mathds{1}_{ \{ \mathcal E_t, \mathcal D_t^c, k \in \mathcal{P}(t)\cup N^+(t), I_t=k \} } \delta_k(x_t) \leq & \quad \sqrt{ \left( \sum_{ t = N+1 }^T \mathds{1}_{ \{ \mathcal E_t, \mathcal D_t^c, k \in \mathcal{P}(t)\cup N^+(t), I_t=k \} }^2 \right) \left( \sum_{ t = N+1 }^T  \delta_k(x_t)^2 \right) } \\
\leq & \quad  \left( \sqrt{ \sum_{ t = N+1 }^T \mathds{1}_{ \{ \mathcal E_t, \mathcal D_t^c, k \in \mathcal{P}(t)\cup N^+(t), I_t=k \} } } \right) \left( \sqrt{ \sum_{ t = N+1 }^T  \delta_k(x_t)^2 } \right)  \\
 \leq & \quad \sqrt{ T } 2 g_k W_k \Psi d^{1/2} \left(\sqrt{ d \log( T ) + 2 \log(1/\delta_T)  } + \Psi \right) \sqrt{  2  \log(T) }  \\
 \leq & \quad \sqrt{ T } 2 g_k W_k \Psi d^{1/2} \left(\sqrt{ d \log( T ) + 2 \log(1/\delta_T)  } + \Psi \right) \sqrt{  2  \log(T) } 
\end{align}

\paragraph{Term $B_4$:}
Consider the event $\{ \mathcal E_t, \mathcal D^c_t, k \notin \mathcal P(t) \cup \N^+(t) \}$.

Since $k$ is not in $ \mathcal P(t) \cup N^+(t)$, we also have that $ \|x \|^2_{G_{k,t}^{-1}}   \leq \frac{1}{  \eta_k f(t) } \iff \frac{1}{  \|x \|^2_{G_{k,t}^{-1}} }    \geq  \eta_k f(t) $. 

We get:
\begin{equation}
    \mathbb{E}[ \sum_{t=N+1}^{T} \mathds{1}_{ \{ \mathcal E_t, \mathcal D_t^c, k \notin \mathcal{P}(t)\cup N^+(t), I_t=k \} } ] \leq \min \Bigl\{ T,   \eta_k f(T)  \Bigl\}    
\end{equation}
We now use Cauchy-Schwartz,
\begin{align}
\sum_{ t = N+1 }^T \mathds{1}_{ \{ \mathcal E_t, \mathcal D_t^c, k \notin \mathcal{P}(t)\cup N^+(t), I_t=k \} } \delta_k(x_t) \leq & \quad \sqrt{ \left( \sum_{ t = N+1 }^T \mathds{1}_{ \{ \mathcal E_t, \mathcal D_t^c, k \notin \mathcal{P}(t)\cup N^+(t), I_t=k \} }^2 \right) \left( \sum_{ t = N+1 }^T  \delta_k(x_t)^2 \right) } \\
\leq & \quad  \left( \sqrt{ \sum_{ t = N+1 }^T \mathds{1}_{ \{ \mathcal E_t, \mathcal D_t^c, k \notin \mathcal{P}(t)\cup N^+(t), I_t=k \} } } \right) \left( \sqrt{ \sum_{ t = N+1 }^T  \delta_k(x_t)^2 } \right)  \\
 \leq & \quad \sqrt{ \min \Bigl\{ T,   \eta_k f(T)  \Bigl\}  } \left( \sqrt{ \sum_{ t = N+1 }^T  \delta_k(x_t)^2 } \right)\label{eqn:caseB4}
\end{align}

\end{proof}

\subsection{Conclusion:}

The following theorem is an individual upper bound on the regret of \RandCBPside{}.

\begin{theorem}
Consider the interval $[A,B]$, with $B = \sqrt{ d \log(t) + 2 \log(1/t^2) } $ and $A\leq0$.
Set the randomization over $K$ bins with a probability $\epsilon$ on the tail and a standard deviation $\sigma$. 
Let $f(t) = \alpha^{1/3} t^{2/3} \log(t)^{1/3}$, $\eta_a = W_a^{2/3}$ and $\alpha>1$.
Assume $\| x_t\|_2 \leq E$ and positive constants $C_1$, $C_2$, $C_3$, and $C_4$. Note $W = \max_{1 \leq k \leq N} W_k$.  

\begin{dmath}
  \mathbb{E} [ R(T) ]   \leq  \quad     \sum_{ t = 1 }^T N \beta_{a(t),t} + N + \\   
                                        \sum_{ 1 \leq k \leq N } \left( \sqrt{ \sum_{ s = 1 }^{n_k(T)}  \delta_k(x_{t_k(s)} )^2 } \right) \sqrt{T} + \\  
                                        \sum_{ k \in \mathcal V \backslash N^+ } \left( \sqrt{ \sum_{ s = 1 }^{n_k(T)}  \delta_k(x_{t_k(s)} )^2 } \right) \left( \sqrt{ \eta_k f(T) } + \sqrt{ \min \Bigl\{ T,   \eta_k f(T)  \Bigl\} } \right) + \\ 
                                        \sum_{ k \in \mathcal V \backslash N^+ } T  \frac{2 g_k \Psi  \left(\sqrt{ d \log(T) + 2 \log(1/\delta_T) } + \Psi \right) }{ \sqrt{ f(T) } }  W^{2/3} ,\label{eqn:regret_RandCBPside}
\end{dmath}
\label{thm:contextual_result}
\end{theorem}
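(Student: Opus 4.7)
The plan is to follow the template of the non-contextual analysis in Appendix \ref{appendix:regretRandCBP}, adapted to the linear contextual setting through three modifications: the pseudo-count-based underplayed-action criterion (Definition \ref{def:rarely_sampled_actions_contextual}), the contextual confidence bound in Equation \ref{eq:contextual_CI}, and the squared-gap sum estimate from Lemma \ref{lemma:delta_upper_bound}. First I decompose the regret around the confidence-interval-success event:
\begin{equation*}
\mathbb{E}[R_T] = \underbrace{\sum_{t=1}^T \mathbb{E}[\mathds{1}_{\{\mathcal{E}_t^c\}} \delta_{I_t}(x_t)]}_{A} + \underbrace{\sum_{t=1}^T \mathbb{E}[\mathds{1}_{\{\mathcal{E}_t\}} \delta_{I_t}(x_t)]}_{B}.
\end{equation*}
For term $A$, since $\delta_k(x_t) \in [0,1]$, using the per-action failure probability $\mathbb{P}(\mathcal{E}_t^c, I_t = k) \leq \beta_{k,t}$ and setting $a(t) = \argmax_k \beta_{k,t}$ yields $A \leq \sum_{t=1}^T N \beta_{a(t),t}$. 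The randomization governs the decay of $\beta_{k,t}$ through a Hoeffding-type argument analogous to the one in Appendix \ref{subsection:boundingB}.

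Next I split term $B$ along the decaying-exploration event $\mathcal{D}_t$ and the plausibility event $\{k \in \mathcal{P}(t) \cup N^+(t)\}$, absorbing the initialization cost of $N$ (one round per action) separately. This produces four sub-terms $B_1, B_2, B_3, B_4$. Sub-term $B_1$ corresponds to $\mathcal{E}_t \cap \mathcal{D}_t \cap \{k \in \mathcal{P}(t) \cup N^+(t)\}$ and is handled by invoking Lemma \ref{lemma:lemmarandCBPside} to upper bound $\delta_k(x_t)$ pointwise; since $\mathcal{D}_t$ only triggers for purely information-seeking actions, only $k \in \mathcal{V} \backslash N^+$ contribute, yielding the explicit $T$-factor closed-form term in Equation \ref{eqn:regret_RandCBPside}. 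Sub-terms $B_2, B_3, B_4$ are treated via Cauchy--Schwartz,
\begin{equation*}
\sum_t \mathds{1}_{A_t} \delta_k(x_t) \leq \sqrt{\textstyle\sum_t \mathds{1}_{A_t}} \cdot \sqrt{\textstyle\sum_t \delta_k(x_t)^2},
\end{equation*}
leaving the squared-gap sum abstract (exactly the quantity that Lemma \ref{lemma:delta_upper_bound} bounds later). The indicator count is at most $\eta_k f(T)$ for $B_2$ (because $k \notin \mathcal{P}(t) \cup N^+(t)$ on $\mathcal{D}_t^c$ forces $k \in \mathcal{R}(x_t)$, i.e.\ $1/\|x_t\|_{G_{k,t}^{-1}}^2 < \eta_k f(t)$), at most $T$ for $B_3$, and at most $\min\{T, \eta_k f(T)\}$ for $B_4$.

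The main obstacle is bookkeeping which action set indexes which sub-term: terms containing $\mathcal{D}_t$ or the membership $k \notin \mathcal{P}(t) \cup N^+(t)$ contribute only through $k \in \mathcal{V} \backslash N^+$ (purely information-seeking actions, which vanish in easy games where $\mathcal{V} = N^+$), whereas $B_3$ runs over all $k \in \{1, \dots, N\}$. A secondary subtlety is the re-indexing of the Cauchy--Schwartz right-hand side: each indicator forces $I_t = k$, so the squared-gap sum collapses to $\sum_{s=1}^{n_k(T)} \delta_k(x_{t_k(s)})^2$, matching the theorem statement. Summing $A$, the initialization $N$, and $B_1 + B_2 + B_3 + B_4$ across $k$ then yields Equation \ref{eqn:regret_RandCBPside}; the easy-game and hard-game corollaries of Theorem \ref{thm:contextual_upperbound} follow by substituting $f(t) = \alpha^{1/3} t^{2/3} \log^{1/3}(t)$, $\eta_k = W_k^{2/3}$, and the explicit bound from Lemma \ref{lemma:delta_upper_bound} (noting that $\mathcal{V} \backslash N^+ = \emptyset$ in easy games kills the $T^{2/3}$ contribution).
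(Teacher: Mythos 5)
Your proposal is correct and follows essentially the same route as the paper's proof: the same decomposition of the regret around $\mathcal E_t$, the same bound on the failure term via $\beta_{k,t}$ and $a(t)$, the same four-way split of the success term along $\mathcal D_t$ and $\{k \in \mathcal P(t)\cup N^+(t)\}$, with Lemma \ref{lemma:lemmarandCBPside} applied pointwise to $B_1$ and Cauchy--Schwartz with the indicator counts $\eta_k f(T)$, $T$, and $\min\{T,\eta_k f(T)\}$ for the remaining sub-terms. The only discrepancy is a harmless labeling swap of which of the two $k\notin\mathcal P(t)\cup N^+(t)$ cases carries $\mathcal D_t$ versus $\mathcal D_t^c$, which does not affect the assembled bound.
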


where $\mathcal V = \bigcup_{i,j \in \mathcal N} V_{i,j}$, and $N^+ =\bigcup_{i,j \in \mathcal N} N^+_{i,j}$.

\paragraph{Result on easy games:}
On easy games, the set $k \in \mathcal V \backslash N^+$ is empty which simplifies the expression in Equation \ref{eqn:regret_RandCBPside}. The regret rate can be expressed as:
\begin{align}
   \mathbb{E} [ R(T) ]   \leq & \quad \sum_{ t = 1 }^T N \beta_{a(t),t} + N +  N \sqrt{ T } 2 g_k W_k  \Psi d^{1/2} \left(\sqrt{ d \log( T ) + 2 \log(1/\delta_T)  } + \Psi \right) \sqrt{  2  \log(T) } 
\end{align}

\begin{corollary}
Consider an easy game and $\delta_t = 1/t^2$, and the same assumptions as in theorem \ref{thm:contextual_result}, there exist constants $C_1$ and $C_2$ such that the expected regret of \RandCBPside{} on this game can be upper bounded independently of the choice of $p^\star$ as:
$$ \mathbb{E}[ R_T ]\leq C_1 N + C_2 N d \sqrt{T} \log(T) $$
\end{corollary}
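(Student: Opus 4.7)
The plan is to follow the regret decomposition framework established for \CBP{} in Appendix \ref{appendix:regretRandCBP}, adapted to the contextual setting via the pseudo-count definition of underplayed actions (Def. \ref{def:rarely_sampled_actions_contextual}) and the lemmas already stated in Appendix \ref{appendix:regret_CBPside}. I would start by splitting the expected regret around the confidence-success event $\mathcal{E}_t = \{|\hat\delta_{i,j}(x_t) - \delta_{i,j}(x_t)| \leq c_{i,j}(x_t)\}$, giving a failure component $\sum_t \mathbb{E}[\mathds{1}_{\mathcal{E}_t^c}\delta_{I_t}(x_t)]$ and a success component $\sum_t \mathbb{E}[\mathds{1}_{\mathcal{E}_t}\delta_{I_t}(x_t)]$.

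For the failure component, I would transport the randomization argument of Section \ref{subsection:boundingB} to the contextual confidence bound of Eq. \ref{eq:contextual_CI}. Since each $Z_{ijt}$ is sampled from a discretized distribution whose top bin $B = \sqrt{d\log(t) + 2\log(1/t^2)}$ carries non-negligible probability, a union bound over the pairs in $\mathcal{N}$ combined with the self-normalized deviation inequality from \citet{abbasi2011improved} shows that $\sum_t \mathbb{P}(\mathcal{E}_t^c)$ is summable, contributing the constant-in-$T$ term $C_1 N$ (resp.\ $C_3 N$) in both bounds.

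For the success component, I would invoke Theorem \ref{thm:contextual_result}, which already yields a per-action decomposition into four pieces $B_1, B_2, B_3, B_4$ depending on whether the decaying exploration rule $\mathcal{D}_t$ is active and whether $k \in \mathcal{P}(t) \cup N^+(t)$. Lemma \ref{lemma:lemmarandCBPside} handles the $B_1$ and $B_3$ contributions by upper bounding $\delta_k(x_t)$ through the parent-action path argument, while Lemma \ref{lemma:delta_upper_bound} combined with Cauchy--Schwarz converts the pseudo-count constraint $1/\|x_t\|^2_{G_{a,t}^{-1}} \leq \eta_a f(t)$ into $\sqrt{T}$ and $\sqrt{f(T)}$ factors for $B_2$ and $B_4$. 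For easy games, local observability gives $\mathcal{V}\setminus N^+ = \emptyset$, so $B_2$ and $B_4$ vanish and only the $\sqrt{T}\log(T)\,d$ factor from $B_1 + B_3$ survives, summed over $N$ actions; applying the corrected dependence on $\sigma_a \leq \Psi$ from Remark \ref{rq:conf_def} yields the $C_2 N d \sqrt{T}\log(T)$ rate. For hard games, the $B_2, B_4$ contributions become active and must be balanced against the $B_1$ exploration term, which scales as $T/\sqrt{f(T)}$; setting $f(t) = \alpha^{1/3} t^{2/3}\log(t)^{1/3}$ equalizes the two and delivers the $C_4\sqrt{d}\log^{1/3}(T)\,T^{2/3}$ rate.

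The main obstacle will be showing that randomization does not degrade the failure probability enough to dominate the bound: the argument must reproduce, in the contextual linear setting, the fact that placing the top mass of the discretized distribution at $B = \sqrt{d\log(t) + 2\log(1/t^2)}$ leaves the randomized criterion at least as strong as the deterministic one with probability $\Omega(1/t^2)$. A secondary subtlety is tracking the $d$ and $N$ dependencies cleanly: the deterministic bound of \citet{bartok2012CBPside} loses a factor $d$ and a factor $\sqrt{N}$ by overestimating the trace factor, and recovering the improved $Nd$ scaling requires carefully substituting $\sigma_a \leq \Psi$ wherever the old analysis used $d$, as noted in Remark \ref{rq:conf_def}. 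Finally, the pseudo-count $1/\|x_t\|^2_{G_{a,t}^{-1}}$ must be shown to play the role of $n_a(t)$ faithfully under the Sherman--Morrison update so that Lemma \ref{lemma:lemmarandCBPside} goes through unchanged; this is where the continuous context space is handled without any finiteness assumption on $\mathcal{X}$.
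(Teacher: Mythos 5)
Your overall skeleton --- splitting on $\mathcal{E}_t$, making the failure term summable via the top-bin mass of the randomized distribution so that it contributes $C_1 N$, running the four-way $B_1,\dots,B_4$ decomposition, and using $\mathcal{V}\setminus N^+=\emptyset$ on easy games --- is the paper's route. The gap is in which lemma you attach to which term, and it is not cosmetic: it breaks the $\sqrt{T}$ rate. You assign Lemma \ref{lemma:lemmarandCBPside} (the parent-action path bound) to both $B_1$ and $B_3$, and reserve the Cauchy--Schwarz/pseudo-count machinery for $B_2$ and $B_4$. But Lemma \ref{lemma:lemmarandCBPside} is proved on the event $\mathcal{E}_t\cap\mathcal{D}_t$ --- its argument needs $I_t\neq k(t)$ to conclude $k(t)\notin\mathcal{R}(x_t)$ and hence $\|x_t\|_{G^{-1}_{k(t),t}}\leq 1/\sqrt{\eta_{k(t)}f(t)}$ --- so it does not apply to $B_3$, which lives on $\mathcal{D}_t^c$. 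In the paper, $B_3$ is the \emph{only} term that survives on easy games, and it is handled by Cauchy--Schwarz with the indicator sum bounded trivially by $T$ and with $\sqrt{\sum_t\delta_k(x_t)^2}$ bounded by Lemma \ref{lemma:delta_upper_bound} (the elliptical-potential bound), which with $\delta_t=1/t^2$ gives $O(d\log T)$ per action and hence the $C_2\, N d\sqrt{T}\log(T)$ term. Under your assignment, the pieces carrying the $\sqrt{T}$ factor ($B_2$, $B_4$) vanish on easy games, and nothing in your plan produces the claimed rate.

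Relatedly, your claim that ``$B_1+B_3$ survives'' on easy games at the $\sqrt{T}\log(T)\,d$ rate is internally inconsistent with $f(t)=\alpha^{1/3}t^{2/3}\log^{1/3}(t)$: the $B_1$ bound obtained from Lemma \ref{lemma:lemmarandCBPside} is of order $T/\sqrt{f(T)}\sim T^{2/3}\log^{-1/6}(T)$, which would dominate $\sqrt{T}\log(T)$ and degrade the corollary to a $T^{2/3}$ rate. In Theorem \ref{thm:contextual_result} the $B_1$-type term is confined to $k\in\mathcal{V}\setminus N^+$ and therefore disappears on locally observable games; you need that confinement (or an argument that $\mathcal{D}_t$ is vacuous on easy games), not merely $\mathcal{V}\setminus N^+=\emptyset$ applied to $B_2$ and $B_4$. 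The remainder of your plan --- the failure-probability treatment, the $\sigma_a\leq\Psi$ bookkeeping from Remark \ref{rq:conf_def} that yields $Nd$ rather than $N^{3/2}d^2$, and the role of the pseudo-counts --- matches the paper.
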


The guarantee of \CBPside{} on easy games proposed in \citet{bartok2012CBPside} is $C_1 N + C_2 N^{3/2} d^2 \sqrt{T} \log{T}$.  Here, the dependency drops from $d^2$ to $d$ simply because we corrected the confidence bound formula, but this result should also apply to \CBPside{}.

\paragraph{Result on hard games:}
On hard games, the set $\mathcal V \backslash N^+$ is not empty.

We need to study the terms of the regret expression to identify which one dominates. The regret expression is:

\begin{align}
  \mathbb{E} [ R(T) ] & \leq \sum_{ t = 1 }^T N \beta_{a(t),t} + N + \nonumber \\
  & \sum_{ k \in \underbar N } \sqrt{ T } 2 g_k W_k  d^{3/2} \left(\sqrt{ d \log( T ) + 2 \log(1/\delta_T)  } + \Psi \right) \sqrt{  2  \log(T) }  + \nonumber \\
  & \sum_{ k \in \mathcal V \backslash N^+ } \left( \sqrt{\eta_k f(T) } + \min \Bigl\{ \sqrt{T}, \sqrt{\eta_k f(T) } \Bigl\} \right) 2 g_k W_k  \Psi d^{1/2} \left(\sqrt{ d \log( T ) + 2 \log(1/\delta_T)  } + \Psi \right) \sqrt{  2  \log(T) } + \nonumber \\
  & \sum_{ k \in \mathcal V \backslash N^+ } T  \frac{2 g_k \Psi  \left(\sqrt{ \Psi \log(T) + 2 \log(1/\delta_T) } + \Psi \right) }{ \sqrt{ f(T) } }  W^{2/3}
\end{align}

We will now study the last term in the regret expression. 
If we choose $\delta_t = 1/t^2$, we can set $f(t) = t^{2/3} \log(t)^{1/3}$ and $\eta_k = W_k^{2/3}$, we have $\sqrt{\eta_k f(T) } + \min \Bigl\{ \sqrt{T}, \sqrt{\eta_k f(T) } \Bigl\} \in O( \sqrt{\eta_k f(T) } )$

\begin{align}
   \sum_{ k \in \mathcal V \backslash N^+ } T  \frac{2 g_k \Psi  \left(\sqrt{ d \log(T) + 2 \log(1/\delta_T) } + \Psi \right) }{ \sqrt{ f(T) } }  W^{2/3} & = \sum_{ k \in \mathcal V \backslash N^+ } T  \frac{2 g_k \Psi \left(\sqrt{ (d+4) \log(T) } + \Psi \right) }{ \sqrt{ f(T) } }  W^{2/3} \\
                                    & \in O(  2 g_k \Psi d^{1/2} T  \frac{ \sqrt{ \log(T) }  }{ \sqrt{ f(T) } }  W^{2/3} ) \\
                                    & \in O(  2 g_k \Psi d^{1/2} T^{2/3} \frac{ \sqrt{ \log(T) }   }{ \sqrt{  \log(T)^{1/3} } }  W^{2/3} ) \\
                                    & \in O( 2 g_k \Psi d^{1/2} T^{2/3}   \log(T)^{1/2 - 1/6} W^{2/3}  ) \\
                                    & \in O( 2 g_k \Psi d^{1/2} T^{2/3} \log(T)^{1/3} W^{2/3} ) 
\end{align}

We will now study the penultimate term in the regret expression. If we choose $\delta_t = 1/t^2$, we can set $f(t) = t^{2/3} \log(t)^{1/3}$ and $\eta_k = W_k^{2/3}$, we have:

\begin{align}
    \sqrt{f(T) \eta_k} \times (\dots) & = W_k^{1/3} T^{1/3} \log(T)^{1/6} 2 g_k W_k  \Psi d^{1/2} \left( \sqrt{ d \log( T ) + 2 \log(1/\delta_T)  } + \Psi \right) \sqrt{  2  \log(T) } \\
    & = 2 g_k W_k^{3/2} \Psi d^{1/2} T^{1/3} \log(T)^{1/6} \left( \sqrt{ d \log( T ) + 2 \log(1/\delta_T)  } + \Psi \right) \\
                                    & \in O( 2 g_k W_k^{3/2} \Psi d T^{1/3} \log(T)^{2/3}  ) 
\end{align}

The conclusion is that the last term dominates the penultimate term over time. Therefore, we can conclude:

\begin{corollary}
Consider a hard game and $\delta_t = 1/t^2$, and the same assumptions as in theorem \ref{thm:contextual_result}. Then, there exist constants $C_3$ and $C_4$ such that the expected regret of \RandCBPside{} on this game can be upper bounded independently of the choice of $p^\star$ as:
$$ \mathbb{E}[ R_T ]\leq C_3 N + C_4  \sqrt{d} \log(T)^{1/3} T^{2/3} $$
\end{corollary}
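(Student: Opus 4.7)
The plan is to adapt the non-contextual regret analysis of \RandCBP{} (Appendix \ref{appendix:regretRandCBP}) to the contextual quantities $\delta_k(x_t)$, confidence bounds $c'_{i,j}(x)$, and the pseudo-count based exploration set $\mathcal{R}(x_t)$ of Definition \ref{def:rarely_sampled_actions_contextual}, and then specialize the resulting bound to easy versus hard games. Introducing the success event $\mathcal{E}_t = \{|\hat{\delta}_{i,j}(x_t) - \delta_{i,j}(x_t)| \leq c_{i,j}(x_t)\}$, I split
\[
\mathbb{E}[R_T] = \sum_{t=1}^T \mathbb{E}[\mathds{1}_{\mathcal{E}_t^c}\delta_{I_t}(x_t)] + \sum_{t=1}^T \mathbb{E}[\mathds{1}_{\mathcal{E}_t}\delta_{I_t}(x_t)].
\]
The failure term is handled exactly as in Appendix \ref{subsection:boundingB}: the randomization places mass at least $1/T$ on the top bin $\rho_K = B = \sqrt{d\log t + 2\log(1/t^2)}$, so it suffices to analyse the deterministic bound at radius $B$, and the linear self-normalized concentration of \citet{abbasi2011improved} applied to the per-action ridge estimator $\hat{\theta}_a$ yields a per-round failure probability of order $1/t^2$, summing to the $C_1 N$ (resp.\ $C_3 N$) additive constants.

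\textbf{Contextual success-term decomposition.} I then mirror the non-contextual breakdown of the success term into four pieces $B_1,\dots,B_4$ indexed by $\mathcal{D}_t / \mathcal{D}_t^c$ and by $\{k \in \mathcal{P}(t) \cup N^+(t)\}$ or its complement. The first ingredient is a contextual analogue of Lemma \ref{lemma1}(1): on $\mathcal{E}_t \cap \mathcal{D}_t$ with $k \in \mathcal{P}(t) \cup N^+(t)$, a path argument along $\mathcal{N}(t)$ combined with Equation \ref{eq:contextual_CI} bounds $\delta_k(x_t)$ by $O\!\bigl(g_k \Psi (\sqrt{d\log t}+\Psi)/\sqrt{f(t)}\cdot \max_j W_j/\sqrt{\eta_j}\bigr)$, where the pseudo-count inequality $1/\|x_t\|^2_{G_{k(t),t}^{-1}} \geq \eta_{k(t)} f(t)$ from Definition \ref{def:rarely_sampled_actions_contextual} plays the role formerly held by $n_{k(t)}(t) \geq \eta_{k(t)} f(t)$. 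For $B_2$, $B_3$, $B_4$ I would apply Cauchy--Schwartz to separate the indicator sum (bounded by $T$ or $\eta_k f(T)$ depending on the event) from $\sum_s \delta_k(x_{t_k(s)})^2$, and bound the latter via Lemma 10 of \citet{abbasi2011improved}, which controls $\sum_s \|x_{t_k(s)}\|^2_{G_{k,t_k(s)}^{-1}}$ by $d\log(1+n_k(T)E^2)$; this is where the assumption $\|x_t\|_2 \leq E$ enters.

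\textbf{Easy vs.\ hard regimes.} On easy games every neighbor pair is locally observable, so the purely informative set $\mathcal{V}\setminus N^+$ is empty, the terms $B_1, B_2, B_4$ vanish, and only $B_3$ survives; its Cauchy--Schwartz factor $\sqrt{T}$ combined with the $O(d\sqrt{\log T})$ coming from the squared-gap bound delivers the advertised $C_1 N + C_2 N d \sqrt{T}\log T$. On hard games $\mathcal{V}\setminus N^+ \neq \emptyset$, so the $B_1$ term contributes $T/\sqrt{f(T)}$ while $B_2$ and $B_4$ contribute $\sqrt{\eta_k f(T)\cdot d\log T}$; choosing $f(t) = \alpha^{1/3}t^{2/3}\log^{1/3}t$ and $\eta_k = W_k^{2/3}$ equalizes these contributions at the $T^{2/3}\log^{1/3}T$ rate, giving $C_3 N + C_4\sqrt{d}\log^{1/3}(T)T^{2/3}$.

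\textbf{Main obstacle.} The central technical difficulty is replacing the action count $n_k(t)$: for continuous context spaces no single context is revisited, so a count-based ``underplayed'' set is meaningless. The remedy is the Gram-norm formulation of Definition \ref{def:rarely_sampled_actions_contextual}, which behaves like a count in aggregate via \citet{abbasi2011improved}, Lemma 10; tying this pseudo-count into both the path argument (for $B_1$) and the Cauchy--Schwartz argument (for $B_2, B_3, B_4$) without losing exponents is the delicate step. A secondary point is that the path argument must be re-derived with the randomized bound $c'_{i,j}(x)$ in place of $c_{i,j}(x)$, using the almost-sure domination $c'_{i,j}(x) \leq c_{i,j}(x)$, so that randomization affects only the failure-probability step handled in the first paragraph while every success-case inequality carries over verbatim.
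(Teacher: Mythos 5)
Your proposal follows essentially the same route as the paper: the same failure/success split, the same four-way decomposition $B_1,\dots,B_4$ over $\mathcal D_t$ and membership in $\mathcal P(t)\cup N^+(t)$, the contextual path lemma driven by the pseudo-count inequality, Cauchy--Schwartz plus Lemma 10 of \citet{abbasi2011improved} for the squared-gap sums, and the same tuning $f(t)=\alpha^{1/3}t^{2/3}\log^{1/3}t$, $\eta_k=W_k^{2/3}$. The only small imprecision is that after this tuning the $B_1$-type term $T/\sqrt{f(T)}$ strictly dominates (giving $T^{2/3}\log^{1/3}T$) rather than being equalized with the $B_2/B_4$ contributions, which come out at order $T^{1/3}\log^{2/3}T$; this does not affect the final rate.
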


\section{Additional Experiments}
\label{appendix:additional_exps}
\subsection{Implementation details and hyper-parameters}

Contextual and non-contextual experiments are run on machines with $48$ CPUs which justifies why we consider $96$ runs rather than $100$ ($48\times2 = 96$ is the optimal allocation).

\paragraph{Non-contextual baselines}

The stochastic strategies \BPMLeast{}, \TSPM{} and\TSPMgaussian{} are initialized with priors $p^\star = [1/M, \dots, 1/M]$ as this is the common choice reported in their respective original papers \citep{vanchinathan2014efficient,tsuchiya2020analysis}.
The number of samples for \BPMLeast{}, \TSPM{} and \TSPMgaussian{} is set to $100$. We found that higher values increase drastically the computational complexity of the approaches.
The strategies \TSPM{} and \TSPMgaussian{} are set with $\lambda=0.01$ as reported to be the most competitive value in the original paper \citep{tsuchiya2020analysis}. The deterministic strategy \PMDMED{} is initialized with $c=1$ following the value presented in the original paper \citep{komiyama2015regret}.

To compare \CBP{} and \RandCBP{} fairly, both strategies are set with $\alpha = 1.01$. Sampling in \RandCBP{} is performed according to the procedure described in Section \ref{subsec:randcbp} over $K=5$ bins, with probability $\varepsilon=10^{-7}$ on the tail and standard deviation $\sigma=1$. Although this choice is not necessarily the most optimal (see Figures \ref{fig:bench_without_side_info} and \ref{fig:bench_with_linear_side_info}), we find it is the most robust across the different settings considered.

\paragraph{Contextual baselines}

We run \PGTS{} and \PGIDS{} over a horizon $T=7.5$k because both strategies scale in cubic time with the number of verifications. For a horizon $20$k, on a time budget of $5$ hours and a 48-cores machine, less than $10$ realizations succeed out of the $96$ considered.
Note that \PGTS{} and \PGIDS{} assume a logistic setting while in our experiments we consider a linear setting.
The logistic regression still performs well because we consider binary outcome games.
For both strategies, we consider $10$ Gibbs samples: higher values increase the computational complexity of the approaches.
\STAP{} and \CESA{} are hyper-parameter free. 

We compare \CBPsidestar{} to its counterpart \RandCBPsidestar{} fairly by setting both with $\alpha = 1.01$. 
Sampling in \RandCBPsidestar{} is performed according to the randomization procedure described in Section~\ref{subsec:randcbp} with $K=5$ bins, a probability $\varepsilon=10^{-7}$ on the tail, and standard deviation $\sigma=1$. Although this choice is not always the most optimal (see Figures \ref{fig:bench_without_side_info} and \ref{fig:bench_with_linear_side_info}), we find it is the most robust across the various settings considered.

All contextual approaches use a regularization $\lambda = 0.05$.

\subsection{Detailed results}

Table \ref{tab:numericdetail_noncontextual_AT} and \ref{tab:numericdetail_noncontextual_LE} provide numeric details to support the non-contextual experiments in the main paper.
Table \ref{tab:numericdetail_contextual} provides numeric details for the contextual experiment presented in the main paper.

\begin{table}
\centering
\resizebox{\textwidth}{!}{%
\begin{tabular}{|c|cccccccc|}
\hline
Game &
  \multicolumn{8}{c|}{Apple Tasting (AT)} \\ \hline
Case &
  \multicolumn{4}{c|}{imbalanced} &
  \multicolumn{4}{c|}{balanced} \\ \hline
Metric &
  \multicolumn{1}{c|}{mean} &
  \multicolumn{1}{c|}{std} &
  \multicolumn{1}{c|}{pvalue} &
  \multicolumn{1}{c|}{win count} &
  \multicolumn{1}{c|}{mean} &
  \multicolumn{1}{c|}{std} &
  \multicolumn{1}{c|}{pvalue} &
  win count \\ \hline
RandCBP &
  \multicolumn{1}{c|}{\cellcolor[HTML]{9AFF99}4.689} &
  \multicolumn{1}{c|}{4.07} &
  \multicolumn{1}{c|}{1.0} &
  \multicolumn{1}{c|}{\cellcolor[HTML]{9AFF99}82} &
  \multicolumn{1}{c|}{\cellcolor[HTML]{9AFF99}41.417} &
  \multicolumn{1}{c|}{78.311} &
  \multicolumn{1}{c|}{1.0} &
  \cellcolor[HTML]{FFFC9E}37 \\ \hline
CBP &
  \multicolumn{1}{c|}{\cellcolor[HTML]{FFFC9E}8.672} &
  \multicolumn{1}{c|}{8.532} &
  \multicolumn{1}{c|}{0.0} &
  \multicolumn{1}{c|}{\cellcolor[HTML]{FFFC9E}47} &
  \multicolumn{1}{c|}{\cellcolor[HTML]{FFCE93}72.748} &
  \multicolumn{1}{c|}{138.279} &
  \multicolumn{1}{c|}{0.055} &
  \cellcolor[HTML]{9AFF99}48 \\ \hline
PM-DMED &
  \multicolumn{1}{c|}{13.915} &
  \multicolumn{1}{c|}{13.155} &
  \multicolumn{1}{c|}{0.0} &
  \multicolumn{1}{c|}{2} &
  \multicolumn{1}{c|}{113.5} &
  \multicolumn{1}{c|}{138.047} &
  \multicolumn{1}{c|}{0.0} &
  3 \\ \hline
TSPM &
  \multicolumn{1}{c|}{\cellcolor[HTML]{FFCE93}9.359} &
  \multicolumn{1}{c|}{9.007} &
  \multicolumn{1}{c|}{0.0} &
  \multicolumn{1}{c|}{\cellcolor[HTML]{FFCE93}25} &
  \multicolumn{1}{c|}{\cellcolor[HTML]{FFFC9E}43.117} &
  \multicolumn{1}{c|}{45.53} &
  \multicolumn{1}{c|}{0.854} &
  \cellcolor[HTML]{FFCE93}13 \\ \hline
TSPM Gaussian &
  \multicolumn{1}{c|}{19.417} &
  \multicolumn{1}{c|}{15.925} &
  \multicolumn{1}{c|}{0.0} &
  \multicolumn{1}{c|}{7} &
  \multicolumn{1}{c|}{67.658} &
  \multicolumn{1}{c|}{56.203} &
  \multicolumn{1}{c|}{0.008} &
  3 \\ \hline
BPM-Least &
  \multicolumn{1}{c|}{15.125} &
  \multicolumn{1}{c|}{8.063} &
  \multicolumn{1}{c|}{0.0} &
  \multicolumn{1}{c|}{0} &
  \multicolumn{1}{c|}{165.04} &
  \multicolumn{1}{c|}{111.969} &
  \multicolumn{1}{c|}{0.0} &
  3 \\ \hline
\end{tabular}
}
\caption{Supplement for the non-contextual experiment presented in the main paper (see Figure \ref{fig:without_side_info}). 
Imbalanced instances: $p \sim \mathcal U_{[0, 0.2] \cup [0.8, 1]}$. Balanced instances: $p \sim \mathcal U_{[0.4,0.6]}$. Mean: average regret at the last step ($T = 20$k). Std: standard deviation at the last step. P-value: Welch's t-test on the distribution of regrets at the last step, with \RandCBP{} as reference (p-value $>0.05$ means no statistical difference). Win count: number of times a given strategy achieved the lowest final regret (ties included). 
Color \raisebox{0.75ex}{ \tikz[baseline=(char.base)]{\node[shape=circle, fill=green, inner sep=3pt] (char) {};} } indicates the best; \raisebox{0.75ex}{ \tikz[baseline=(char.base)]{\node[shape=circle, fill=yellow, inner sep=3pt] (char) {};} } indicates second best; \raisebox{0.75ex}{ \tikz[baseline=(char.base)]{\node[shape=circle, fill=orange, inner sep=3pt] (char) {};} } is the third best. }
\label{tab:numericdetail_noncontextual_AT}
\end{table}

\begin{table}
\centering
\resizebox{\textwidth}{!}{%
\begin{tabular}{|c|cccccccc|}
\hline
Game & \multicolumn{8}{c|}{Label Efficient (LE)}                       \\ \hline
Case & \multicolumn{4}{c|}{imbalanced} & \multicolumn{4}{c|}{balanced} \\ \hline
Metric &
  \multicolumn{1}{c|}{mean} &
  \multicolumn{1}{c|}{std} &
  \multicolumn{1}{c|}{pvalue} &
  \multicolumn{1}{c|}{win count} &
  \multicolumn{1}{c|}{mean} &
  \multicolumn{1}{c|}{std} &
  \multicolumn{1}{c|}{pvalue} &
  win count \\ \hline
RandCBP &
  \multicolumn{1}{c|}{\cellcolor[HTML]{9AFF99}11.887} &
  \multicolumn{1}{c|}{15.004} &
  \multicolumn{1}{c|}{1.0} &
  \multicolumn{1}{c|}{\cellcolor[HTML]{9AFF99}81.0} &
  \multicolumn{1}{c|}{\cellcolor[HTML]{9AFF99}321.023} &
  \multicolumn{1}{c|}{353.111} &
  \multicolumn{1}{c|}{1.0} &
  \cellcolor[HTML]{9AFF99}60.0 \\ \hline
CBP &
  \multicolumn{1}{c|}{\cellcolor[HTML]{FFFC9E}16.47} &
  \multicolumn{1}{c|}{8.173} &
  \multicolumn{1}{c|}{0.009} &
  \multicolumn{1}{c|}{\cellcolor[HTML]{FFFC9E}15.0} &
  \multicolumn{1}{c|}{\cellcolor[HTML]{FFFC9E}726.877} &
  \multicolumn{1}{c|}{643.233} &
  \multicolumn{1}{c|}{0.0} &
  \cellcolor[HTML]{FFFC9E}18.0 \\ \hline
PM-DMED &
  \multicolumn{1}{c|}{\cellcolor[HTML]{FFCE93}1489.217} &
  \multicolumn{1}{c|}{2887.675} &
  \multicolumn{1}{c|}{0.0} &
  \multicolumn{1}{c|}{\cellcolor[HTML]{FFCE93}0.0} &
  \multicolumn{1}{c|}{\cellcolor[HTML]{FFCE93}1253.432} &
  \multicolumn{1}{c|}{1048.542} &
  \multicolumn{1}{c|}{0.0} &
  \cellcolor[HTML]{FFFC9E}18.0 \\ \hline
\end{tabular}
}
\caption{Supplement for the non-contextual experiment presented in the main paper (see Figure \ref{fig:without_side_info}). 
Imbalanced instances: $p \sim \mathcal U_{[0, 0.2] \cup [0.8, 1]}$. Balanced instances: $p \sim \mathcal U_{[0.4,0.6]}$. Mean: average regret at the last step ($T = 20$k). Std: standard deviation at the last step. P-value: Welch's t-test on the distribution of regrets at the last step, with \RandCBP{} as reference (p-value $>0.05$ means no statistical difference). Win count: number of times a given strategy achieved the lowest final regret (ties included). 
Color \raisebox{0.75ex}{ \tikz[baseline=(char.base)]{\node[shape=circle, fill=green, inner sep=3pt] (char) {};} } indicates the best; \raisebox{0.75ex}{ \tikz[baseline=(char.base)]{\node[shape=circle, fill=yellow, inner sep=3pt] (char) {};} } indicates second best; \raisebox{0.75ex}{ \tikz[baseline=(char.base)]{\node[shape=circle, fill=orange, inner sep=3pt] (char) {};} } is the third best. }
\label{tab:numericdetail_noncontextual_LE}
\end{table}

\begin{table}
\centering
\resizebox{\textwidth}{!}{%
\begin{tabular}{|c|cccc|cccc|}
\hline
Game &
  \multicolumn{4}{c|}{Apple Tasting (AT)} &
  \multicolumn{4}{c|}{Label Efficient (LE)} \\ \hline
Metric &
  \multicolumn{1}{c|}{mean} &
  \multicolumn{1}{c|}{std} &
  \multicolumn{1}{c|}{pvalue} &
  win count &
  \multicolumn{1}{c|}{mean} &
  \multicolumn{1}{c|}{std} &
  \multicolumn{1}{c|}{pvalue} &
  win count \\ \hline
RandCBPside &
  \multicolumn{1}{c|}{\cellcolor[HTML]{9AFF99}1016.312} &
  \multicolumn{1}{c|}{82.151} &
  \multicolumn{1}{c|}{1.0} &
  \cellcolor[HTML]{9AFF99}96 &
  \multicolumn{1}{c|}{\cellcolor[HTML]{9AFF99}2026.604} &
  \multicolumn{1}{c|}{70.161} &
  \multicolumn{1}{c|}{1.0} &
  \cellcolor[HTML]{9AFF99}96.0 \\ \hline
CBPside &
  \multicolumn{1}{c|}{\cellcolor[HTML]{FFCE93}6109.521} &
  \multicolumn{1}{c|}{86.325} &
  \multicolumn{1}{c|}{0.0} &
  \cellcolor[HTML]{FFCE93}0 &
  \multicolumn{1}{c|}{\cellcolor[HTML]{FFCE93}11071.333} &
  \multicolumn{1}{c|}{86.779} &
  \multicolumn{1}{c|}{0.0} &
  \cellcolor[HTML]{FFCE93}0 \\ \hline
\cellcolor[HTML]{C0C0C0}PGIDSratio &
  \multicolumn{1}{c|}{\cellcolor[HTML]{C0C0C0}129.5} &
  \multicolumn{1}{c|}{\cellcolor[HTML]{C0C0C0}12.758} &
  \multicolumn{1}{c|}{\cellcolor[HTML]{C0C0C0}0.0} &
  \cellcolor[HTML]{C0C0C0}0 &
  \multicolumn{4}{c|}{} \\ \hline
\cellcolor[HTML]{C0C0C0}PGTS &
  \multicolumn{1}{c|}{\cellcolor[HTML]{C0C0C0}179.156} &
  \multicolumn{1}{c|}{\cellcolor[HTML]{C0C0C0}15.318} &
  \multicolumn{1}{c|}{\cellcolor[HTML]{C0C0C0}0.0} &
  \cellcolor[HTML]{C0C0C0}0 &
  \multicolumn{4}{c|}{} \\ \hline
STAP &
  \multicolumn{1}{c|}{\cellcolor[HTML]{FFFC9E}1565.917} &
  \multicolumn{1}{c|}{127.488} &
  \multicolumn{1}{c|}{0.0} &
  \cellcolor[HTML]{FFFC9E}0 &
  \multicolumn{4}{c|}{} \\ \hline
CESA &
  \multicolumn{4}{c|}{} &
  \multicolumn{1}{c|}{\cellcolor[HTML]{FFFC9E}5792.052} &
  \multicolumn{1}{c|}{1386.179} &
  \multicolumn{1}{c|}{0.0} &
  \cellcolor[HTML]{FFFC9E}0 \\ \hline
\end{tabular}%
}
\caption{Numeric detail of the contextual experiment presented in the main paper (see Figure \ref{fig:with_linear_info}).  
Mean: average regret at the last step ($T = 20$k). Std: standard deviation at the last step. P-value: Welch's t-test on the distribution of regrets at the last step with RandCBPside as reference (p-value$>0.05$ means no statistical difference). Win count: number of times a given strategy achieved the lowest final regret (ties included). 
Color \raisebox{0.75ex}{ \tikz[baseline=(char.base)]{\node[shape=circle, fill=green, inner sep=3pt] (char) {};} } indicates the best; \raisebox{0.75ex}{ \tikz[baseline=(char.base)]{\node[shape=circle, fill=yellow, inner sep=3pt] (char) {};} } indicates second best; \raisebox{0.75ex}{ \tikz[baseline=(char.base)]{\node[shape=circle, fill=orange, inner sep=3pt] (char) {};} } is the third best. Color \raisebox{0.75ex}{ \tikz[baseline=(char.base)]{\node[shape=circle, fill=gray, inner sep=3pt] (char) {};} } indicates an evaluation on the truncated horizon $T=7.5$k. }
\label{tab:numericdetail_contextual}
\end{table}

\subsection{Sensitivity to hyper-parameters}

The goal of this experiment is to illustrate the sensitivity to hyper-parameters of \RandCBP{} and \RandCBPsidestar{}. 
We conducted the evaluation for $\varepsilon = 10^{-7}$. Higher values of $\epsilon$ imply a higher probability of sampling on the value $B$ in the discretized interval $[A,B]$.
We consider the standard deviation values $\sigma \in \{ 1/2, 1, 2, 10\}$.
We consider bin values of $K$ in $\{ 5, 10, 20 \}$. 
We report averaged regret and upper $99\%$ confidence interval, measured over a $20$k horizon and $96$ random runs. 

\paragraph{Results (non-contextual case):} The experimental setting is described in the main paper. We find the hyper-parameter $\sigma$ to be the most influential in the performance of \RandCBP{}. Too small values of $\sigma$ result in a more exploitation, and expose the strategy to a risk of catastrophic failures. 

\begin{figure}[H]
     \centering
     \begin{subfigure}[b]{0.23\textwidth}
         \centering
         \includegraphics[width=\textwidth]{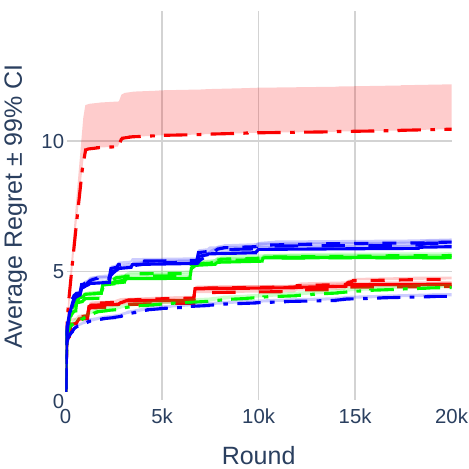}
         \caption{Imbalanced AT}
         \label{fig:bench_easyFF}
     \end{subfigure}
     \hfill
     \begin{subfigure}[b]{0.23\textwidth}
         \centering
         \includegraphics[width=\textwidth]{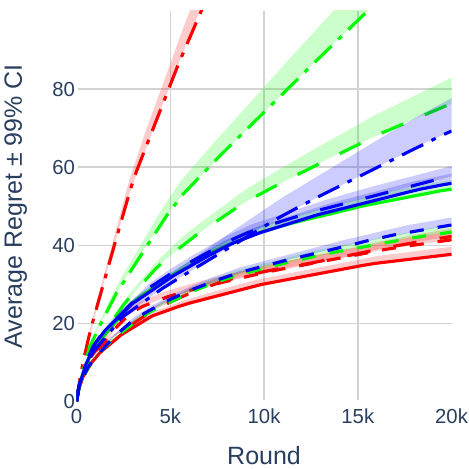}
         \caption{Balanced AT}
         \label{fig:bench_hardAT}
     \end{subfigure}
     \hfill
     \begin{subfigure}[b]{0.23\textwidth}
         \centering
         \includegraphics[width=\textwidth]{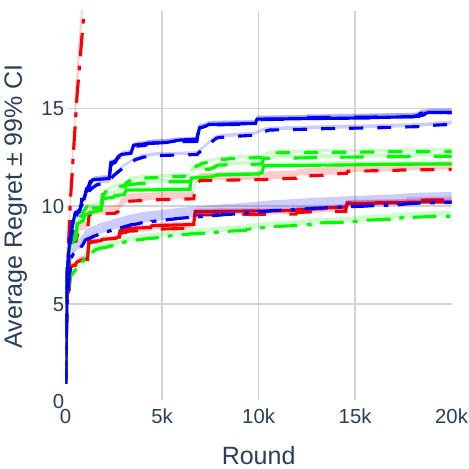}
         \caption{Imbalanced LE}
         \label{fig:bench_easyAT}
     \end{subfigure}
     \hfill
     \begin{subfigure}[b]{0.23\textwidth}
         \centering
         \includegraphics[width=\textwidth]{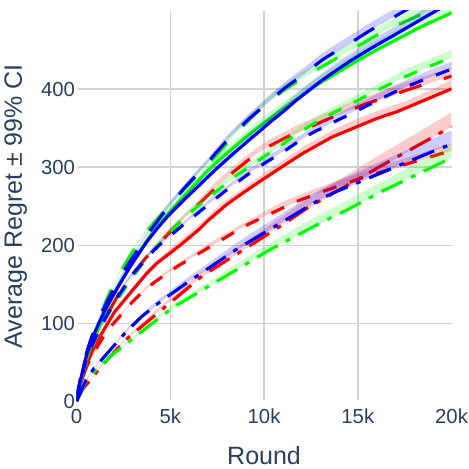}
         \caption{Balanced LE}
         \label{fig:bench_hardLE}
     \end{subfigure}
        \hfill
     \begin{subfigure}[b]{0.33\textwidth}
         \centering
         \includegraphics[width=\textwidth]{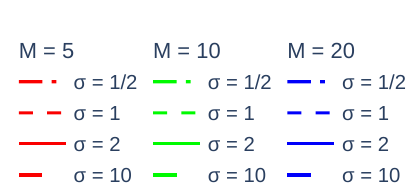}
         \caption{Legend}
         \label{fig:bench_easyLE}
     \end{subfigure}
    \caption{Benchmark of \RandCBP{} on Apple Tasting (AT) and Label Efficient (LE) games, non-contextual case.}
    \label{fig:bench_without_side_info}
\end{figure}

\paragraph{Results (contextual case):}

The experimental setting is described in the main paper. 
Figure \ref{fig:bench_with_linear_side_info} reports multiple hyper-parameter combinations over the Apple Tasting (AT) and Label Efficient (LE) games on linear contexts. 
Similarly to the non-contextual case, the hyper-parameter $\sigma$ greatly influences the performance of in \RandCBPsidestar{}, as suggested in Figures \ref{fig:bench_balanced_linear_AT} and \ref{fig:bench_balanced_linear_LE}.

\begin{figure}[H]
    \centering
    \begin{subfigure}[b]{0.23\textwidth}
        \centering
        \includegraphics[width=\textwidth]{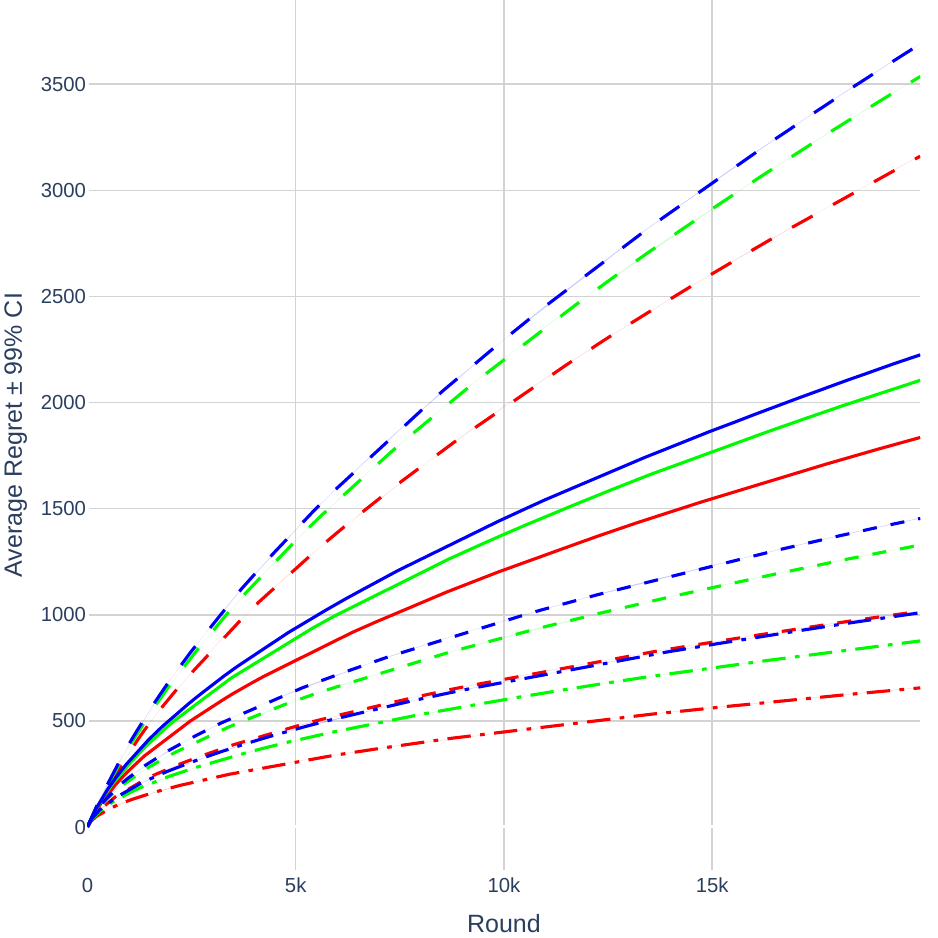}
        \caption{Balanced AT}
        \label{fig:bench_balanced_linear_AT}
    \end{subfigure}
    \hspace*{0.02\textwidth} 
    \begin{subfigure}[b]{0.23\textwidth}
        \centering
        \includegraphics[width=\textwidth]{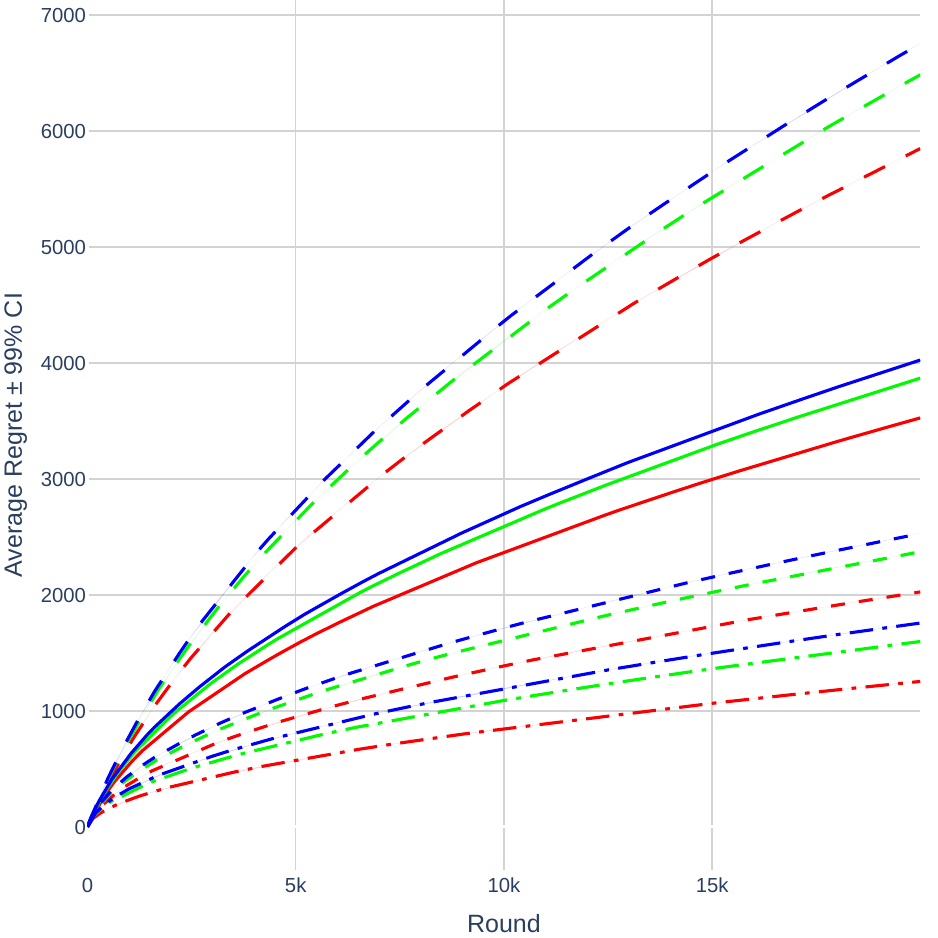}
        \caption{Balanced LE}
        \label{fig:bench_balanced_linear_LE}
    \end{subfigure}
    \hspace*{0.02\textwidth} 
    \begin{subfigure}[b]{0.23\textwidth}
        \centering
        \includegraphics[width=\textwidth]{figures/legend_benchmark_nocontext.pdf}
        \caption{Legend}
        \label{fig:legend_benchmark_nocontext}
    \end{subfigure}
    \caption{Benchmark of \RandCBPside{} on Apple Tasting (AT) and Label Efficient (LE) games, contextual case.}
    \label{fig:bench_with_linear_side_info}
\end{figure}

\subsection{Additional results on the case-study}
\label{appendix:case_study}

In this Section, we report additional results for the use-case presented in Section \ref{sec:case_study}. 
The distribution of true classes in the stream of observations is represented by the vector $B \in [0,1]^N$ and the black-box classifier is represented by its confusion matrix $\textbf{C} \in [0,1]^{N\times N}$. 
In each run, $B$ and $\textbf{C}$ are generated randomly such that the global error rate remains below $10\%$ (the black-box would probably not be deployed otherwise).
Each instance of \RandCBP{} and \CBP{} in the approaches \ccbp{} and \crandcbp{} is parameterized similarly as in previous experiments. 

\paragraph{Maximum verification number of verifications}
Since the outcomes are binary, the Wald's confidence interval can be used to determine the maximum number of verifications needed to obtain estimates of  $p_c$ with a specified level of confidence.
We set probability that the confidence interval fails is set to $\zeta = 0.01$ and the acceptable margin length of the Wald interval to $E = \tau / 10$.
Assuming that the classifier is deployed with a global error rate of at most $10\%$, a reasonable prior belief per class (noted $\bar p_c$) is that the error rate is distributed uniformly across classes $\bar p_c=10/C\%$. 
For a detection threshold $\tau$, the maximum verification budget for a class $c$ is 
$n_{\tau} = \frac{ z(1-\zeta/2)^2   \bar p (1-\bar p) }{ E^2 } $, where $z(\cdot)$ is the quantile of the standard normal distribution.
In practice, the value $C \times n_{\tau}$ corresponds to the maximum number of verifications  one is willing to use to identify with confidence which of the $C$ predicted classes errors exceed the threshold $\tau$. 
The goal is to obtain a strategy that performs the task while consuming less verifications than this maximum amount.

\paragraph{Results}
In the main paper, we reported results for cases:
i) the true classes are balanced and the black-box yields uniform mispredictions (case 1),
ii) the true classes are imbalanced and the black-box yields non-uniform mispredictions (case 2).

Results for the two cases are reported in Tables \ref{tab:case1} and  \ref{tab:case2}. 
In all the considered cases, the mean and median F1-score performance of \crandcbp{}, \ccbp{} and \explorecommit{} is very comparable, as the mean F1-score values overlap when considering the standard-errors. 
Although all strategies achieve equivalent F1-score performance, \crandcbp{} consumes on average less verifications than \explorecommit{}. 
The variance on the number of verifications is the highest when the stream of observations is balanced and the mispredictions are uniforms (case 1, Table \ref{tab:case1}).

\begin{table}[H]
\centering
\resizebox{\textwidth}{!}{%
\begin{tabular}{|c|c|c|c|c|c|c|c|}
\hline
Threshold &
  Strategy &
  F1-score (mean) &
  F1-score (median) &
  F1-score (std) &
  Nb. verifs (mean) &
  Nb. verifs (median) &
  Nb. verifs (std) \\ \hline
 &
  \cellcolor[HTML]{C0C0C0}Full-exploration &
  \cellcolor[HTML]{C0C0C0}0.962 &
  \cellcolor[HTML]{C0C0C0}1.0 &
  \cellcolor[HTML]{C0C0C0}0.15 &
  \cellcolor[HTML]{C0C0C0}105120.0 &
  \cellcolor[HTML]{C0C0C0}105120.0 &
  \cellcolor[HTML]{C0C0C0}0.0 \\ \cline{2-8} 
 &
  N-CBP &
  0.962 &
  1.0 &
  0.15 &
  105110.0 &
  105110.0 &
  0.0 \\ \cline{2-8} 
\multirow{-3}{*}{0.025} &
  \cellcolor[HTML]{C0C0C0}C-RandCBP &
  \cellcolor[HTML]{C0C0C0}0.955 &
  \cellcolor[HTML]{C0C0C0}1.0 &
  \cellcolor[HTML]{C0C0C0}0.163 &
  \cellcolor[HTML]{C0C0C0}83818.0 &
  \cellcolor[HTML]{C0C0C0}104846.0 &
  \cellcolor[HTML]{C0C0C0}32297.0 \\ \hline
 &
  Full-exploration &
  0.927 &
  1.0 &
  0.195 &
  26300.0 &
  26300.0 &
  0.0 \\ \cline{2-8} 
 &
  \cellcolor[HTML]{C0C0C0}C-CBP &
  \cellcolor[HTML]{C0C0C0}0.927 &
  \cellcolor[HTML]{C0C0C0}1.0 &
  \cellcolor[HTML]{C0C0C0}0.195 &
  \cellcolor[HTML]{C0C0C0}26290.0 &
  \cellcolor[HTML]{C0C0C0}26290.0 &
  \cellcolor[HTML]{C0C0C0}0.0 \\ \cline{2-8} 
\multirow{-3}{*}{0.05} &
  C-RandCBP &
  0.915 &
  1.0 &
  0.208 &
  15976.0 &
  15091.0 &
  9071.0 \\ \hline
 &
  \cellcolor[HTML]{C0C0C0}Full-exploration &
  \cellcolor[HTML]{C0C0C0}0.907 &
  \cellcolor[HTML]{C0C0C0}1.0 &
  \cellcolor[HTML]{C0C0C0}0.219 &
  \cellcolor[HTML]{C0C0C0}6590.0 &
  \cellcolor[HTML]{C0C0C0}6590.0 &
  \cellcolor[HTML]{C0C0C0}0.0 \\ \cline{2-8} 
 &
  C-CBP &
  0.908 &
  1.0 &
  0.216 &
  6491.0 &
  6580.0 &
  251.0 \\ \cline{2-8} 
\multirow{-3}{*}{0.1} &
  \cellcolor[HTML]{C0C0C0}C-RandCBP &
  \cellcolor[HTML]{C0C0C0}0.91 &
  \cellcolor[HTML]{C0C0C0}1.0 &
  \cellcolor[HTML]{C0C0C0}0.211 &
  \cellcolor[HTML]{C0C0C0}2000.0 &
  \cellcolor[HTML]{C0C0C0}1666.0 &
  \cellcolor[HTML]{C0C0C0}1094.0 \\ \hline
 &
  Full-exploration &
  1.0 &
  1.0 &
  0.0 &
  1670.0 &
  1670.0 &
  0.0 \\ \cline{2-8} 
 &
  \cellcolor[HTML]{C0C0C0}C-CBP &
  \cellcolor[HTML]{C0C0C0}1.0 &
  \cellcolor[HTML]{C0C0C0}1.0 &
  \cellcolor[HTML]{C0C0C0}0.0 &
  \cellcolor[HTML]{C0C0C0}1289.0 &
  \cellcolor[HTML]{C0C0C0}1326.0 &
  \cellcolor[HTML]{C0C0C0}273.0 \\ \cline{2-8} 
\multirow{-3}{*}{0.2} &
  C-RandCBP &
  1.0 &
  1.0 &
  0.0 &
  272.0 &
  255.0 &
  70.0 \\ \hline
\end{tabular}%
}
\caption{Case 1 - balanced stream and uniform mispredictions}
\label{tab:case1}
\end{table}

\begin{table}[H]
\centering
\resizebox{\textwidth}{!}{%
\begin{tabular}{|c|c|c|c|c|c|c|c|}
\hline
Threshold &
  Strategy &
  F1-score (mean) &
  F1-score (median) &
  F1-score (std) &
  Nb. verifs (mean) &
  Nb. verifs (median) &
  Nb. verifs (std) \\ \hline
 &
  \cellcolor[HTML]{C0C0C0}Full-exploration &
  \cellcolor[HTML]{C0C0C0}0.978 &
  \cellcolor[HTML]{C0C0C0}1.0 &
  \cellcolor[HTML]{C0C0C0}0.07 &
  \cellcolor[HTML]{C0C0C0}103703.0 &
  \cellcolor[HTML]{C0C0C0}105120.0 &
  \cellcolor[HTML]{C0C0C0}2853.0 \\ \cline{2-8} 
 &
  N-CBP &
  0.978 &
  1.0 &
  0.07 &
  103693.0 &
  105110.0 &
  2853.0 \\ \cline{2-8} 
\multirow{-3}{*}{0.025} &
  \cellcolor[HTML]{C0C0C0}C-RandCBP &
  \cellcolor[HTML]{C0C0C0}0.976 &
  \cellcolor[HTML]{C0C0C0}1.0 &
  \cellcolor[HTML]{C0C0C0}0.07 &
  \cellcolor[HTML]{C0C0C0}81741.0 &
  \cellcolor[HTML]{C0C0C0}81330.0 &
  \cellcolor[HTML]{C0C0C0}10921.0 \\ \hline
 &
  Full-exploration &
  0.976 &
  1.0 &
  0.054 &
  26231.0 &
  26300.0 &
  287.0 \\ \cline{2-8} 
 &
  \cellcolor[HTML]{C0C0C0}C-CBP &
  \cellcolor[HTML]{C0C0C0}0.975 &
  \cellcolor[HTML]{C0C0C0}1.0 &
  \cellcolor[HTML]{C0C0C0}0.054 &
  \cellcolor[HTML]{C0C0C0}26221.0 &
  \cellcolor[HTML]{C0C0C0}26290.0 &
  \cellcolor[HTML]{C0C0C0}287.0 \\ \cline{2-8} 
\multirow{-3}{*}{0.05} &
  C-RandCBP &
  0.965 &
  1.0 &
  0.063 &
  16901.0 &
  16673.0 &
  2580.0 \\ \hline
 &
  \cellcolor[HTML]{C0C0C0}Full-exploration &
  \cellcolor[HTML]{C0C0C0}0.953 &
  \cellcolor[HTML]{C0C0C0}1.0 &
  \cellcolor[HTML]{C0C0C0}0.073 &
  \cellcolor[HTML]{C0C0C0}6590.0 &
  \cellcolor[HTML]{C0C0C0}6590.0 &
  \cellcolor[HTML]{C0C0C0}0.0 \\ \cline{2-8} 
 &
  C-CBP &
  0.953 &
  1.0 &
  0.073 &
  6453.0 &
  6457.0 &
  111.0 \\ \cline{2-8} 
\multirow{-3}{*}{0.1} &
  \cellcolor[HTML]{C0C0C0}C-RandCBP &
  \cellcolor[HTML]{C0C0C0}0.959 &
  \cellcolor[HTML]{C0C0C0}1.0 &
  \cellcolor[HTML]{C0C0C0}0.073 &
  \cellcolor[HTML]{C0C0C0}2965.0 &
  \cellcolor[HTML]{C0C0C0}2961.0 &
  \cellcolor[HTML]{C0C0C0}506.0 \\ \hline
 &
  Full-exploration &
  0.927 &
  1.0 &
  0.156 &
  1670.0 &
  1670.0 &
  0.0 \\ \cline{2-8} 
 &
  \cellcolor[HTML]{C0C0C0}C-CBP &
  \cellcolor[HTML]{C0C0C0}0.927 &
  \cellcolor[HTML]{C0C0C0}1.0 &
  \cellcolor[HTML]{C0C0C0}0.156 &
  \cellcolor[HTML]{C0C0C0}1335.0 &
  \cellcolor[HTML]{C0C0C0}1338.0 &
  \cellcolor[HTML]{C0C0C0}90.0 \\ \cline{2-8} 
\multirow{-3}{*}{0.2} &
  C-RandCBP &
  0.923 &
  1.0 &
  0.163 &
  447.0 &
  436.0 &
  84.0 \\ \hline
\end{tabular}%
}
\caption{Case 2 - Imbalanced stream and non-uniform mispredictions}
\label{tab:case2}
\end{table}

\end{document}